\setlist[itemize]{leftmargin=1.5pc}
\theoremstyle{definition}
\newtheorem{assumption}{Assumption}
\newtheorem{remark}{Remark}
\newtheorem{definition}{Definition}
\theoremstyle{theorem}
\newtheorem{theorem}{Theorem}
\newtheorem{proposition}{Proposition}
\newtheorem{lemma}{Lemma}
\newtheorem{corollary}{Corollary}
\colorlet{sgreen}{black!65!green}
\newcommand{\sk}[1]{#1}
\newcommand{\skr}[1]{#1}
\newcommand{\brho}[1]{\bar{\rho}_{#1}}
\newcommand{\NN}[1]{\mathbf{n}_{#1}}
\newcommand{\NNI}[1]{\mathbf{n}_{#1}}
\renewcommand{\P}{\mathbb{P}}
\newcommand{\Log}{\log}
\newcommand{\Abound}{\varepsilon}
\newcommand{\Aconst}{C_{0}}
\newcommand{\thisHypName}{}
\newtheorem*{genericHyp}{\thisHypName}
\newcommand{\semiMetric}{{\rm dist}}
\newcommand{\semiDist}[2]{\semiMetric \!\left( #1 , #2 \right)}
\newcommand{\KLDiv}[2]{\mathcal{D}_{\text{kl}} \!\left(#1 | #2 \right)}
\newcommand{\meas}{\mu}
\newcommand{\F}{\mathcal{F}}
\newcommand{\W}{\mathcal{W}}
\newcommand{\X}{\mathcal{X}}
\newcommand{\Y}{\mathcal{Y}}
\newcommand{\Hyp}{\mathcal{H}}
\renewcommand{\H}{\Hyp}
\newcommand{\V}{d_\Hyp}
\newcommand{\vc}{\V}
\newcommand{\E}{\mathcal{E}}
\newcommand{\EE}{\mathbb{E}}
\newcommand{\expec}{\mathbb{E} }
\newcommand{\abs}[1]{\left|#1\right|}
\newcommand{\paren}[1]{\left(#1\right)}
\newcommand{\braces}[1]{\left\{#1\right\}}
\newcommand{\brackets}[1]{\left [ #1 \right ]}
\newcommand{\prob}{\mathbb{P}}
\newcommand{\Q}{Q}
\newcommand{\Qt}{{\cal D}}
\newcommand{\np}{n_P}
\DeclareSymbolFont{bbold}{U}{bbold}{m}{n}
\DeclareSymbolFontAlphabet{\mathbbold}{bbold}
\newcommand{\ind}[1]{{\mathbbold 1}\!\left\{#1\right\}}
\DeclareMathOperator*{\argmin}{argmin}
\newcommand{\nats}{\mathbb{N}}
\newcommand{\bE}{\mathbf{E}}
\newcommand{\compl}{\mathsf{c}}
\newcommand{\M}{\mathcal{M}}
\newcommand{\hstar}{h^{\!*}}
\newsavebox{\savepar}
\title {A No-Free-Lunch Theorem for MultiTask Learning}
\author{
 Steve Hanneke \\
   Toyota Technological Institute at Chicago\\
   \texttt{steve.hanneke@gmail.com}
   \And
 Samory Kpotufe \\
   Columbia University, Statistics\\
   \texttt{skk2175@columbia.edu}
}
\begin{document}
\maketitle

\begin{abstract}
{Multitask learning and related areas such as multi-source domain adaptation address modern settings where datasets from $N$ related distributions $\braces{P_t}$ are to be combined towards improving performance on any single such distribution $\Qt$. A perplexing fact remains in the evolving theory on the subject: 
while we would hope for performance bounds that account for the contribution from multiple tasks, 
the vast majority of analyses result in bounds that improve at best in the number $n$ of samples per task, but most often do not improve in $N$. As such, it might seem at first that the distributional settings or aggregation procedures considered in such analyses might be somehow unfavorable; however, as we show, the picture happens to be more nuanced, with interestingly hard regimes that might appear otherwise favorable. 

In particular, we consider a seemingly favorable classification scenario where all tasks $P_t$ share a common optimal classifier $\hstar$, and which can be shown to admit a broad range of regimes with improved \emph{oracle rates} in terms of $N$ and $n$. Some of our main results are as follows: 
\begin{itemize} 
\item We show that, even though such regimes admit minimax rates accounting for both $n$ and $N$, no adaptive algorithm exists; that is, without access to distributional information, \emph{no} algorithm can guarantee rates that improve with large $N$ for $n$ fixed. 
\item With a bit of additional information, namely, a ranking of tasks $\braces{P_t}$ according to their \emph{distance} to a target $\Qt$, a simple rank-based procedure can achieve near optimal aggregations of tasks' datasets, despite a search space exponential in $N$. Interestingly, the optimal aggregation might exclude certain tasks, even though they all share the same $\hstar$.
\end{itemize}}

\end{abstract}

\section{Introduction}\label{sec:introduction}
\emph{Multitask learning} and related areas such as \emph{multi-source domain adaptation} 
address a statistical setting where multiple datasets $Z_t \sim P_t, t = 1, 2, \ldots,$ are to be aggregated towards improving performance w.r.t. a single (or any of, in the case of multitask) such distributions $P_t$. This is motivated by applications in the sciences and engineering where data availability is an issue, e.g., medical analytics typically require aggregating data from loosely related subpopulations, while  identifying traffic patterns in a given city might benefit from pulling data from somewhat similar other cities.  

While these problems have received much recent theoretical attention, especially in classification, a perplexing reality emerges: the bulk of results appear to show little improvement from such aggregation over using a single data source. Namely, given $N$ datasets, each of size $n$, one would hope for convergence rates in terms of the aggregate data size $N\cdot n$, somehow adjusted w.r.t. \emph{discrepancies} between distributions $P_t$'s, but which clearly improve on rates in terms of just $n$ as would be obtained with a single dataset. 
However, such clear improvements on rates appear elusive, as typical bounds on excess risk w.r.t. a target $\Qt$ (i.e., one of the $P_t$'s) are of the form (see e.g., \cite{crammer2008learning, ben2008notion, ben2010theory}) 
\begin{align}
\E_{\Qt}(\hat h)\lesssim \paren{n N}^{-\alpha} + n^{-\alpha} + \text{disc}\!\paren{\braces{P_t}; \Qt}, \ \text{for some } \alpha \in [{1}/{2}, 1], \label{eq:slowrates}
\end{align}
where in some results, one of the last two terms is dropped. 
In other words, typical upper-bounds are either dominated by the rate $n^{-\alpha}$, or altogether might not go to 0 with sample size due to the \emph{discrepancy} terms $\text{disc}\!\paren{\braces{P_t}; \Qt}>0$, even while the excess risk  of a naive classifier $\tilde h$ trained on the target dataset would be $\E_\Qt(\tilde h) \propto n^{-\alpha} \to 0$. As such, it might seem at first that there is a gap in either algorithmic approaches, formalism and assumptions, or statistical analyses of the problem. However, as we argue here, no algorithm can guarantee a rate improving in aggregate sample size for $n$ fixed, even under seemingly generous assumptions on how sources $P_t$'s relate to a given target $\Qt$.  

In particular, we consider a seemingly favorable classification setting, where all data distributions $P_t$'s induce the same optimal classifier $\hstar$ over a hypothesis class $\H$. This situation is of independent interest, e.g., appearing recently under the name of \emph{invariant risk minimization} (see {\cite{arjovsky2019invariant}} where it is motivated through invariants under causality), but is motivated here by our aim to elucidate basic limits of the multitask problem. As a starting point to understanding the best achievable rates, we first establish minimax upper and lower bounds, up to log terms, for the setting. These oracle rates, as might be expected in such benign settings, do indeed 
improve with both $N$ and $n$, as allowed by the level of \emph{discrepancy} between distributions, appropriately formalized (Theorem \ref{thm:minimax}). We then turn to characterizing the extent to which such favorable rates might be achieved by \emph{reasonable} procedures, i.e., adaptive procedures with little to no prior distributional information. Many interesting messages arise, some of which we highlight below: 

\begin{itemize} 
\item \emph{No adaptive procedure exists} outside a threshold $\beta < 1$, where $\beta\in [0, 1]$ (a so-called Bernstein class condition) parametrizes the level of \emph{noise}\footnote{The term \emph{noise} is used here to indicate nondeterminism in the label $Y$ of a sample point $X$, and so is rather non-adversarial.} in label distribution. Namely, while oracle rates might decrease fast in $N$ and $n$, no procedure based on the aggregate samples alone can guarantee a rate better than $n^{-1/(2-\beta)}$ without further favorable restrictions on distributions (Theorem \ref{thm:nonadaptivity}). 

\item At low noise $\beta =1$ (e.g., so called Massart's noise) even the naive, yet common approach of \emph{pooling} all datasets, i.e., treating them as if identically distributed, is nearly minimax optimal, achieving rates improving in both $N$ and $n$. This of course would not hold if optimal classifiers $\hstar$'s differ considerably across $P_t$'s (Theorem \ref{thm:beta-1-union}). 

\item At any noise level, a ranking of sources $P_t$'s according to discrepancy to a target $\Qt$ is sufficient information for (partial) adaptivity. While a precise ranking is probably unlikely in practice, an approximate ranking might be available, as domain knowledge on how sources rank in fidelity w.r.t. to an ideal target data: e.g., in settings such as 
learning with slowly drifting distributions. 
Here we show that a simple rank-based procedure, using such ranking, efficiently achieves a near optimal aggregation rate, despite the exponential search space of over $2^N$ possible aggregations (Theorem \ref{thm:semi-adaptive}). 
\end{itemize} 

Interestingly, even assuming all $\hstar$'s are the same, the optimal aggregation of datasets can change with the choice of target $\Qt$ in $\braces{P_t}$ (see Theorem \ref{theo:asymmetry}), due to the inherent asymmetry in the information tasks might have on each other, e.g., some $P_t$ might yield data in $P_s$-dense regions but not the other way around. Hence, to capture a proper picture of the problem, we cannot employ a symmetric notion of discrepancy as is common in the literature on the subject. Instead, we proceed with a notion of \emph{transfer exponent}, which we recently introduced in \cite{hanneke2019value} for the setting of domain adaptation with $N=1$ source distribution, and which we show here to also successfully capture performance limits in the present setting with $N\gg 1$ sources (see definition in Section \ref{sec:transfer}). 

We note that in the case $N=1$, there is \emph{always} a minimax adaptive procedure (as shown in \cite{hanneke2019value}), while this is not the case here for $N\gg 1$. In hindsight, the reason is simple: considering $N=O(1)$, i.e., as a constant, there is no consequential difference between a rate in terms of $N\cdot n$ and one in terms of $n$. In other words, the case $N=O(1)$ does not yield adequate insight into multitask with $N\gg 1$, and therefore does not properly inform practice as to the best approaches to aggregating and benefiting from multiple datasets.

\subsection*{Background and Related Work}
 
\begin{wrapfigure}{R}{0.25\textwidth}
\vspace{-16pt}
\centering
\includegraphics[width=0.2\textwidth]{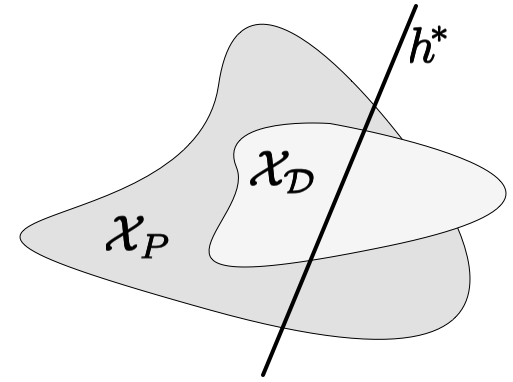}
\vspace{-12pt}
\end{wrapfigure} The bulk of theoretical work on multitask and related areas build on early work on domain adaptation (i.e., $N=1$)
such as \cite{ben2007analysis, cortes2008sample, ben2010theory}, which introduce notions of discrepancy such as the $d_{\cal A}$-divergence, $\cal Y$-discrepancy, that specialize the \emph{total-variation} metric to the setting of domain adaptation. These notions often result in bounds of the form \eqref{eq:slowrates}, starting with \cite{crammer2008learning, ben2010theory}. Such bounds can in fact be shown to be tight w.r.t. the discrepancy term $\text{disc}\!\paren{\braces{P_t}; \Qt}$, for given distributional settings and sample sizes, owing for instance to early work on the limits of learning under distribution drift (see e.g. \cite{bartlett1992learning}). However, the rates of \eqref{eq:slowrates} appear pessimistic when we consider settings of interest here where $\hstar$'s remain the same (or nearly so) across tasks, as they suggest no general improvement on risk with larger sample size. Consider for instance simple situations as depicted on the right, where a source $P$ and target $\Qt$ (with respective supports $\X_P, \X_\Qt$) might differ considerably in the mass they assign to regions of data space, thus inducing large discrepancies, but where both assign sufficient mass to decision boundaries to help identify $\hstar$ with enough samples from \emph{either} distribution. Proper parametrization resolves this issue, as we show through new multitask rates $\E_\Qt(\hat h)\to 0$ in natural situations with $\text{disc}\!\paren{\braces{P_t}; \Qt}>0$, and even with no target sample.  

We remark that other notions of discrepancy, e.g., \emph{Maximum Mean Discrepancy} \cite{gretton2009covariate}, \emph{Wasserstein distance} \cite{redko2017theoretical, shen2018wasserstein} are employed in domain adaptation; however they appear relatively less often in the theoretical literature on multitask and related areas. For multitask, 
the work of \cite{ben2008notion} proposes a more-structured 
notion of task relatedness, whereby a source $P_t$ is induced from a target $\Qt$ through a transformation of  
the domain $\X_\Qt$; that work also incurs an $n^{-1/2}$ term 
in the risk bound, but no discrepancy term.
The work of \cite{mansour2009multiple}
considers R\'enyi divergences in the context of optimal aggregation in multitask under population risk, but does not study sample complexity.

The use of a non-metric discrepancy such as R\'enyi divergence brings back an important point: two distributions might have asymmetric information on each other w.r.t. domain adaptation. Such insight was raised recently, independently in \cite{kpotufe2018marginal, hanneke2019value, achille2019information}, with various natural examples therein (see also Section \ref{sec:transfer}). In particular, it motivates a more unified view of multitask and \emph{multisource domain} adaptation, which are often treated separately. Namely, if the goal in multitask is to perform as well as possible on each task $P_t$ in our set, then as we show, such asymmetry in information between tasks calls for different aggregation of datasets for each target $P_t$: in other words, treating multitask as separate multisource problems, even if the optimal $\hstar$ is the same across tasks. 

In contrast, a frequent aim in multitask, dating back to \cite{caruana1997multitask}, has been to ideally arrive at a single aggregation of task datasets that simultaneously benefits all tasks $P_t$'s. Following this spirit, many theoretical works on the subject are concerned with bounds on \emph{average risk} across tasks \cite{Baxter-Bayesian,ando2005framework,maurer2013sparse,pontil2013excess,pentina2014pac,yang2013theory, maurer2016benefit}, rather than bounding the supremum risk accross tasks -- i.e., treating multitask as separate multisources, as of interest here. Some of these average bounds, e.g., \cite{maurer2013sparse, maurer2016benefit,pontil2013excess}, remove the dependence on discrepancy inherent in bounds of the form \eqref{eq:slowrates}, but maintain a term of the form $n^{-\alpha}$; in other words, any bound on supremum risk derived from such results would be in terms of a single dataset size $n$. 
The work of \cite{blum2017collaborative}
directly addresses the problem 
of bounding the supremum risk, 
but also incurs a term 
of the form $n^{-\alpha}$ 
in the risk bound.

In the context of multisource domain adaptation, it has been recognized in practice that some datasets that are too \emph{far} from the ideal target might hurt target performance and should be downweighted accordingly, a situation that has been termed \emph{negative transfer}. These situations further motivate the need for \emph{adaptive} procedures that can automatically identify good datasets. As far as theoretical insights, it is clear that negative transfer might happen for instance, under ERM, if optimal classifiers $\hstar$'s are considerably different across tasks.
Interestingly, even when $\hstar$'s are allowed arbitrarily close (but not equal), 
\cite{david2010impossibility} shows that, for $N=1$, the source dataset can be useless without labeled target data. We later derived minimax lower-bounds for the case $N=1$ with or without labeled target data in \cite{hanneke2019value} for a range of situations including those considered in \cite{david2010impossibility}. Such results however do not quite confirm negative transfer, as they allow the possibility that useless datasets might remain safe to include. 
For the multisource case $N\gg 1$, to the best of our knowledge, situations of negative transfer have only been described in adversarial settings with corrupted labels. For instance, the recent papers of \cite{qiao2018outliers, mahloujifar2019universal, konstantinov2020sample} show limits of multitask under various adversarial corruption of labels in datasets, while \cite{scott2019learning} derives a positive result, i.e., rates (for Lipschitz loss) decreasing in both $N$ and $n$, up to excluded or downweighted datasets. The procedure of \cite{scott2019learning} is however nonadaptive as it requires known noise proportions. 

\cite{konstantinov2020sample} is of particular interest as they are concerned with \emph{adaptivity} under label corruption. They show that, even if at most $N/2$ of the datasets are corrupted, no procedure can get a rate better than $1/n$. In contrast, in the stochastic setting considered here, there is always an adaptive procedure with significantly faster rates than $1/n$ if at most $N/2$ (in fact any fixed fraction) of distributions $P_t$'s are \emph{far} from the target $\Qt$ (Theorem \ref{thm:pooling-median} of Appendix \ref{app:pooling-median}). This dichotomy is due to the strength of adversarial corruptions they consider which in effect can flip optimal $\hstar$'s on corrupted sources. What we show here is that, even when $\hstar$'s are fixed across tasks, and datasets are sampled i.i.d. from each $P_t$, i.e., non-adversarially, no algorithm can achieve a rate better than $1/\sqrt{n}$ while a non-adaptive oracle procedure can (see Theorem \ref{thm:nonadaptivity}). In other words, some datasets are indeed unsafe for \emph{any} multisource procedure even in nonadversarial settings, as they can force suboptimal choices w.r.t. a target, absent additional restrictions on the problem setup.

As discussed earlier, such favorable restrictions concern for instance situations where information is available on how sources rank in distance to a target. In particular, the case $n=1$ intersects with the so-called \emph{distribution drift} setting where each distribution $P_t, t\in [N]$ has bounded discrepancy $\sup_{t\geq 1}\text{dist}(P_t, P_{t+1})$ w.r.t. the next distribution $P_{t+1}$ as \emph{time} $t$ varies. 
While the notion of discrepancy is typically taken to be total-variation \cite{ben1989parametrization, bartlett1992learning, barve1997complexity} 
or related notions \cite{mohri2012new, zimintasks, hanneke19a}, both our upper and lower-bounds, specialized to the case $n=1$, provide a new perspective for distribution drift under our distinct parametrization of how $P_t$'s relate to each other. For instance, our results on multisource under ranking (Theorem \ref{thm:semi-adaptive}) imply new rates for distribution drift, when all $\hstar$'s are the same across time, with excess error at time $N$ going to $0$ with $N$, even in situations where
$\sup_{t\geq 1}\text{dist}(P_t, P_{t+1}) >0$ in total variation. Such consistency in $N$ is unavailable in prior work on distribution drift.

Finally we note that there has been much recent theoretical efforts towards other formalisms of relations between distributions in a multitask or multisource scenario, with particular emphasis on distributions sharing common latent substructures, both as applied to classification \cite{muandet2013domain, mcnamara2017risk, arora2019theoretical}, or to regression settings \cite{jalali2010dirty,lounici2011oracle, negahban5773043, du2020few,tripuraneni2020theory}. The present work does not address such settings.

\paragraph{Paper Outline} We start with setup and definitions in Sections \ref{sec:setupclass} and \ref{sec:multitask}. This is followed by a technical overview of results, along with discussions of the analysis and novel proof techniques, in Section \ref{sec:overview}. Minimax lower and upper bounds are derived in Sections \ref{sec:lowerbound} and \ref{sec:upperbound}. Constrained regimes allowing partial adaptivity are discussed in Section \ref{sec:semiadaptive}. This is followed by impossibility theorems for adaptivity in Section \ref{sec:nonadaptive}.

\section{Basic Classification Concepts}\label{sec:setupclass}
We consider a classification setting $X\mapsto Y$, where $X, Y$ are drawn from  
some space $\X \times \Y$, {$\Y \doteq \braces{-1, 1}$}. We focus on \emph{proper learning} where, given data, a learner is to return a classifier $h: \X \mapsto \Y$ from some fixed class $\H \doteq \braces{h}$.  

\begin{assumption}[Bounded VC]
Throughout we will let $\H \in 2^\X$ denote a hypothesis class of finite VC dimension $\vc$, which we consider fixed in all subsequent discussions. 
To focus on nontrivial 
cases, we assume 
$|\H| \geq 3$ throughout.
\end{assumption}

We note that our algorithmic techniques and analysis extend to more general $\H$ through  
Rademacher complexity or empirical covering numbers. We focus on VC classes for simplicity, and to allow 
simple expressions of minimax rates. 

The performance of any classifier $h$ will be captured through the 0-1 risk and excess risk as defined below. 

\begin{definition}  Let $R_P(h) \doteq P_{X, Y}\paren{h(X)\neq Y}$
denote the {\bf risk} of any $h: \X \mapsto \Y$ under a distribution $P = P_{X, Y}$.
The {\bf excess risk} of $h$ over any $h'\in \H$ is then defined as 
$\E_P(h; h') \doteq R_P(h) - R_P(h')$, while for the excess risk over the best in class we simply write 
$\E_P(h) \doteq R_P(h) - \inf_{h'\in \H}R_P(h')$. We let $\hstar_P$ denote any element of $\argmin_{h\in \H}R_P(h)$ (which we will assume exists); if $P$ is clear from context we might just write $\hstar$ for a minimizer (a.k.a. \emph{best in class}).
Also define the pseudo-distance 
$P(h \neq h') \doteq P_{X}(h(X) \neq h'(X))$.

Given a finite dataset $S$ of $(x, y)$ pairs in $\X \times \Y$, we let $\hat{R}_{S}(h) \doteq \frac{1}{|S|} \sum_{(x,y) \in S} \ind{h(x) \neq y}$
denote 
the {\bf empirical risk} of $h$ under $S$; 
if $S = \emptyset$, define $\hat{R}_{S}(h) \doteq 0$. The {\bf excess empirical risk} over any $h'$ is $\hat \E_S(h; h') \doteq \hat R_S(h) - \hat R_S(h')$.
Also define the empirical pseudo-distance $\hat \prob_{S}( h \neq h' ) \doteq \frac{1}{|S|} \sum_{(x,y) \in S} \ind{h(x) \neq h'(x)}$. 
\end{definition}

The following condition is a classical way to capture a continuum from easy to hard classification.
In particular, in vanilla classification, the best rate excess risk achievable by a classifier trained on data of size $m$, can be shown to be of order $m^{-1/(2-\beta)}$, i.e, interpolates between $1/n$ and $1/\sqrt{n}$, as controlled by $\beta\in [0, 1]$ defined below. 

\begin{definition} 
\label{def:beta}
Let $\beta \in [0, 1], C_\beta \geq 2$. A distribution $P$ is said to satisfy a {\bf Bernstein class condition} with parameters $\paren{C_\beta, \beta}$ if the following holds: 
\begin{align}
\forall h\in \H, \quad  P(h \neq \hstar_P) \leq C_\beta \cdot \E_P^\beta (h). \label{eq:bernstein}
\end{align}
\end{definition}

Notice that the above always holds with at least $\beta = 0$.

The condition can be viewed as quantifying the \emph{amount of noise} in $Y$ since always have $P_X(h \neq \hstar_P) \geq \E_P(h)$ with equality when $Y = \hstar(X)$. In particular, it captures the so-called \emph{Tsybakov noise margin condition} when the Bayes classifier is in $\H$, that is, let $\eta(x) \doteq \expec_P [Y \mid X = x]$, then the margin condition  
$$P_X\paren{x: \abs{\eta(x) - 1/2} \leq \tau } \leq C\tau^\kappa, \forall \tau > 0, \text{ and some } \kappa \geq  0,$$ implies that $P$ satisfies 
\eqref{eq:bernstein} with $\beta = \kappa/(1+ \kappa)$ for some $C_\beta$ {\cite{tsybakov2004optimal}}.

The importance of such margin considerations become evident as we consider which multitask learner is able to automatically adapt optimally to unknown relations between distributions; interestingly, hardness of adaptation has to do not only with relations (or discrepancies) between distributions, but also with $\beta$. 

\section{Multitask Setting}\label{sec:multitask}
We consider a setting where multiple datasets are drawn independently from (related) distributions $P_t, t \in [N+1]$, with the aim to return hypotheses $h$'s from $\H$ with low excess error $\E_{P_t}(h)$ w.r.t. any of these distributions. W.l.o.g. we fix attention to a single \emph{target} distribution $\Qt \doteq P_{N+1}$, and therefore reduce the setting to that of \emph{multisource}\footnote{The term \emph{multisource} is often used for situations where the learner has no access to target data, which is not required to be the case here, although is handled simply by setting the target sample size to $0$ in our bounds.}.

\begin{definition} A {\bf multisource learner} $\hat h$ is any function $\prod_{t \in [N+1]}(\X \times \Y)^{{n}_{t}} \mapsto \H$ (for some sample sizes ${n}_{t} \in \mathbb{N}$), i.e., given a multisample $Z = \braces{Z_t}_{t \in [N+1]}, \abs{Z_t} = {n}_{t}$, returns a hypothesis in $\H$. 
In an abuse of notation, we will often conflate the learner $\hat h$ with the hypothesis it returns.
\end{definition}

A multitask setting can then be viewed as one where $N+1$ multisource learners $\hat h_t$ (targeting each $P_t$) are trained on the same multisample $Z$. Much of the rest of the paper will therefore discuss learners for a given target $\Qt \doteq P_{N+1}$.

\subsection{Relating Sources to Target}\label{sec:transfer}
Clearly, how well one can do in multitask depend on how distributions relate to each other.
The following parametrization will serve to capture the relation between sources and target distributions. 

\begin{definition} 
\label{def:rho}
Let $\rho>0$ (up to $\rho = \infty$), and $C_\rho \geq 2$. We say that a distribution $P$ has {\bf transfer exponent} $\paren{C_\rho, \rho}$ w.r.t.\ a distribution $\Qt$ (under $\H$), if 
$$ \forall h \in \H, \quad \E_\Qt(h) \leq C_\rho \cdot \E_P^{1/\rho}(h).$$
\end{definition}

Notice that the above always holds with at least $\rho = \infty$. 

We have shown in earlier work \cite{hanneke2019value} that the \emph{transfer exponent} manages to tightly capture the minimax rates of transfer in various situations with a single source $P$ and target $\Qt$ ($N=1$), including ones where the best hypotheseses $\hstar_P, \hstar_\Qt$ are different for source and target; in the case of main interest here where $P$ and $\Qt$ share a same best in class $\hstar \doteq \hstar_P \doteq \hstar_\Qt$, for respective data sizes $n_P, n_\Qt$, the best possible excess risk $\E_\Qt$ is of order $(n_P^{1/\rho} + n_{\Qt})^{-1/(2-\beta)}$ which is tight for any values of $\rho$ and $\beta$ (a Bernstein class parameter on $P$ and $\Qt$). In other words, $\rho$ captures an effective data size $n_P^{1/\rho}$ contributed by the source to the target: this decreases as $\rho \to \infty$, delineating a continuum from easy to hard transfer. Interestingly, $\rho<1$ reveals the fact that source data could be more useful than target data, for instance if the classification problem is easier under the source (e.g., $P$ has more mass at the decision boundary). 

Altogether, the transfer exponent $\rho$ appears as a more optimistic measure of discrepancy between source and target, as it reveals the possibility of transfer -- even at fast rates -- in many situations where traditional measures are pessimistically large. To illustrate this, we recall some of the examples from \cite{hanneke2019value}. In all examples below, we assume for simplicity that 
$Y = \hstar(X)$ for some $\hstar \doteq \hstar_P \doteq \hstar_\Qt$. 

{\bf Example 1. (Discrepancies $d_{\cal A}, d_{\cal Y}$ can be too large)} 
Let $\Hyp$ consist of one-sided thresholds on the line, and let 
$P_X \doteq \mathcal{U}[0, 2]$ and $Q_X \doteq \mathcal{U}[0, 1]$.
\begin{wrapfigure}{R}{0.25\textwidth}
\vspace{-18pt}
\centering
\includegraphics[width=0.23\textwidth]{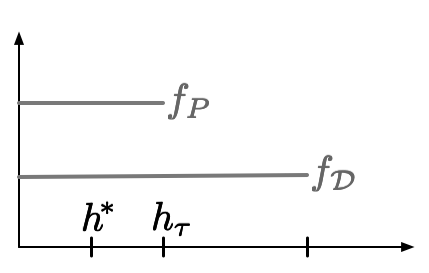}
\vspace{-18pt}
\end{wrapfigure} 
Let $\hstar$ be thresholded at $1/2$. We then see that for all $h_\tau$ thresholded at $\tau \in [0, 1]$, 
$2P_X(h_\tau \neq \hstar) = \Qt_X(h_\tau \neq \hstar)$, where for $\tau >1$, 
$P_X(h_\tau \neq \hstar) = \frac{1}{2}(\tau-1/2) \geq \frac{1}{2}\Qt_X(h_\tau \neq \hstar) =  \frac{1}{4}$. 
Thus, the transfer exponent $\rho = 1$ with {$C_\rho = 2$}, so we have fast 
transfer at the same rate $1/n_P$ as if we were sampling from $\Qt$.

On the other hand, recall that the $d_{\cal A}$-divergence takes the form\footnote{Note that these divergences are often defined w.r.t. every pair $h, h'\in \H$ rather than w.r.t.\ $\hstar$ which makes them smaller.} 
$d_{\cal A}(P, \Qt) \doteq \sup_{h \in \Hyp} \abs{P_X(h\neq \hstar) - \Qt_X(h\neq \hstar)}$, while the $\cal Y$-discrepancy takes the form $d_{\cal Y}(P, \Qt) \doteq \sup_{h\in \Hyp} \abs{\E_P(h) - \E_\Qt(h)}$. The two coincide when $Y = \hstar(X)$.  

Now, take $h_\tau$ as the threshold at $\tau = 1/2$, and $d_{\cal A} = d_{\cal Y} =\frac{1}{4}$ which would wrongly imply that transfer is not feasible at a rate faster than $\frac{1}{4}$; we can in fact make this situation worse, i.e., let $d_{\cal A} = d_{\cal Y} \to \frac{1}{2}$ by letting $\hstar$ correspond to a threshold close to $0$. A first issue is that these divergences get large in large disagreement regions; this is somewhat mitigated by \emph{localization} -- i.e., defining these discrepancies w.r.t. $h$'s in a vicinity of $\hstar$, but does not quite resolve the issue, as discussed in earlier work \cite{hanneke2019value}.

{\bf Example 2. (Minimum $\rho$, and the inherent {\bf asymmetry} of transfer)}
\begin{wrapfigure}{R}{0.25\textwidth}
\vspace{-25pt}
\centering
\includegraphics[width=0.23\textwidth]{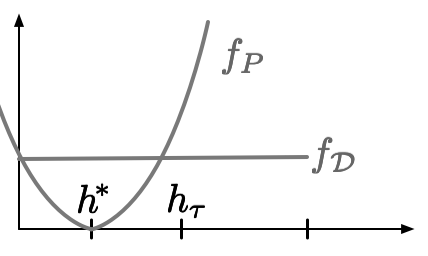}
\vspace{-18pt}
\end{wrapfigure} 
Suppose $\Hyp$ is the class of one-sided thresholds on the line, $\hstar$ is a threshold at $0$. 
The marginal $\Qt_X$ has uniform density $f_\Qt$ (on an interval containing $0$), 
while, for some $\rho\geq 1$, $P_X$ has 
density $f_P(\tau) \propto \tau^{\rho -1}$ on $\tau>0$ (and uniform on the rest of the support of $\Qt$, not shown).  
Consider any $h_\tau$ at threshold $\tau>0$, we have $P_X(h_\tau \neq \hstar) = \int_0^\tau f_P \propto \tau^\rho$, 
while $\Qt_X(h_\tau \neq \hstar) \propto \tau$. Notice that for any fixed $\epsilon>0$, 
$ \lim\limits_{\tau>0, \,  \tau \to 0} \frac{\Qt_X(h_\tau \neq \hstar)^{\rho -\epsilon}}{P_X(h_\tau \neq \hstar)} 
= \lim\limits_{\tau>0, \, \tau \to 0} C\frac{\tau^{\rho -\epsilon}}{\tau^\rho} = \infty. $

{\vskip -1mm}We therefore see that $\rho$ is the smallest possible transfer-exponent. Interestingly, now consider transferring instead 
from $\Qt$ to $P$: we would have $\rho(\Qt\to P) =1 \leq \rho \doteq \rho(P\to \Qt)$; in other words, there are natural situations where it is easier to transfer from $\Qt$ to $P$ than from $P$ to $\Qt$, as in the case here where $P$ gives relatively little mass to the decision boundary. 
This is \emph{not captured by symmetric notions of distance}, e.g., metrics or semi-metrics such as $d_{\cal A}$, $d_{\cal Y}$, MMD, TV, or Wasserstein.

Finally note that the above examples can be extended to more general 
hypothesis classes $\H$ as the examples merely play on how fast $f_P$ decreases w.r.t. $f_\Qt$ in regions of space.

\subsection{Multisource Class}
We are now ready to formalize the main class of distributional settings considered in this work.

\begin{definition}[Multisource class] We consider classes $\M = \M\paren{C_\rho, \braces{{\rho_t}}_{t\in [N]}, \braces{{n}_{t}}_{t \in [N+1]}, C_\beta, \beta}$ of product distributions of the form $\sk{\Pi = \prod_{t\in [N]} P_t^{{n}_{t}} \times \Qt^{n_{\Qt}}, \ n_t \geq 1, \ n_\Qt\doteq n_{N+1} \geq 0,}$ satisfying: 
\begin{enumerate}
    \item[(A1).] There exists $\hstar \in \H, \quad \forall t \in [N+1], \, \hstar\in \argmin_{h\in \H} \E_{P_t}(h)$, 
    \item[(A2).] Sources $P_t$'s have transfer exponent $\paren{C_{\rho}, \rho_t}$ w.r.t.\ the target $\Qt$,
    \item[(A3).] All sources $P_t$ and target $\Qt$ admit a Bernstein class condition with parameter $(C_\beta, \beta)$. 
\end{enumerate}
\end{definition}

For notational simplicity, we will often let $P_{N+1} \doteq \Qt$.
Also, although it is not a parameter of the class, we will also refer to 
$\rho_{N+1} = 1$, as $\Qt$ always has transfer exponent 
{$\paren{C_{\rho},1}$} w.r.t.\ itself.

\begin{remark}[$\hstar$ is almost unique]
Note that, for $\beta >0$, the Bernstein class condition implies that $\hstar$ above satisfies 
$P_t(\hstar\neq \hstar_t) = 0$ for any other $\hstar_t \in \argmin_{h\in \H} \E_{P_t}(h)$. Furthermore for any $t \in  [N]$ such that $\rho_t <\infty$, we also have $\E_\Qt(\hstar_t) = 0$, implying $\Qt(\hstar\neq \hstar_t) = 0$ for $\beta >0$. 
\end{remark}

\subsection{Additional Notation} 

{\bf Implicit target $\Qt = \Qt(\Pi)$.} {Every time we write $\Pi$, 
we will implicitly mean 
$\Pi = \prod_{t \in [N]} P_{t}^{n_{t}} \times \Qt^{n_{N+1}}$, 
so that in any context where 
a $\Pi$ is introduced, 
we may write, for instance, 
$\E_{\Qt}(h)$, which then refers 
to the $\Qt$ distribution in $\Pi$.}

{\bf Indices and Order Statistics.} For any $Z = \braces{Z_t}_{t \in [N+1]} \sim \Pi$, and indices $I \subset [N+1]$, we let $Z^I\doteq \bigcup_{s \in I} Z_{s}$. 

We will often be interested in order statistics $\rho_{(1)} \leq \rho_{(2)} \ldots \leq \rho_{(t)}$ of ordered $\rho_t, t\in [N+1]$ values, in which case $P_{(t)}, Z_{(t)}, n_{(t)}$ will denote distribution, sample and sample size at index ${(t)}$. We will then let $Z^{(t)}\doteq Z^{\braces{(1), \ldots, (t)}}$. 

{\bf Average transfer exponent.} For any $t\in [N+1]$, 
define $\brho{t} \doteq \sum_{s\in [t]} \alpha_{(s)}\cdot \rho_{(s)}$, where $\alpha_{(s)} \doteq \frac{n_{(s)}}{\sum_{r\in [t]} n_{(r)}}$.

{\bf Aggregate ERM.} For any $Z^{I}, I \subset [N+1]$, we let $\hat h_{Z^{I}} \doteq \argmin_{h \in \H} \hat R_{Z^{I}}(h)$, and correspondingly we also define $\hat h_{Z^{(t)}}$, $t \in [N+1]$ as the ERM over $Z^{(t)}$. When $t = N+1$ we simply write $\hat h_Z$ for $\hat h_{Z^{(N+1)}}$. 

{\bf Min and Max.} We often use the short notations $a\land b \doteq \min\braces{a, b}$, $a \lor b \doteq \max\braces {a, b}$. 

{\bf Positive Logarithm.} For any $x \geq 0$, define $\Log(x) \doteq \max\{ \ln(x), 1 \}$.

\noindent{\bf $\boldsymbol{1/0}$ Convention.} We adopt the convention 
that $1/0 = \infty$.

{\bf Asymptotic Order.} 
We often write $a \lesssim b$ or $a \asymp b$ in the statements of key results, to indicate inequality, respectively, equality, up to 
constants and logarithmic factors. The precise 
constants and logarithmic factors are always presented in supporting results. 

\section{Results Overview}\label{sec:overview}
We start by investigating what the best possible transfer rates are for multisource classes $\M$, and then investigate the extent to which these rates are attainable by adaptive procedures, i.e., procedures with little access to class information such as transfer exponents $\rho_t$ from sources to target. 

From this point on, we let $\M = \M\paren{C_\rho, \braces{{\rho_t}}_{t\in [N]}, \braces{{n}_{t}}_{t \in [N+1]}, C_\beta, \beta}$ denote any multisource class, with \emph{any admissible value} of relevant parameters, unless these parameters are specifically constrained a result's statement. 

\subsection{Minimax Rates}\label{sec:minimax}

\begin{theorem}[Minimax Rates]
\label{thm:minimax}
Let $\M$ denote any multisource class where $\rho_t \geq 1, \forall t\in [N]$. 
Let $\hat h$ denote any multisource learner with knowledge of $\M$. We have:
\begin{align*}
\inf\limits_{\hat{h}} \sup\limits_{\Pi \in \M} \EE_{\Pi} \brackets{\E_{\Qt}\!\left( \hat{h} \right)} \asymp \min_{t\in [N+1]} \paren{\sum_{s = 1}^t n_{(s)}}^{-{1}/ (2-\beta){\brho{t}}}.
\end{align*}
\end{theorem}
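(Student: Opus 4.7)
The plan is to prove matching upper and lower bounds; the upper bound follows from an aggregate ERM tuned with full knowledge of $\M$, and the lower bound from a tailored Le Cam construction. The main difficulty lies in matching the two sides under the $\min_t$ structure.

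\emph{Upper bound.} The oracle learner, having access to $\M$, pre-computes $t^\star \in \argmin_{t \in [N+1]} \paren{\sum_{s\le t} n_{(s)}}^{-1/((2-\beta)\brho{t})}$, forms the aggregate sample $Z^{(t^\star)}$, and outputs $\hat{h}_{Z^{(t^\star)}}$. Three ingredients drive the analysis. (i) The Bernstein condition lifts from each $P_{(s)}$ to the weighted mixture $\bar{P}_{t^\star} \doteq \sum_{s\le t^\star} \alpha_{(s)} P_{(s)}$ with the same $\beta$, via Jensen's inequality applied to the concave map $z \mapsto z^\beta$ on the per-source bounds. (ii) The transfer exponent hypothesis gives $\E_{P_{(s)}}(h) \gtrsim \E_{\Qt}(h)^{\rho_{(s)}}$, and a weighted AM-GM step against $\alpha_{(s)}$ then yields $\E_{\bar{P}_{t^\star}}(h) \gtrsim \E_{\Qt}(h)^{\brho{t^\star}}$. (iii) A standard VC fast-rate bound under Bernstein gives $\E_{\bar{P}_{t^\star}}(\hat{h}_{Z^{(t^\star)}}) \lesssim (\vc/\sum_{s\le t^\star}n_{(s)})^{1/(2-\beta)}$ up to logs. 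Combining (ii) and (iii) delivers the claimed upper rate at $t^\star$.

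\emph{Lower bound.} I would extend Example 2 to all sources simultaneously. Take $\H$ to be one-sided thresholds on $[0,1]$, fix $\hstar$ at $1/2$, and for a scale $\tau$ consider two product distributions $\Pi^0, \Pi^1 \in \M$ that agree except that $\hstar$ shifts to $1/2$ vs.\ $1/2+\tau$. Set $\Qt_X$ uniform and each $P_{(s), X}$ with density proportional to $|x-1/2|^{\rho_{(s)}-1}$ on the boundary (uniform elsewhere), which realizes $P_{(s)}(h_0 \neq h_1) \asymp \tau^{\rho_{(s)}}$ and the declared transfer exponent. The key subtlety is the label noise: I would use constant noise margins $|\eta_{P_{(s)}} - 1/2| \asymp \eta_{0,(s)}$ tuned \emph{per source} to saturate Bernstein, namely $\eta_{0,\Qt} \asymp \epsilon^{1-\beta}$ and $\eta_{0,(s)} \asymp \epsilon^{\rho_{(s)}(1-\beta)}$ where $\epsilon \asymp \E_{\Qt}$ is the target excess-risk scale; one verifies $\E_\Qt \asymp \epsilon$, $\E_{P_{(s)}} \asymp \epsilon^{\rho_{(s)}}$, so transfer exponents and Bernstein hold simultaneously. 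A direct KL computation gives per-sample divergence from $P_{(s)}$ of order $\eta_{0,(s)}^{2}\tau^{\rho_{(s)}} \asymp \epsilon^{(2-\beta)\rho_{(s)}}$, and Le Cam's inequality forces the minimax rate to be at least of order $\epsilon$ whenever $\sum_{s \in [N+1]} n_{(s)}\, \epsilon^{(2-\beta)\rho_{(s)}} \lesssim 1$.

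\emph{Matching scales; main obstacle.} The crux is to verify that $\epsilon_\star \doteq \min_t \paren{\sum_{s\le t}n_{(s)}}^{-1/((2-\beta)\brho{t})}$ saturates this KL budget up to a constant. For $s \le t^\star$, writing $\epsilon_\star^{(2-\beta)\rho_{(s)}} = \paren{\sum_{r\le t^\star} n_{(r)}}^{-\rho_{(s)}/\brho{t^\star}}$ and applying weighted AM-GM with weights $\alpha_{(s)}$ (using $\sum_{s\le t^\star}\alpha_{(s)}\rho_{(s)} = \brho{t^\star}$) shows the block sum is a constant. The tail $s > t^\star$ is controlled by invoking the optimality of $t^\star$ in the $\min_t$, i.e., $\log\paren{\sum_{r\le t}n_{(r)}}/\brho{t} \le \log\paren{\sum_{r\le t^\star}n_{(r)}}/\brho{t^\star}$ for each $t>t^\star$, which converts to a per-source bound that telescopes. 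The principal difficulty is precisely this scale-matching: a single Le Cam scale must play against a heterogeneous collection of $N+1$ sources without the tail producing an unbounded KL. If the direct analysis becomes unwieldy, a fallback is an Assouad-style packing of disjoint boundary regions at scale $\epsilon_\star$, which replaces the tail bookkeeping by a multi-hypothesis combinatorial argument.
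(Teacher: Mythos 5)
Your upper bound follows the paper's route essentially verbatim: oracle ERM on the prefix $Z^{(t^\star)}$, a Jensen lift of the Bernstein condition to the mixture $\bar{P}_{t^\star}$, a weighted Jensen/AM--GM step via the transfer exponents to pass from $\E_{\bar{P}_{t^\star}}$ to $\E_{\Qt}^{\brho{t^\star}}$, and a fast-rate bound. The one thing you elide is step (iii): the ``standard VC fast-rate bound under Bernstein'' is stated for i.i.d.\ data, but here $Z^{(t^\star)}$ is a concatenation of samples from \emph{different} $P_{(s)}$'s. The paper cannot simply cite a known result; it proves a uniform Bernstein inequality for non-identically distributed samples (Lemma~\ref{lem:uniform-bernstein}, Appendix~\ref{app:uniform-bernstein}), which in turn relies on the Klein--Rio extension of Bousquet's inequality to non-identical summands. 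You should flag that this extension is needed rather than invoking it as off-the-shelf.

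The lower bound is where there is a genuine error. Your weighted AM--GM step for the block $s \le t^\star$ goes in the \emph{wrong direction}. Writing $\epsilon_\star^{(2-\beta)\rho_{(s)}}=(\NN{t^\star})^{-\rho_{(s)}/\brho{t^\star}}$ and applying $\sum_s \alpha_{(s)} x_s \ge \prod_s x_s^{\alpha_{(s)}}$ yields
\begin{equation*}
\sum_{s\le t^\star} n_{(s)}\, \epsilon_\star^{(2-\beta)\rho_{(s)}} \;=\; \NN{t^\star}\sum_{s\le t^\star}\alpha_{(s)}(\NN{t^\star})^{-\rho_{(s)}/\brho{t^\star}} \;\ge\; 1,
\end{equation*}
a \emph{lower} bound on the KL budget, when what is required for Le~Cam (or Fano) is an \emph{upper} bound $\lesssim 1$. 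The correct route, which you only apply to the tail $t>t^\star$, works for all $t$: the minimizer property $\ln \NN{t}/\brho{t}\le \ln \NN{t^\star}/\brho{t^\star}$ combined with $\rho_{(t)}\ge\brho{t}$ gives $(\NN{t^\star})^{-\rho_{(t)}/\brho{t^\star}}\le(\NN{t})^{-1}$, so $n_{(t)}\epsilon_\star^{(2-\beta)\rho_{(t)}}\le n_{(t)}/\NN{t}$, and the full sum is $O(\log \NN{N+1})$, \emph{not} $O(1)$. To keep the KL within budget you must therefore deflate $\epsilon$ by a polylogarithmic factor. This is exactly what the paper's proof of Theorem~\ref{thm:lower} does: it defines $\epsilon=\min_t(c_2^{-1}\phi(\NN{t}))^{-1/(2-\beta)\rho_{(t)}}$ with $\phi(m)\propto m\log^2 m$ (so $\sum_m \phi(m)^{-1}$ converges), and uses $\rho_{(t)}$ rather than $\brho{t}$ in the exponent, which both cleans up the per-term telescoping and yields the claimed bound via the observation $\brho{t}\le\rho_{(t)}$ (see the Remark after Theorem~\ref{thm:minimax}). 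Your ``fallback'' multi-hypothesis Assouad packing is in fact the paper's \emph{primary} route: it packs $2^{d/8}$ sign patterns on $d+1$ support points and invokes Tsybakov's Theorem~2.5, which also yields the $\vc$ dependence that a two-point Le~Cam cannot provide (though for the $\asymp$ statement with $\H$ fixed this matters only for Theorem~\ref{thm:lower}, not for Theorem~\ref{thm:minimax}). Your continuous-threshold construction from Example~2 is a plausible alternative and the per-source noise calibration $\eta_{0,(s)}\asymp\epsilon^{\rho_{(s)}(1-\beta)}$ matches the paper's, but the KL bookkeeping must be corrected as above.
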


\begin{remark}
We remark that the proof of the above result (see Theorems {\ref{thm:lower} and \ref{thm:upper}} of Sections \ref{sec:lower} and \ref{sec:upper} for matching lower and upper bounds) imply that, in fact, we could replace $\brho{t}$ 
with simply $\rho_{(t)}$
and the result would still be true. In other words, although 
intuitively $\brho{t}$ might be much smaller than $\rho_{(t)}$ for any fixed $t$, 
the minimum values over $t \in [N+1]$ can only differ up to logarithmic terms. 
\end{remark}

{We also note that the constraint that $\rho_t \geq 1$ is only needed for the lower bound 
(Theorem~\ref{thm:lower}), whereas all of our upper bounds (Theorems~\ref{thm:upper}, \ref{thm:beta-1-union}, and \ref{thm:semi-adaptive}) hold 
for any values $\rho_t > 0$.  Moreover, there exist classes $\H$ where 
the lower bound also holds for all $\rho_t > 0$, so that the form of the bound 
is generally not improvable. 
The case $\rho_t \in (0,1)$ represents a kind of \emph{super transfer}, 
where the source samples are actually \emph{more} informative than target samples.}

It follows from Theorem \ref{thm:minimax} that, despite there being $2^{N+1}$ possible ways of aggregating datasets (or more if we consider general weightings of datasets), it is sufficient to search over $N+1$ possible aggregations -- defined by the ranking $
\rho_{(1)} \leq\rho_{(2)} \ldots \leq \rho_{(N+1)}$ -- to nearly achieve the minimax rate.

The lower-bound {(Theorem \ref{thm:lower})} relies on constructing a subset (of $\M$) of product distributions $\Pi_{h}, h \in \H$, which are mutually \emph{close} in KL-divergence, but far under the pseudo-metric $\Qt_X(h\neq h')$. For illustration, considering the case $\beta =1$, for any $h, h'$ that are sufficiently far under $\Qt_X$ so that 
$\E_{\Qt}(h'; h) \gtrsim \epsilon$, the lower-bound construction is such that 
\begin{align}
{\rm KL}(\Pi_h \| \Pi_{h'}) \lesssim \sum_{t=1}^{N+1} n_t \epsilon^{\rho_t} \lesssim 1,\label{eq:lowerboundcriteria}
\end{align}
ensuring that $\Pi_h, \Pi_{h'}$ are hard to distinguish from finite sample. Thus, the largest $\epsilon$ satisfying the second inequality above, say $\epsilon_{N+1}$ is a minimax lower-bound. 
On the other-hand, the upper-bound (Theorem \ref{thm:upper}) relies on a uniform Bernstein's inequality that holds for 
non-identically distributed r.v.s (Lemma \ref{lem:uniform-bernstein}); in particular, by accounting for \emph{variance} in the risk, such Bernstein-type inequality allows us to extend (to the multisource setting) usual fixed-point arguments that capture the effect of the noise parameter $\beta$. Now, again for illustration, let $\beta =1$, and consider the ERM $\hat h \doteq \hat h_Z$ combining all $N+1$ datasets. Let $\E_\alpha \doteq \sum_{t=1}^{N+1} \alpha_t \E_t(\hat h)$, $\alpha_t \doteq n_t/\paren{\sum_s n_s}$, then the concentration arguments described above ensure that $\E_\alpha(\hat h) \lesssim 1/\paren{\sum_t n_t}$. Now notice that, by definition of $\rho_t$, 
$ \E_\alpha(\hat h) \gtrsim \sum_t \alpha_t \E_{\Qt}^{\rho_t}(\hat h)$, in other words, $\E_{\Qt}^{\rho_t}(\hat h)$ satisfies the second inequality in \eqref{eq:lowerboundcriteria}, and must therefore be at most of order $\epsilon_{N+1}$. This establishes the tightness of $\epsilon_{N+1}$ as a minimax rate, all that is left being to elucidate its exact form in terms of sample sizes. Similar, but somewhat more involved arguments apply for general $\beta$, though in that case we 
find that pooling all of the data does not 
suffice to achieve the minimax rate.

Notice that the rates of Theorem \ref{thm:minimax} immediately imply minimax rates for multitask under the assumption (A1) of sharing a same $\hstar$ (with appropriate $\rho_t$'s w.r.t.\ any target $\Qt \doteq P_s, s\in [N+1]$). It is then natural to ask whether the minimax rate for various targets $P_s$ might be achieved by the same algorithm, i.e., the same aggregation of tasks, in light of a common approach in the literature (and practice) of optimizing for a single classifier that does well simultaneously on all tasks. We show that even when all $\hstar$'s are the same, the optimal aggregation might differ across targets $P_s$, simply due to the inherent assymmetry of transfer. We have the following theorem, proved in Appendix \ref{app:asymmetry}.

\begin{theorem}[Target affects aggregation]\label{theo:asymmetry}
Set $N=1$. 
There exists $P, \Qt$ satisfying a Bernstein class condition with parameters $(C_\beta, \beta)$ for some {$0 \leq \beta < 1$}, and sharing the same $\hstar = \hstar_P = \hstar_\Qt$ such that the following holds. 
Consider a multisample $Z = \braces{Z_P, Z_\Qt}$ consisting of independent datasets $Z_P \sim P^{n_P}, Z_\Qt \sim \Qt^{n_\Qt}$. 

Let $\hat h_Z, \hat h_{Z_P}, \hat h_{Z_\Qt}$ denote the ERMs over $Z$, $Z_P$, $Z_\Qt$ respectively.  
Suppose $1 \leq n_\Qt^2 \leq \frac{1}{8} n_P^{(2- 2\beta)/(2-\beta)}.$ Then 
\begin{align*}
\expec \brackets{\E_{\Qt}(\hat h_{Z_P})} \land \expec \brackets{\E_{\Qt}(\hat h_Z)} &\geq \frac{1}{4}, \quad  \text{while } \expec\brackets {\E_{\Qt}(\hat h_{Z_\Qt})} \lesssim n_\Qt^{-1/(2-\beta)}; \\
\text{ however, }\,  \expec\brackets{ \E_{P}(\hat h_{Z_P})} \lor \expec \brackets{\E_{P}(\hat h_{Z})} &\lesssim n_P^{-1/(2-\beta)}.
\end{align*}
\end{theorem}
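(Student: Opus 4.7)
I plan to exhibit a two-point asymmetric example. Let $\X = \{a, b\}$ and $\H = \{\hstar, h', h''\}$ with $\hstar \equiv +1$, $h'(a)=+1,\, h'(b)=-1$, and $h'' \equiv -1$ (the latter only to satisfy $|\H|\geq 3$). Let $P$ be defined by $P_X(a)=1-\epsilon,\ P_X(b)=\epsilon$, with $Y=+1$ a.s.\ at $a$ and $\Pr[Y=+1\mid X=b] = 1/2+\eta$ under $P$; let $\Qt$ be supported only on $b$ with $Y=+1$ a.s. Tune $\epsilon = K\, n_P^{-\beta/(2-\beta)}$ and $\eta = c\,\epsilon^{(1-\beta)/\beta} \asymp n_P^{-(1-\beta)/(2-\beta)}$ (and $\epsilon=1,\,\eta=0$ for $\beta=0$), with constants $K,c$ depending only on $(C_\beta,\beta)$, to be fixed below. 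Direct computation yields $\hstar=\hstar_P=\hstar_\Qt$, $\E_P(h')=2\epsilon\eta$, $\E_\Qt(h')=1$, $P(h'\neq\hstar)=\epsilon$, and $\Qt(h'\neq\hstar)=1$; the Bernstein condition on $P$ reduces to $\epsilon^{1-\beta}\leq C_\beta(2\eta)^\beta$, made tight by the scaling above, while Bernstein on the noise-free $\Qt$ and on $h''$ (which has $\E_P(h''),\,\E_\Qt(h'')\approx 1$) is immediate from $C_\beta\geq 2$.

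\textbf{ERM on $Z$ selects $h'$ with constant probability.} Since $\hstar$ and $h'$ agree everywhere except at $b$, one has $\hat R_Z(h')-\hat R_Z(\hstar)=(S_P+n_\Qt)/|Z|$, where $S_P:=\sum_{(x,y)\in Z_P,\ x=b}(\ind{y=+1}-\ind{y=-1})$ and every $\Qt$-sample $(b,+1)$ contributes $+1$, for a total of $+n_\Qt$ from $Z_\Qt$. Conditional on $n_b^P\sim\mathrm{Bin}(n_P,\epsilon)$, the sum $S_P$ has conditional mean $2\eta\, n_b^P$ and variance $\asymp n_b^P$; by Slud's one-sided binomial anticoncentration inequality together with concentration of $n_b^P$ around $n_P\epsilon$,
\begin{equation*}
\Pr[\,S_P + n_\Qt \leq 0\,] \;\gtrsim\; \Phi\!\paren{-\frac{n_\Qt + 2\eta\, n_P\epsilon}{\sqrt{n_P\epsilon}}}.
\end{equation*}
The hypothesis $n_\Qt^2\leq n_P^{(2-2\beta)/(2-\beta)}/8 = n_P\epsilon/(8K)$ gives $n_\Qt/\sqrt{n_P\epsilon}\leq 1/\sqrt{8K}$, while the Bernstein-tight $\eta$ gives $2\eta\sqrt{n_P\epsilon}\asymp K^{(2-\beta)/(2\beta)}/C_\beta^{1/\beta}$; choosing $K$ (and hence $c$) so that the sum of these two quantities is at most $2/3$ yields $\Pr[S_P+n_\Qt\leq 0]\geq \Phi(-2/3)>1/4$. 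Since $h''$ misclassifies every $a$-sample, $\hat R_Z(h'')$ exceeds $\hat R_Z(\hstar)$ by at least a constant with probability $1-e^{-\Omega(n_P)}$ (Hoeffding), so ERM cannot select $h''$; thus $\Pr[\hat h_Z = h']\geq 1/4$, and since $\E_\Qt(h')=1$ we conclude $\expec[\E_\Qt(\hat h_Z)]\geq 1/4$. The argument for $\hat h_{Z_P}$ is identical, specializing to $n_\Qt=0$ (i.e., dropping the $+n_\Qt$ pull from $Z_\Qt$).

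\textbf{Remaining bounds and main obstacle.} For $\hat h_{Z_\Qt}$: every $\Qt$-sample equals $(b,+1)$, so $\hat R_{Z_\Qt}(\hstar)=0<1=\hat R_{Z_\Qt}(h')=\hat R_{Z_\Qt}(h'')$ whenever $n_\Qt\geq 1$, giving $\expec[\E_\Qt(\hat h_{Z_\Qt})]=0\lesssim n_\Qt^{-1/(2-\beta)}$. For the $P$-side, on the high-probability event $\hat h\in\{\hstar,h'\}$ the excess risk satisfies $\E_P(\hat h)\leq\E_P(h')=2\epsilon\eta\asymp n_P^{-1/(2-\beta)}$, and the $e^{-\Omega(n_P)}$ probability of selecting $h''$ contributes negligibly to the expectation, yielding $\expec[\E_P(\hat h_{Z_P})]\lor\expec[\E_P(\hat h_Z)]\lesssim n_P^{-1/(2-\beta)}$. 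The main obstacle I anticipate is the delicate dual role played by $\eta$: the Bernstein condition on $P$ forces $\eta\gtrsim\epsilon^{(1-\beta)/\beta}$ from below, while the anticoncentration step requires $\eta\sqrt{n_P\epsilon}=O(1)$ from above; the unique scaling $\epsilon\asymp n_P^{-\beta/(2-\beta)}$ reconciles these two requirements, and the sample-size constraint $n_\Qt^2\leq n_P^{(2-2\beta)/(2-\beta)}/8$ then emerges as exactly the threshold under which the $+n_\Qt$ pull from $Z_\Qt$ is still dominated by the $O(\sqrt{n_P\epsilon})$ fluctuations of $S_P$.
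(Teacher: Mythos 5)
Your proposal is correct and takes essentially the same approach as the paper's: a two-point source $P$ putting light mass $\epsilon$ (your $\epsilon$ is the paper's $\epsilon^{\beta}$ up to constants) on the hard point, a target $\Qt$ heavily weighting that point, the scaling $\eta\asymp n_P^{-(1-\beta)/(2-\beta)}$ pinned by the Bernstein condition, and Slud's anticoncentration on the number of $P$-samples at the hard point to show that the pooled ERM flips the label there with constant probability. You also correctly identify the central tension — Bernstein forces $\eta$ from below while anticoncentration needs $\eta\sqrt{n_P\epsilon}=O(1)$ from above — which is exactly what makes $\epsilon\asymp n_P^{-\beta/(2-\beta)}$ the right scale.

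The one genuine simplification on your side is that your $\Qt$ is a noise-free point mass at $b$, whereas the paper's $\Qt$ puts mass $1/2$ on each of $x_0,x_1$ with label noise $\eta_{\Qt}(x_1)=1/4$. This matters for the exact constant: in your construction $\E_{\Qt}(h')=1$, so a constant-probability failure event directly gives $\expec[\E_{\Qt}(\hat h_Z)]$ bounded below by a constant near $1/4$; the paper's construction has $\E_{\Qt}$ of the bad hypothesis equal to $1/4$, so its stated proof (which shows the bad hypothesis is chosen with probability at least $\frac{1}{12}\prob(\bE_P)$) actually yields a smaller constant than $1/4$. Your noise-free $\Qt$ also trivializes the $\expec[\E_{\Qt}(\hat h_{Z_\Qt})]$ bound (it is exactly zero), whereas the paper leaves that part implicit. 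Two small caveats worth flagging: the tuning of $K$ so that $1/\sqrt{8K}+K^{(2-\beta)/(2\beta)}C_\beta^{-1/\beta}\le 2/3$ should be spelled out (it works, e.g., by choosing $C_\beta$ somewhat larger than $2$), and the $\gtrsim$ in the anticoncentration display hides the factor $\prob(\bE_P)$, which near the boundary $n_P^{(2-2\beta)/(2-\beta)}\approx 8$ is not yet close to $1$; this is a constant-tracking issue shared with the paper's own argument, not a gap in the approach.
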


\begin{remark}[Suboptimality of \emph{pooling}]
A common practice is to \emph{pool} all datasets together and return an ERM as in $\hat h_Z$. We see from the above result that this might be optimal for some targets while suboptimal for other targets. However, pooling is near optimal (simultaneously for all targets $P_s$) whenever $\beta =1$, as discussed in Section \ref{sec:adaptivityoverview} below. 
\end{remark}

\subsection{Some Regimes of (Semi) Adaptivity}\label{sec:adaptivityoverview}
It is natural to ask whether the above minimax rates for $\M$ are attainable by \emph{adaptive procedures}, i.e., a reasonable procedure with no access to prior information on (the parameters) of $\M$, but only access to a multisample $Z\sim \Pi$ for some unknown $\Pi \in \M$. As we will see in Section \ref{sec:impossibilityoverview}, this is not possible in general, i.e., outside of the regimes considered here. Our work however leaves open the existence of more refined regimes of adaptivity.

{\bf $\bullet$ Low Noise $\beta = 1$.} 
To start, when the Bernstein class parameter $\beta = 1$ (which would often be a priori unknown to the learner),  \emph{pooling} of all datasets is near minimax optimal as stated in the next result. This corresponds {to low noise situations, e.g., so-called \emph{Massart's noise} (where $\prob(Y \neq \hstar(X)|X) \leq (1/2) - \tau$, for some $\tau > 0$), including the \emph{realizable case} (where $Y= \hstar(X)$ deterministically).}
Note that $\rho_t$'s are nonetheless nontrivial (see examples of Section \ref{sec:transfer}), however the distributions $P_t$'s are then sufficiently related that their datasets are mutually valuable.

\begin{theorem}[Pooling under low noise]
\label{thm:beta-1-union} 
Suppose $\beta = 1$. Consider any $\Pi \in \M$ and let $\hat h_Z$ denote the ERM over $Z\sim \Pi$.
Let $\delta \in (0, 1)$. There exists a universal constant $c>0$, such that, 
with probability at least $1-\delta$, 
\begin{equation*}
\E_{\Qt}(\hat{h}_{Z}) 
\leq \min_{t \in [N+1]} C_{\rho} \left( c\ C_{\beta} \frac{\vc \log\!\left(\frac{1}{\vc} \sum_{s=1}^{N+1} n_{s} \right) + \log(1/\delta)}{\sum_{s=1}^{t} n_{(s)}} \right)^{1/\brho{t}}. 
\end{equation*}
\end{theorem}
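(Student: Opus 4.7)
The goal is to show that the pooled ERM $\hat h_Z$ attains the claimed rate when the Bernstein parameter is $\beta=1$. My strategy has three steps: (i) apply a uniform Bernstein-type inequality for non-identically-distributed samples to obtain a fast rate on the weighted excess risk $\E_\alpha(h) \doteq \sum_{t\in[N+1]} \alpha_t \E_{P_t}(h)$, where $\alpha_t = n_t/n_{\mathrm{tot}}$ and $n_{\mathrm{tot}} = \sum_s n_s$; (ii) for each $t \in [N+1]$, lower-bound $\E_\alpha(\hat h_Z)$ by $(m_t/n_{\mathrm{tot}})\cdot C_\rho^{-\brho{t}} \E_\Qt(\hat h_Z)^{\brho{t}}$ using Assumption~(A2) restricted to the $t$ sources with smallest $\rho$-values, together with a Jensen inequality; (iii) minimise over $t$.

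\textbf{Step 1 (concentration).} Writing the pooled empirical excess risk as $\hat\E_Z(h;\hstar) = \frac{1}{n_{\mathrm{tot}}}\sum_{i=1}^{n_{\mathrm{tot}}} W_i(h)$ with $W_i(h) = \ind{h(x_i)\neq y_i} - \ind{\hstar(x_i)\neq y_i}$, we have $\expec\!\brackets{\hat\E_Z(h;\hstar)} = \E_\alpha(h)$ because every $P_s$ shares $\hstar$ as its in-class minimiser. Since $W_i(h)$ is nonzero only on $\braces{h \neq \hstar}$, Assumption~(A3) with $\beta=1$ gives $\var(W_i(h)) \leq P_{s(i)}(h\neq\hstar) \leq C_\beta \E_{P_{s(i)}}(h)$, whence $\frac{1}{n_{\mathrm{tot}}^2}\sum_i \var(W_i(h)) \leq \frac{C_\beta}{n_{\mathrm{tot}}} \E_\alpha(h)$. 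The uniform Bernstein bound of Lemma~\ref{lem:uniform-bernstein}, combined with standard VC-class chaining, then yields that with probability at least $1-\delta$, uniformly over $h\in\H$,
\[ \abs{\hat\E_Z(h;\hstar) - \E_\alpha(h)} \;\leq\; \sqrt{\tfrac{2\,C_\beta\,\tau_n}{n_{\mathrm{tot}}}\,\E_\alpha(h)} \;+\; \tfrac{c\,\tau_n}{n_{\mathrm{tot}}}, \]
with $\tau_n = \vc\,\Log(n_{\mathrm{tot}}/\vc) + \Log(1/\delta)$. Since $\hat\E_Z(\hat h_Z;\hstar)\leq 0$ by the ERM property, solving the resulting quadratic in $\E_\alpha(\hat h_Z)$ delivers $\E_\alpha(\hat h_Z) \leq c'\,C_\beta\,\tau_n/n_{\mathrm{tot}}$.

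\textbf{Step 2 (translating to the target).} Fix any $t \in [N+1]$ and set $m_t = \sum_{s\leq t} n_{(s)}$, $w_s = n_{(s)}/m_t$, so that $\brho{t} = \sum_{s\leq t} w_s\,\rho_{(s)}$ and $\alpha_{(s)} = w_s\, m_t/n_{\mathrm{tot}}$. Dropping the nonnegative terms with indices outside $\braces{(1),\ldots,(t)}$ and using Assumption~(A2) in the form $\E_{P_{(s)}}(h) \geq C_\rho^{-\rho_{(s)}} \E_\Qt(h)^{\rho_{(s)}}$,
\[ \E_\alpha(\hat h_Z) \;\geq\; \sum_{s=1}^{t} \alpha_{(s)}\, C_\rho^{-\rho_{(s)}}\, \E_\Qt(\hat h_Z)^{\rho_{(s)}} \;=\; \tfrac{m_t}{n_{\mathrm{tot}}}\, \sum_{s=1}^{t} w_s\, (C_\rho/\epsilon)^{-\rho_{(s)}}, \]
where $\epsilon \doteq \E_\Qt(\hat h_Z) \in [0,1]$. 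Since $C_\rho \geq 2 > \epsilon$, the map $x\mapsto (C_\rho/\epsilon)^{-x}$ is convex, and Jensen gives $\sum_s w_s (C_\rho/\epsilon)^{-\rho_{(s)}} \geq (C_\rho/\epsilon)^{-\brho{t}} = C_\rho^{-\brho{t}}\, \epsilon^{\brho{t}}$. Combining with Step~1 and isolating $\epsilon$ yields $\epsilon \leq C_\rho\paren{c'\,C_\beta\,\tau_n/m_t}^{1/\brho{t}}$; minimising over $t\in [N+1]$ matches the statement of the theorem.

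\textbf{Main obstacle.} The main technical work is Step~1, namely invoking a uniform Bernstein inequality that simultaneously handles the VC-class supremum and the non-identical distributions in the pool; the weighted variance bound and the ensuing fixed-point argument are then routine. The Jensen step in Step~2 looks innocuous but is what forces the $C_\rho$ prefactor (rather than the more favourable $C_\rho^{1/\brho{t}}$): one must expand in the base $C_\rho/\epsilon > 1$, not in $\epsilon$ alone, to obtain both the $C_\rho^{-\brho{t}}$ and $\epsilon^{\brho{t}}$ factors from a single convex combination.
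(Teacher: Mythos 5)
Your proof is correct and follows essentially the same route as the paper's: concentrate the pooled weighted excess risk $\E_{\bar P_{N+1}}(\hat h_Z)$ via the non-identical uniform Bernstein inequality and a fixed-point argument (the paper packages this as Lemma~\ref{lem:Ealpha-bound}, which at $\beta=1$ gives exactly your $\E_\alpha(\hat h_Z)\lesssim C_\beta\tau_n/n_{\mathrm{tot}}$), then lower-bound $\E_\alpha(\hat h_Z)$ by $\frac{m_t}{n_{\mathrm{tot}}}(C_\rho^{-1}\E_\Qt(\hat h_Z))^{\brho{t}}$ by dropping the tail terms and applying Jensen to the convex map $x\mapsto(C_\rho^{-1}\epsilon)^x$. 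Your remark about the Jensen step forcing a $C_\rho$ prefactor rather than $C_\rho^{1/\brho{t}}$ correctly identifies why one must keep $C_\rho^{-1}$ inside the base before convexifying; this is the same combination the paper performs.
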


The theorem is proven in Section \ref{sec:semiadaptive1}.
We also state a 
general bound on $\E_{\Qt}(\hat{h}_{Z})$, 
holding for any $\beta$, 
in Corollary~\ref{cor:general-pooling} of Appendix~\ref{app:pooling-median};
the implied rates are not always 
optimal, though interestingly 
they are near-optimal in the 
case that 
$\sum_{t=1}^{t^*} n_{(t)} \propto \sum_{t=1}^{N} n_{t}$, where 
$t^*$ is the minimizer of the 
r.h.s.\ in Theorem~\ref{thm:minimax}.
We note that, unlike in the oracle upper-bounds of Theorem \ref{thm:upper}, the logarithmic term in the above result is in terms of the entire sample size, rather than the sample size at which the minimizer in $t\in [N+1]$ is attained.

{\bf $\bullet$ Available ranking information.} Now, assume that on top of $Z\sim \Pi \in \M$, we have access to 
ranking information $\rho_{(1)}\leq \rho_{(2)} \leq \ldots \leq \rho_{(N+1)}$, but no additional information on $\M$. Namely, $C_\beta, \beta$, and the actual values of $\rho_t, t\in [N]$ are unknown to the learner. We show that, in this case, a simple rank-based procedure achieves the minimax rates of Theorem \ref{thm:minimax}, without knowledge of the additional distributional parameters.  

Define 
$\Abound(m,\delta) \doteq \frac{\vc}{m}\Log\!\left(\frac{m}{\vc}\right)+\frac{1}{m}\Log\!\left(\frac{1}{\delta}\right)$, for $\delta \in (0,1)$ and $m \in \nats$. Let $\NN{t} \doteq \sum\limits_{s=1}^{t} n_{(s)}$
for each $t \in [N+1]$, ansd recall that $\hat h_{Z^{(t)}}$ denotes the ERM over the aggregate sample $Z^{(t)}$.

{\begin{quote}\emph{Rank-based Procedure $\hat h$:} let $\delta_t \doteq \delta / (6t^2)$, and 
$\Aconst$ as in Lemma~\ref{lem:uniform-bernstein}. For any $t \in [N+1]$, define: 
    \begin{equation}
\label{eqn:rank-constraints}
\H_{(t)}\doteq \braces{h\in \H: \hat{\E}_{Z^{(t)}}(h;\hat{h}_{Z^{(t)}})
\leq \Aconst \sqrt{ \hat{\prob}_{Z^{(t)}}(h \neq \hat{h}_{Z^{(t)}}) \Abound(\NN{t},\delta_t)} + \Aconst  \Abound(\NN{t},\delta_t)},
\end{equation}
Return any $h$ in $\bigcap_{s =1}^{N+1} \H_{(s)}$, if not empty, otherwise return any $h$. 
\end{quote}}

We have the following result for this learning algorithm.

\begin{theorem}[Semi-adaptive method under ranking information]
\label{thm:semi-adaptive}
Let $\delta \in (0, 1)$.
Let $\hat{h}$ denote the above procedure, trained on $Z \sim \Pi$, for any $\Pi \in \M$. 
 There exists a universal constant {$c>0$}, such that, 
with probability at least $1-\delta$, 
\begin{equation*}
\E_{\Qt}(\hat{h}) \leq \min_{t \in [N+1]} C_{\rho} \left( c\ C_{\beta} \frac{\vc \log\!\left(\frac{1}{\vc} \sum_{s=1}^{t} n_{(s)} \right) + \log(1/\delta)}{\sum_{s=1}^{t} n_{(s)}} \right)^{1/(2-\beta)\brho{t}}.
\end{equation*}
\end{theorem}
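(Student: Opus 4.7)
The plan is to prove the bound by combining three standard ingredients with one specific twist for the multisource setting. The main scaffold is a high-probability concentration event from the uniform Bernstein inequality (Lemma \ref{lem:uniform-bernstein}) applied to each aggregate sample $Z^{(t)}$, at confidence level $\delta_t = \delta/(6 t^2)$; since $\sum_{t \geq 1} \delta_t \leq \delta$, a union bound leaves a global failure probability at most $\delta$. On this event, the inequality guarantees for every $h,h' \in \H$ a two-sided control of $|\hat{\E}_{Z^{(t)}}(h;h') - \E_{\bar P^{(t)}}(h;h')|$ in terms of $\sqrt{\hat{\prob}_{Z^{(t)}}(h \neq h')\,\Abound(\NN{t},\delta_t)}$ and $\Abound(\NN{t},\delta_t)$, where $\E_{\bar P^{(t)}}(h;h') \doteq \sum_{s=1}^{t} \alpha_{(s)} \E_{P_{(s)}}(h;h')$ is the excess risk under the weighted average source distribution $\bar P^{(t)}$. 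The first step is then to verify that $\hstar \in \H_{(t)}$ for every $t$: since $\hstar$ minimizes each $R_{P_{(s)}}$, we have $\E_{\bar P^{(t)}}(\hstar;\hat h_{Z^{(t)}}) \leq 0$, so the concentration inequality turns the definition of $\H_{(t)}$ into a true inequality for $\hstar$. Hence $\hstar \in \bigcap_{s} \H_{(s)}$ and the algorithm returns some $\hat h$ from this intersection.

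Next, fix $t \in [N+1]$ and bound $\E_{\bar P^{(t)}}(\hat h)$. Since $\hat h \in \H_{(t)}$ and $\hat h_{Z^{(t)}}$ is the ERM (so $\hat{R}_{Z^{(t)}}(\hat h_{Z^{(t)}}) \leq \hat{R}_{Z^{(t)}}(\hstar)$), we get
\[
\hat{\E}_{Z^{(t)}}(\hat h; \hstar) \leq \hat{\E}_{Z^{(t)}}(\hat h; \hat h_{Z^{(t)}}) \leq \Aconst\sqrt{\hat{\prob}_{Z^{(t)}}(\hat h \neq \hat h_{Z^{(t)}})\,\Abound(\NN{t},\delta_t)} + \Aconst\,\Abound(\NN{t},\delta_t).
\]
The triangle inequality on the empirical pseudo-distance, combined with uniform Bernstein applied to $\hat{\prob}_{Z^{(t)}}$ vs. $\prob_{\bar P^{(t)}}$, lets us replace $\hat{\prob}_{Z^{(t)}}(\hat h \neq \hat h_{Z^{(t)}})$ by $\prob_{\bar P^{(t)}}(\hat h \neq \hstar)$ at the cost of absorbing an extra term of the same order (we use here that $\E_{\bar P^{(t)}}(\hat h_{Z^{(t)}})$ itself is already controlled by the ERM analysis on $\bar P^{(t)}$). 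A second application of uniform Bernstein then upgrades $\hat{\E}_{Z^{(t)}}(\hat h; \hstar)$ to $\E_{\bar P^{(t)}}(\hat h)$. Key observation: $\bar P^{(t)}$ inherits the Bernstein class condition with the same $(C_\beta,\beta)$, because by concavity of $x\mapsto x^\beta$ and Jensen,
\[
\prob_{\bar P^{(t)}}(\hat h \neq \hstar) = \sum_{s=1}^t \alpha_{(s)}\prob_{P_{(s)}}(\hat h \neq \hstar) \leq C_\beta \sum_{s=1}^t \alpha_{(s)} \E^\beta_{P_{(s)}}(\hat h) \leq C_\beta\, \E^\beta_{\bar P^{(t)}}(\hat h).
\]
Plugging this into the Bernstein bound and solving the standard fixed-point inequality $x \lesssim \sqrt{C_\beta x^\beta \Abound(\NN{t},\delta_t)} + \Abound(\NN{t},\delta_t)$ yields
\[
\E_{\bar P^{(t)}}(\hat h) \;\lesssim\; \bigl( C_\beta\,\Abound(\NN{t},\delta_t) \bigr)^{1/(2-\beta)}.
\]

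Finally, I convert from $\E_{\bar P^{(t)}}$ to $\E_{\Qt}$ using the transfer-exponent assumption $\E_\Qt(\hat h) \leq C_\rho\, \E^{1/\rho_{(s)}}_{P_{(s)}}(\hat h)$, which rearranges to $\E_{P_{(s)}}(\hat h) \geq (\E_\Qt(\hat h)/C_\rho)^{\rho_{(s)}}$. Writing $u \doteq \E_\Qt(\hat h)/C_\rho \in [0,1]$ (wlog), the function $x\mapsto u^x$ is convex on $x \geq 0$, so Jensen's inequality gives
\[
\E_{\bar P^{(t)}}(\hat h) \;=\; \sum_{s=1}^{t} \alpha_{(s)} \E_{P_{(s)}}(\hat h) \;\geq\; \sum_{s=1}^{t} \alpha_{(s)} u^{\rho_{(s)}} \;\geq\; u^{\sum_s \alpha_{(s)} \rho_{(s)}} \;=\; u^{\brho{t}}.
\]
Combining with the previous display and raising to the $1/\brho{t}$ power,
\[
\E_\Qt(\hat h) \;\leq\; C_\rho\bigl( C_\beta\,\Abound(\NN{t},\delta_t) \bigr)^{1/((2-\beta)\brho{t})},
\]
and taking the minimum over $t \in [N+1]$ gives the claim (absorbing the $\log(t^2)$ factor from $\delta_t$ into the universal constant $c$ via $\Log$).

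The main obstacle, in my view, is the bookkeeping in the middle step: carefully translating the constraint $\hat{\prob}_{Z^{(t)}}(\hat h\neq \hat h_{Z^{(t)}})$ that appears in the definition of $\H_{(t)}$ into a true quantity $\prob_{\bar P^{(t)}}(\hat h\neq\hstar)$ on which the aggregate Bernstein class condition can be deployed. This requires (i) using triangle inequality on both empirical and population pseudo-distances, (ii) invoking uniform Bernstein a second time on the $\hat{\prob}$–$\prob$ pair, and (iii) separately controlling $\E_{\bar P^{(t)}}(\hat h_{Z^{(t)}})$ via the standard ERM fixed-point argument on $\bar P^{(t)}$. The aggregated Bernstein class property (via Jensen) and the Jensen step on $u^{\brho{t}}$ are both brief but essential for producing the average exponent $\brho{t}$ in the final rate.
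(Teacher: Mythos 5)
Your proposal is correct and follows essentially the same route as the paper's proof: union bound over the events from Lemma~\ref{lem:uniform-bernstein} at confidence levels $\delta_t = \delta/(6t^2)$, verification that $\hstar$ satisfies every constraint set $\H_{(t)}$ (hence the algorithm returns some $\hat h$ in the intersection), bounding $\E_{\bar P_t}(\hat h)$ by the fixed-point $\bigl(C_\beta\, \Abound(\NN t,\delta_t)\bigr)^{1/(2-\beta)}$ via the aggregated Bernstein class condition, and finally the Jensen step with the transfer exponents to pass to $\E_\Qt$ with exponent $\brho t$. The only cosmetic difference is that you state the bound for every $t$ and then take the minimum, whereas the paper instantiates the argument only at the deterministic minimizer $t^*$; and the middle block of your argument (controlling $\E_{\bar P_t}(\hat h)$ from the constraint) is exactly the content of the paper's Lemma~\ref{lem:Ealpha-bound}, which it invokes as a black box rather than re-deriving inline.
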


The result is shown in Section \ref{sec:semiadaptive2} by arguing that each $h$ in $\H_{(t)}$ has ${\E_{\Qt}(h)} $ of order $\paren{\sum_{s\in [t]} n_{(s)}}^{-1/(2-\beta)\brho{t}}$, so the minimizer is attained whenever $\hat t = N+1$ (which is probable, as $\hstar$ is likely to be in each $\H_{(t)}$).

{Recalling our earlier discussion of Section \ref{sec:introduction}, the above result applies to the case of \emph{drifting distributions} by letting $n_t=1$,
and $(t) = N+2-t$, i.e., the $t^{{\rm th}}$ previous example 
is considered the $t^{{\rm th}}$ most-relevant to the target $\Qt$.
In this case, in contrast to the lower bounds proven by 
\cite{bartlett1992learning}, the above result of Theorem~\ref{thm:semi-adaptive} reveals situations where 
the risk at time $N$ approaches $0$ 
as $N \to \infty$, even though the total-variation distance 
between adjacent distributions may be bounded away from zero (see examples of Section \ref{sec:transfer}).
In other words, by constraining the sequence of $P_t$ distributions 
by the sequence of $\rho_{t}$ values, we can describe scenarios 
where the traditional upper bound analysis of drifting distributions 
from \cite{bartlett1992learning, ben1989parametrization, barve1997complexity} can be improved.}

\begin{remark}[approximate ranking]
Theorem~\ref{thm:semi-adaptive},
and its proof, also have 
implications for 
scenarios where we may only 
have access to an 
\emph{approximate} ranking: 
that is, where 
the ranking indices $(t)$ 
don't strictly order the 
tasks by their respective minimum 
valid $\rho_{t}$ 
values.  For instance, 
one immediate observation 
is that, since Definition~\ref{def:rho} 
does not require $\rho_t$ 
to be minimal, any 
larger value of $\rho_t$ 
would also be valid; 
therefore, for any 
permutation 
$\sigma : [N+1] \to [N+1]$,
there always 
exists \emph{some} 
valid choices of $\rho_t$ 
values so that the 
sequence $\rho_{\sigma(t)}$ 
is non-decreasing, 
and hence we can define
$(t) = \sigma(t)$,
so that Theorem~\ref{thm:semi-adaptive} holds 
(with these $\rho_t$ 
values) for 
the rank-based procedure 
applied with this $\sigma(t)$ 
ordering of the tasks. 
For instance, this means 
that if we use an 
ordering $\sigma(t)$ that 
only swaps the order of 
some $\rho_{(t)}$ values that are 
within some $\epsilon$ of each 
other, then the result in the 
theorem remains valid aside 
from replacing $\brho{t}$ with 
$\brho{t}+\epsilon$.
A second observation is that, 
upon inspecting the proof, 
it is clear that 
the result is only truly 
sensitive to the ranking 
relative to the index $t^*$
achieving the minimum 
in the bound: that is, 
if some indices $t,t'$ with 
$(t) < (t^*)$ and $(t') < (t^*)$ 
are incorrectly ordered, 
while still both ranked before 
$t^*$ (or likewise for 
indices with $(t) > (t^*)$ 
and $(t') > (t^*)$), 
the result remains valid 
as stated nonetheless.
\end{remark}

\subsection{Impossibility of Adaptivity in General Regimes}\label{sec:impossibilityoverview}
Adaptivity in general is not possible even though the rates of Theorem \ref{thm:minimax} appear to reduce the exponential search space over aggregations to a more manageable one based on rankings of data. As seen in the previous section, easier settings such as ones with ranking information, or low noise, allow adaptivity to the remaining unknown distributional parameters. The result below states that, outside of these regimes, no algorithm can guarantee a rate better than a dependence on the number of samples {$n_\Qt$ from the target task}, even when a semi-adaptive procedure using ranking information can achieve any desired rate $\epsilon$.

\begin{theorem}[Impossibility of adaptivity] \label{thm:nonadaptivity}
Pick any \sk{$0\leq \beta < 1$, 
$C_\beta \geq 2$}, and let ${1\leq n < 2/\beta - 1}$, $n_\Qt \geq 0$, and $C_\rho =3$. 
Pick any $\epsilon >0$. The following holds for $N$ sufficiently large.

$\bullet$ Let $\hat h$ denote any multisource learner with no knowledge of $\M$. There exists a multisource class $\M$ with parameters $n_{N+1} = n_\Qt$, ${n}_{t} = n, \forall t\in [N]$, and all other parameters satisfying the above, such that 
\begin{align*}
    \sup_{\Pi \in \M} \expec_\Pi\brackets{\E_{\Qt}(\hat h)} \geq c\cdot \left(1\land n_\Qt^{-1/(2-\beta)}\right)\text{ for a universal constant } c>0. 
\end{align*}

$\bullet$ On the other hand, 
there exists a semi-adaptive classifier 
$\tilde h$, which, given data $Z\sim \Pi$, along with a ranking $\rho_{(1)} \leq \rho_{(2)} \ldots \leq \rho_{(N+1)}$, but no other information on the above $\M$, achieves the rate 
$$  \sup_{\Pi \in \M} \expec_{\Pi}  \brackets {\E_\Qt  (\tilde h)} \leq \epsilon.$$
\end{theorem}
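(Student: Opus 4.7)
The theorem combines a minimax-style impossibility bound with a semi-adaptive achievability result, on a single carefully designed multisource class $\M$. I would attack the two bullets separately.

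The \emph{semi-adaptive upper bound} follows from Theorem~\ref{thm:semi-adaptive}. I would construct $\M$ so that enough of its sources are informative: for instance, by prescribing $\rho_t$ values such that a constant fraction of the $\rho_{(t)}$ in the sorted ranking are bounded above by a universal constant, keeping $\brho{t}$ bounded along a sub-sample whose size grows with $N$. Converting Theorem~\ref{thm:semi-adaptive}'s high-probability bound into an expectation bound (by optimizing $\delta$) then yields $\expec\brackets{\E_\Qt(\tilde h)} \lesssim \paren{Nn}^{-1/(2-\beta)\brho{t}}$ at a suitable index $t$, which can be forced below any prescribed $\epsilon > 0$ by choosing $N$ large enough.

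For the \emph{impossibility} direction I would use a Le Cam two-point reduction. Fix two hypotheses $h_0, h_1 \in \H$ that disagree on a small region $A \subset \X$, and build alternatives $\Pi^{(0)}, \Pi^{(1)} \in \M$ with $\hstar = h_j$ under $\Pi^{(j)}$. The target $\Qt$ would place mass of order $n_\Qt^{-\beta/(2-\beta)}$ on $A$ at the Bernstein-critical noise level, so that $n_\Qt$ target samples alone cannot resolve the two alternatives faster than $n_\Qt^{-1/(2-\beta)}$. For each source $P_t$, the marginal outside of $A$ is identical across the two alternatives; the mass and conditional noise placed on $A$ are calibrated so that the Bernstein condition $(C_\beta, \beta)$, the prescribed transfer exponent with $C_\rho = 3$, and assumption (A1) all hold simultaneously under both alternatives.

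The main obstacle is verifying $\mathrm{KL}(\Pi^{(0)} \| \Pi^{(1)}) = O(1)$ uniformly in $N$. By tensorization,
\[
\mathrm{KL}(\Pi^{(0)} \| \Pi^{(1)}) = n_\Qt \cdot \mathrm{KL}(\Qt^{(0)} \| \Qt^{(1)}) + \sum_{t=1}^N n \cdot \mathrm{KL}(P_t^{(0)} \| P_t^{(1)}),
\]
with the target contribution bounded by construction. For the source terms, a two-point calculation gives a per-source KL of order $\mu_t \tau_t^2$, which I would minimize subject to the joint Bernstein and transfer-exponent constraints. The precise assumption $n < 2/\beta - 1$ is exactly what reconciles this aggregate KL being $O(1)$ with the upper-bound requirement that enough sources carry usable signal: it is the regime in which $n$ samples per source at Bernstein-critical noise individually accumulate too little information for $N$-fold aggregation to exceed the testing-error budget, even though each source still has enough structure (small $\rho_t$) to be exploited by a \emph{ranking-aware} procedure. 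Once $\mathrm{KL} = O(1)$ is in hand, Le Cam's two-point inequality gives a constant lower bound on the hypothesis-testing error, which, translated through the $\Qt$-separation of $h_0$ and $h_1$, yields $\expec\brackets{\E_\Qt(\hat h)} \gtrsim 1 \wedge n_\Qt^{-1/(2-\beta)}$, as claimed.
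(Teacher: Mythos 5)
Your upper-bound argument (invoke Theorem~\ref{thm:semi-adaptive} and take $N$ large) is essentially what the paper does, so that part is fine. The lower bound, however, has a genuine gap that your two-point Le Cam reduction cannot close, and the reason is structural.

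Recall that the two bullets of the theorem must hold for the \emph{same} class $\M$: the semi-adaptive learner with ranking information must get $\sup_{\Pi \in \M} \EE[\E_\Qt(\tilde h)] \leq \epsilon$, which means $\M$ must contain a constant fraction of ``good'' sources with small $\rho_t$. Now consider your two alternatives $\Pi^{(0)}, \Pi^{(1)} \in \M$ with $\hstar = h_0$ vs.\ $\hstar = h_1$. Because $\M$ is parametrized by a \emph{fixed} vector of $\rho_t$'s, any slot $t$ that is ``good'' in $\Pi^{(0)}$ (say $\rho_t \leq 1$) must also be good in $\Pi^{(1)}$. But a source with $\rho_t < \infty$ is precisely one where $\E_{P_t}(h) > 0$ whenever $\E_\Qt(h) > 0$; since $\E_{\Qt^{(j)}}(h_{1-j}) > 0$ under both alternatives, each good source must separate $h_0$ from $h_1$, and since the Bernstein parameter $\beta$ is also fixed, its per-sample separation is pinned at the Bernstein-critical level $\asymp \epsilon_0^{2-\beta}$. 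Hence a good source contributes KL of order $n/n_\Qt$, and with $\Omega(N)$ good slots the aggregate KL grows like $N n / n_\Qt$, which blows up as $N \to \infty$. You cannot simultaneously have the good sources needed for the second bullet and $\mathrm{KL}(\Pi^{(0)} \| \Pi^{(1)}) = O(1)$ uniformly in $N$; and the constraint $n < 2/\beta - 1$ does not rescue this, because it constrains $n$, not the aggregate over $N$ sources. The paper's constraint actually serves a different purpose (controlling how often the noisy source produces homogeneous vectors, see conditions (i)--(iii) of Proposition~\ref{prop:noAdapt}).

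What the paper does instead is fundamentally different: the hard distribution is a \emph{mixture} $\Gamma_\sigma = (\alpha_P P_\sigma^n + \alpha_Q \Q_\sigma^n)^N \times \Qt_\sigma^{n_\Qt}$ over slot assignments, not a product indexed by a fixed $\M$. The benign sources $\Q_\sigma$ are deterministic and would trivially reveal $\sigma$ if the learner knew \emph{which} slots hold them --- this is exactly why a within-$\M$ two-point bound cannot work. The mixture randomizes the slot identities, and the key technical content (Propositions~\ref{prop:likelihoodRatio}--\ref{prop:likelihoodRedux2}) is a likelihood-ratio argument, via sufficient statistics and Slud's anticoncentration inequality, showing that homogeneous vectors with label $+1$ from benign $\Q_{+}$ slots are statistically swamped by homogeneous vectors with flipped labels from noisy $P_{-}$ slots. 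Only at the very end (proof of Theorem~\ref{thm:nonadaptivityfull}) is a concrete $\M$ extracted by decoupling the mixture and conditioning on a favorable slot assignment $\zeta$. This mixture-over-slot-assignments device, and the anticoncentration calculations that replace the usual KL/Le~Cam bound, are the missing ideas in your proposal.

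Minor remarks: your claim that the KL is bounded ``by a two-point calculation of order $\mu_t \tau_t^2$'' is also not compatible with assumption (A1) and the Bernstein parameters being shared across both alternatives, for the reason given above. And the conversion from the high-probability semi-adaptive bound to an expectation bound is straightforward, as you say, but the paper simply applies Theorem~\ref{thm:semi-adaptive} directly in the second bullet of Theorem~\ref{thm:nonadaptivityfull} and derives the expectation there.
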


The first part of the result follows from Theorem \ref{thm:nonadaptivityfull} of Section \ref{sec:nonadaptive}, while the second part follows from both Theorem \ref{thm:nonadaptivityfull} and  Theorem \ref{thm:semi-adaptive} of Section \ref{sec:nonadaptive}. The main idea of the proof is to inject enough randomness into the choice of ranking, while at the same time allowing \emph{bad} datasets from distributions with large $\rho_t$ -- which would force a wrong choice of $\hstar$ -- to appear as benign as \emph{good} samples from distributions with small $\rho_t =1$. {Hence, we let $N$ large enough in our constructions so that the bulk of {bad} datasets would significantly \emph{overwhelm} the information from \emph{good} datasets. }

As a technical point, \emph{no knowledge of $\M$} simply means that a minimax analysis is performed over a larger family containing such $\M$'s, indexed over choices of ranking each corresponding to a fixed $\M$. 

{Finally, we note that the result leaves open the possibility of adaptivity under further distributional restrictions on $\M$, for instance requiring that the number of samples $n$ per source task be large w.r.t. other parameters such as $\beta$ and $N$; although this remains unclear, large values of $n$ could perhaps yield sufficient information to compare and rank tasks.}

{Another possible restriction towards adaptivity is to require that a large proportion of the samples are from \emph{good} datasets w.r.t. $N+1$}. {In particular, we show 
in Theorem~\ref{thm:pooling-median} of Appendix~\ref{app:pooling-median} that 
the ERM $\hat{h}_{Z}$ which pools all datasets 
achieves a target excess risk $\E_{\Qt}(\hat{h}_{Z})$ depending on the 
(weighted) \emph{median} of the $\rho_t$ values 
(or more generally, any \emph{quantile}); in other words as long as a constant fraction of 
all datapoints (pooled from all datasets) are from tasks with relatively small $\rho_t$, the bound will be small.}
However, this is not a safe algorithm in general, as per Theorem~\ref{theo:asymmetry}.

\section{Lower Bound Analysis}\label{sec:lowerbound}
\label{sec:lower}

\begin{theorem}
\label{thm:lower}
Suppose $|\H| \geq 3$.
If every $\rho_t \geq 1$, then 
for any learning rule $\hat{h}$, there exists 
$\Pi \in \M$ 
such that, with probability at least $1/50$, 
\begin{equation*}
\E_{\Qt}(\hat{h}) > c_1 \min\limits_{t \in [N+1]} \left( \frac{c_2 \vc}{\left( \sum_{s=1}^{t} n_{(s)} \right) \Log^2 \!\left( \sum_{s=1}^{t} n_{(s)} \right) } \right)^{1 / (2-\beta) \brho{t}}
\end{equation*}
for numerical constants $c_1,c_2 > 0$.
\end{theorem}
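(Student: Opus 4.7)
The plan is to derive the lower bound by a Fano-style (or equivalently Assouad-style) reduction from learning to multiple hypothesis testing, separately for each candidate index $t^{*} \in [N+1]$; taking the hardest of these constructions gives the stated $\min_{t}$ form. Fix $t^{*}$. Using $|\H| \geq 3$ and finite VC dimension, select a shattered set $\{x_{1},\dots,x_{\vc}\}$, and for each $\sigma \in \{0,1\}^{\vc}$ pick $h_\sigma \in \H$ with $h_\sigma(x_i)=\sigma_i$; the hypothesis $h_\sigma$ will play the role of $\hstar$ in a distribution $\Pi_{\sigma} \in \M$ constructed below.

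For $s > t^{*}$, I let $P_{(s)}$ be independent of $\sigma$, so those sources carry no information about $\sigma$. For $s \leq t^{*}$ and for $\Qt$, I place marginal mass $\gamma_{(s)}/\vc$ on each $x_i$, with the conditional label at $x_i$ drawn from a Bernoulli with bias $\tfrac{1}{2}+\eta_{(s)}$ toward $\sigma_i$, taking $\eta_{(s)} \asymp \epsilon^{\rho_{(s)}(1-\beta)}$ and $\gamma_{(s)} \asymp \epsilon^{\rho_{(s)}\beta}$ (with the convention $\rho_{(N+1)} = 1$ for $\Qt$). Remaining marginal mass goes on a ``safe'' point on which every $h \in \H$ agrees with $h_\sigma$. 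Conditions (A1)--(A3) are then met by construction: (A1) because $h_\sigma$ is the unique $\argmin_{h \in \H} R_{P_{(s)}}(h)$ and similarly for $\Qt$; (A3) because the Tsybakov-style relation $\gamma_{(s)} \asymp (2\eta_{(s)}\gamma_{(s)})^{\beta}$ enforces the Bernstein condition with the prescribed parameter $\beta$; and (A2) is witnessed tightly, since for any $h_{\sigma'}$ differing from $h_\sigma$ in $k$ coordinates, $\E_{P_{(s)}}(h_{\sigma'};h_\sigma) \asymp \epsilon^{\rho_{(s)}}\,k/\vc$ and $\E_{\Qt}(h_{\sigma'};h_\sigma) \asymp \epsilon\,k/\vc$, whence $\E_\Qt \asymp \E_{P_{(s)}}^{1/\rho_{(s)}}$.

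For the testing step, apply Fano's inequality to a Gilbert--Varshamov packing in $\{0,1\}^{\vc}$ of size $2^{\Omega(\vc)}$ with minimum Hamming separation $\Omega(\vc)$: by the usual reduction from estimation to testing, any learner incurs $\E_{\Qt}(\hat h) \gtrsim \epsilon$ with probability bounded away from zero provided the pairwise KL between packing elements is $\lesssim \vc$. A direct calculation gives per-coordinate KL of order $\sum_{s \leq t^{*}} n_{(s)}\gamma_{(s)}\eta_{(s)}^{2}/\vc \asymp \sum_{s \leq t^{*}} n_{(s)}\epsilon^{\rho_{(s)}(2-\beta)}/\vc$, so the requirement reduces to the single budget $\sum_{s \leq t^{*}} n_{(s)}\epsilon^{\rho_{(s)}(2-\beta)} \lesssim \vc$. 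Taking $\epsilon_{t^{*}}$ to be the largest $\epsilon$ satisfying this inequality, minimizing over $t^{*}$, and absorbing a careful trade-off between packing cardinality and KL budget into a $\Log^{2}(\NN{t^{*}})$ factor yields the bound stated in the theorem; the equivalence between the raw solution of this budget inequality and the $\brho{t^{*}}$-form is essentially the content of the remark following Theorem~\ref{thm:minimax} (at any particular $t^*$ the raw solution actually scales with $\rho_{(t^{*})}$ rather than $\brho{t^{*}}$, but after the outer minimization the two forms agree up to the logarithmic slack).

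I expect the main obstacle to be the simultaneous calibration of (A1)--(A3) at the sharp rate: the parameters $\eta_{(s)}$ and $\gamma_{(s)}$ are jointly constrained by the Bernstein exponent $\beta$ (through $\gamma_{(s)} \asymp (\eta_{(s)}\gamma_{(s)})^{\beta}$) and by the transfer exponent $\rho_{(s)}$ (through their $\epsilon$-dependence), so the constants must be chosen so that the prescribed $C_{\beta},C_{\rho}$ hold uniformly across $s$ \emph{and} across all $h \in \H$, not merely the packing $\{h_\sigma\}$; the latter should follow from a nearest-$h_\sigma$ projection at bounded cost in the constants. The hypothesis $\rho_t \geq 1$ enters exactly here, guaranteeing $\epsilon^{\rho_{(s)}} \leq \epsilon$ and thereby keeping the source-side and target-side scales ordered consistently with (A2); without it, the Bernstein and transfer constraints can become incompatible in this construction. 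Tracking constants through the Fano step then produces the $\Log^{2}$ correction.
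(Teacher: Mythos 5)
Your overall architecture (shattered set, Gilbert--Varshamov packing, per-coordinate KL budget, Tsybakov/Fano reduction) matches the paper's, and your calibration of (A1)--(A3) on the packing points is essentially the paper's computation. The gap is in how you handle the sources $s > t^{*}$, and it is not a small one.

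You propose to make $P_{(s)}$ independent of $\sigma$ for $s > t^{*}$, so those tasks ``carry no information about $\sigma$,'' and your KL budget is then $\sum_{s \leq t^{*}} n_{(s)} \epsilon^{\rho_{(s)}(2-\beta)} \lesssim \vc$. But assumption (A1) forces the \emph{same} $h_\sigma$ to be optimal under every $P_{(s)}$, and (A2) additionally requires $\E_{\Qt}(h) \leq C_\rho\, \E_{P_{(s)}}^{1/\rho_{(s)}}(h)$ for \emph{all} $h \in \H$ with the \emph{given finite} $\rho_{(s)}$. A distribution that is uninformative about $\sigma$ and has all of $\{h_\sigma\}$ simultaneously optimal must (in this construction) assign zero excess risk to every packing classifier, which forces the right-hand side of (A2) to vanish while the left-hand side is of order $\epsilon$. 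Thus (A2) is violated for any finite $\rho_{(s)}$, i.e., for precisely the class $\M$ the theorem quantifies over. You cannot opt those sources out; they must be informative at the prescribed exponent.

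Once you repair this by putting every source into the construction (as the paper does), the KL budget becomes $\sum_{t \in [N+1]} n_t \epsilon^{\rho_t(2-\beta)} \lesssim \vc$, and the real difficulty appears: the late terms $t > t^{*}$ can have arbitrarily large $n_{(t)}$, and your $\epsilon_{t^{*}}$ (chosen from the truncated budget) need not control them. This is exactly where the paper introduces the inflation $\phi(m) = \frac{1}{d}m\Log^2 m$ and chooses a \emph{single} $\epsilon = \min_t \bigl(c_2^{-1}\phi(\NN{t})\bigr)^{-1/(2-\beta)\rho_{(t)}}$; a short calculation using that $t^*$ is the minimizer shows $\epsilon^{\rho_{(t)}(2-\beta)} \leq (c_2^{-1}\phi(\NN{t}))^{-1}$ \emph{for every} $t$, so the full budget is bounded by $c_2\sum_{m\geq 1}\phi(m)^{-1} \asymp c_2 d$, which converges precisely because of the $\Log^2$ term. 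So the $\Log^2$ does not come from a packing-cardinality vs.\ KL trade-off, as you suggest; it comes from summing $1/\phi(m)$ so that one $\epsilon$ simultaneously controls the KL contribution of all $N+1$ tasks. Your proposal, as written, is missing this step and mislocates the source of the logarithmic loss.
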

\begin{proof}
We will prove this result with $C_{\beta} = C_{\rho} = 2$;
the general cases $C_{\beta} \geq 2$ and $C_{\rho} \geq 2$ 
follow immediately (since distributions satisfying 
$C_{\beta} = C_{\rho} = 2$ also satisfy the respective 
conditions for any $C_{\beta} \geq 2$ and $C_{\rho} \geq 2$).
As in the proof of Theorem~\ref{thm:upper}, 
for each $t \in [N+1]$, define 
$\NN{t} \doteq \sum\limits_{s=1}^{t} n_{(s)}$.
We prove the theorem using a standard approach to lower bounds by the 
probabilistic method.
Let $x_{0},x_{1},\ldots,x_{d}$ be a sequence in $\X$ such that, 
$\exists y_{0} \in \Y$ for which every $y_1,\ldots,y_d \in \Y$ 
can be realized by $h(x_1),\ldots,h(x_d)$ for some $h \in \H$ with $h(x_0)=y_0$.
If $\vc = 1$, 
we let $d=1$, and we can find such a sequence since $|\H| \geq 3$; 
if $\vc \geq 2$,
then we let $d=\vc-1$ and any shattered sequence of $\vc$ 
points will suffice.
We now specify a family of $2^d$ distributions, as follows.

Fix $\epsilon \in (0,1)$.
For each $\sigma \in \{-1,1\}^{d}$, for each $t \in [N+1]$, 
let $P_{t}^{\sigma}(\{x_0,y_0\}) = 1 - \epsilon^{\rho_{t} \beta}$, 
$P_{t}^{\sigma}(\{x_0,-y_0\}) = 0$,
and for each $i \in [d]$ and $y \in \{-1,1\}$ let 
$P_{t}^{\sigma}(\{x_i,y\}) = \frac{1}{d} \epsilon^{\rho_{t}\beta}\left( \frac{1}{2} + \frac{y\sigma_i}{2} \epsilon^{\rho_{t}(1-\beta)} \right)$.
In other words, the marginal probability of each $x_i$ is $\frac{1}{d} \epsilon^{\rho_{t}\beta}$ 
and the conditional probability of label $1$ given $x_i$ is 
$\frac{1}{2} + \frac{\sigma_i}{2}\epsilon^{\rho_{t}(1-\beta)}$.

We first verify that $\Pi_{\sigma} = \prod\limits_{t \in [N+1]} P_{t}^{\sigma}$ 
is in $\M$.  For every $\sigma$ and every $t$, 
the Bayes optimal label at $x_0$ is always $y_0$, 
and the Bayes optimal label at $x_i$ ($i \in [d]$) is always $\sigma_i$; 
since this is the same for every $t$, the classifier $\hstar_{\sigma} \doteq h_{P_{N+1}^{\sigma}}^{*}$ 
is optimal under every $P_{t}^{\sigma}$.
Next, we check the Bernstein class condition.
Define $\ell(h,\sigma) \doteq |\{ i \in [d] : h(x_i) \neq \sigma_i\}|$.
For any $t \in [N+1]$ and $h \in \H$, 
we have  
\begin{equation*}
P_{t}^{\sigma}( h \neq \hstar_{\sigma} ) 
= (1 - \epsilon^{\rho_{t}\beta}) \ind{ h(x_0) \neq y_0 } 
+ \frac{\ell(h,\sigma)}{d} \epsilon^{\rho_{t}\beta}
\end{equation*}
while
\begin{equation*}
\E_{P_t^{\sigma}}(h) = 
(1 - \epsilon^{\rho_{t}\beta}) \ind{ h(x_0) \neq y_0 } 
+ \frac{\ell(h,\sigma)}{d} \epsilon^{\rho_{t}}.
\end{equation*}
Thus,
\begin{equation*}
2 \E_{P_t^{\sigma}}^{\beta}(h) \geq 
2 \max\!\left\{ (1 - \epsilon^{\rho_{t}\beta})^{\beta} \ind{ h(x_0) \neq y_0 }, \epsilon^{\rho_{t}\beta} \left(\frac{\ell(h,\sigma)}{d}\right)^{\beta} \right\}
\geq 
P_{t}^{\sigma}( h \neq \hstar_{\sigma} ).
\end{equation*}

Finally, we verify the $\rho_t$ values are satisfied.
First note that any $t$ with $\rho_t = \infty$ trivially satisfies the condition. 
Denote by $\Qt^{\sigma}$ the distribution $P_{N+1}^{\sigma}$.
Then for each $t \in [N]$ with $1 \leq \rho_t < \infty$, and each $h \in \H$, 
\begin{align*}
\E_{\Qt^{\sigma}}(h) & = 
\left( \left( (1 - \epsilon^{\beta}) \ind{ h(x_0) \neq y_0 } 
+ \epsilon \frac{\ell(h,\sigma)}{d} \right)^{\rho_{t}} \right)^{1/\rho_{t}}
\\ & \leq 2 \left( (1 - \epsilon^{\beta})^{\rho_{t}} \ind{ h(x_0) \neq y_0 }
+ \left( \epsilon \frac{\ell(h,\sigma)}{d}\right)^{\rho_{t}} \right)^{1/\rho_{t}}
\leq 2 \E_{P_t^{\sigma}}^{1/\rho_{t}}(h),
\end{align*}
where the final inequality follows from $\rho_t \geq 1$ 
and a bit of calculus to verify that 
$(1-x)^{\rho_{t}} \leq 1-x^{\rho_{t}}$ for all $x \in [0,1]$.

Now the Varshamov-Gilbert bound
(see Proposition~\ref{lem:VGBound} of Appendix~\ref{app:aux-lemmas}) 
implies there exists a subset 
$\{\sigma^{0},\sigma^{1},\ldots,\sigma^{M}\} \subset \{-1,1\}^d$ 
with $M \geq 2^{d/8}$, 
$\sigma^{0} = (1,1,\ldots,1)$, 
and every distinct $i,j$ have 
$\sum_{t} \ind{\sigma^{i}_{t} \neq \sigma^{j}_{t}} \geq \frac{d}{8}$.
Furthermore, for any $i \neq 0$, 
\begin{align*}
{\rm KL}(\Pi_{\sigma^i}\| \Pi_{\sigma^0}) 
& = \sum_{t \in [N+1]} n_{t} {\rm KL}(P_{t}^{\sigma^i} \| P_{t}^{\sigma^0})
= \sum_{t \in [N+1]} n_{t} \epsilon^{\rho_{t}\beta} \frac{1}{d} \sum_{j \in [d]} \ind{ \sigma^{i}_{j} \neq 1 } {\rm KL}( \frac{1}{2} - \frac{1}{2} \epsilon^{\rho_{t}(1-\beta)} \| \frac{1}{2} + \frac{1}{2} \epsilon^{\rho_{t}(1-\beta)} )
\\ & \leq c_3 \sum_{t \in [N+1]} n_{t} \epsilon^{\rho_{t}\beta} \frac{1}{d} \sum_{j \in [d]} \ind{ \sigma^{i}_{j} \neq 1 } \epsilon^{2\rho_{t}(1-\beta)} 
\leq c_3 \sum_{t \in [N+1]} n_{t} \epsilon^{\rho_{t}(2-\beta)}
\end{align*}
for a numerical constant $c_3$, 
where we have used a quadratic approximation of KL 
divergence between Bernoullis (Lemma~\ref{lem:klbound} of Appendix~\ref{app:aux-lemmas}).
Consider any choice of $\epsilon > 0$ making this last expression 
less than $\frac{d}{64}$.

Since $\frac{d}{64} \leq (1/8)\log(M)$,
Theorem 2.5 of \citep*{tsybakov2009introduction} 
(see Proposition~\ref{prop:tsy25} of Appendix~\ref{app:aux-lemmas}) 
implies that for any (possibly randomized) 
estimator $\hat{\sigma} : (\X \times \Y)^{\sum_t {n}_{t}} \to \{-1,1\}^d$, 
there exists a choice of $\sigma$ such that 
a sample $Z \sim \Pi_{\sigma}$ will have 
$| \{ i : \hat{\sigma}(Z)_{i} \neq \sigma_i \} | \geq \frac{d}{16}$
with probability at least $\frac{1}{50}$.

In particular, for any learning algorithm $\hat{h}$, 
we can take $\hat{\sigma} = (\hat{h}(x_1),\ldots,\hat{h}(x_d))$, 
and this result implies that there is a choice of $\sigma$ 
such that, for $\hat{h}$ trained on $Z \sim \Pi_{\sigma}$, 
with probability at least $\frac{1}{50}$, 
there are at least $\frac{d}{16}$ points $x_i$ with 
$\hat{h}(x_i) \neq \sigma_{i} = \hstar_{\sigma}(x_i)$, 
which implies 
$\E_{\Qt^{\sigma}}(\hat{h}) \geq \frac{\epsilon}{16}$.

It remains only to identify an explicit value of $\epsilon$ 
for which 
$\sum_{t \in [N+1]} n_{t} \epsilon^{\rho_{t}(2-\beta)} 
\leq \frac{d}{64}$.

In particular, fix any positive function 
$\phi(m)$ such that $\sum_{m=1}^{\infty} \phi(m)^{-1} \leq c d$ 
for a finite numerical constant $c$: 
for instance, $\phi(m) = \frac{1}{d} m \log^2 m$ 
will suffice to prove the theorem.
Then consider 
\begin{equation*}
\epsilon = \min_{t \in [N+1]} \left( c_2^{-1} \phi(\NN{t})\right)^{- 1 / (2-\beta) \rho_{(t)}}
\end{equation*}
for a numerical constant $c_2 \in (0,1)$.
Denote by $t_{*}$ the value of $t$ obtaining the minimum in this expression.
Then note that every other $t \neq t_{*}$ has 
\begin{equation*}
\left( c_2^{-1} \phi(\NN{t}) \right)^{-1 / \rho_{(t)}}
\geq \left( c_2^{-1} \phi(\NN{t_{*}})\right)^{- 1 / \rho_{(t_{*})}},
\end{equation*}
which implies 
\begin{equation*}
\frac{\rho_{(t)}}{\rho_{(t_{*})}} \geq \frac{\ln\left( c_2^{-1} \phi(\NN{t}) \right)}{\ln\left( c_2^{-1} \phi(\NN{t_{*}}) \right)},
\end{equation*}
so that
\begin{equation*}
\left( c_2^{-1} \phi(\NN{t_{*}}) \right)^{-\rho_{(t)}/\rho_{(t_{*})}}
\leq \left( c_2^{-1} \phi(\NN{t_{*}}) \right)^{-\frac{\ln\left( c_2^{-1} \phi(\NN{t}) \right)}{\ln\left( c_2^{-1} \phi(\NN{t_{*}}) \right)}}
= \left( c_2^{-1} \phi(\NN{t}) \right)^{-1}.
\end{equation*}
Thus, we have 
\begin{equation*}
\sum_{t \in [N+1]} n_{t} \epsilon^{\rho_{t}(2-\beta)} 
\leq \sum_{t \in [N+1]} n_{(t)} \left( c_2^{-1} \phi(\NN{t}) \right)^{-1}
\leq c_2 \sum_{m = 1}^{\infty} \frac{1}{\phi(m)}
\leq c_2 c d
\end{equation*}
for a finite numerical constant $c$.
Thus, choosing any $c_2 < 1 / (64 c)$, 
we have 
$\sum_{t \in [N+1]} n_{t} \epsilon^{\rho_{t}(2-\beta)} < \frac{d}{64}$, as desired.

Altogether, we have that for any learning rule $\hat{h}$, 
there exists $\Pi \in \M$ such that if $\hat{h}$ is trained on $Z \sim \Pi$, 
then with probability at least $1/50$, 
\begin{equation*}
\E_{\Qt}(\hat{h}) \geq \frac{\epsilon}{16} 
= \frac{1}{16} \min_{t \in [N+1]} \left( \frac{c_2}{\phi(\NN{t})} \right)^{1 / (2-\beta) \rho_{(t)}}.
\end{equation*}
In particular, since we always have $\brho{t} \leq \rho_{(t)}$, 
the theorem immediately follows.
\end{proof}

\begin{remark}
We note that it is clear from the proof that 
the $\left( \sum_{s=1}^{t} n_{(s)} \right) \Log^2 \!\left( \sum_{s=1}^{t} n_{(s)} \right)$ denominator in the lower bound is not 
strictly optimal.  It can 
be replaced by any function $\phi'\!\left( \sum_{s=1}^{t} n_{(s)} \right)$ satisfying $\sum_{m=1}^{\infty} \phi'(m)^{-1} < \infty$:
for instance, $\phi'(m) = m\Log(m) (\Log \Log(m))^2$. 
\end{remark}

\begin{remark}
We also note that there exist classes $\H$ for which 
the lower bound can be extended to the full range of $\{\rho_t\}$ (i.e., any values $\rho_t > 0$).
Specifically, in this general case, 
if there exist two points $x_0,x_1 \in \X$ such that 
\emph{all} $h \in \H$ agree on $h(x_0)$ while $\exists h,h' \in \H$ with $h(x_1) \neq h'(x_1)$, 
then by the same construction (with $d=1$) used in the proof we would have, 
with probability at least $1/50$, 
\begin{equation*}
\E_{\Qt}(\hat{h}) > c_1 \min\limits_{t \in [N+1]} \left( \frac{c_2}{\left( \sum_{s=1}^{t} n_{(s)} \right) \Log^2 \!\left( \sum_{s=1}^{t} n_{(s)} \right) } \right)^{1 / (2-\beta) \brho{t}}.
\end{equation*}
\end{remark}

\section{Upper Bound Analysis}\label{sec:upperbound}
\label{sec:upper}

We will in fact establish the upper bound as a bound holding with 
high probability $1-\delta$, for any $\delta \in (0,1)$.
Throughout this subsection, let 
$\M = \M\paren{C_\rho, \braces{{\rho_t}}_{t\in [N]}, \braces{{n}_{t}}_{t \in [N+1]}, C_\beta, \beta}$, for any admissible values of the parameters.  
Let $$t^{*} \doteq \argmin\limits_{t \in [N+1]} C_{\rho} \left( 2^{10} \Aconst^{4} C_{\beta} \frac{\vc \Log \!\left(\frac{1}{\vc} \sum_{s=1}^{t} n_{(s)} \right)  + \log(1/\delta)}{\sum_{s=1}^{t} n_{(s)} } \right)^{1/(2-\beta)\brho{t}}, $$
for $\Aconst$ as in Lemma~\ref{lem:uniform-bernstein} below, 
and for $\delta \in (0,1)$. The oracle procedure just returns $\hat{h}_{Z^{(t^*)}}$, the ERM 
over $Z^{(t^*)}$.

We have the following theorem.

\begin{theorem}
\label{thm:upper}
For any 
$\Pi \in \M$, 
for any $\delta \in (0,1)$, with probability at least $1-\delta$, we have 
\begin{equation*}
\E_{Q_t}(\hat{h}_{Z^{(t^*)}}) \leq \min_{t \in [N+1]} C_{\rho} \left( C \frac{\vc \Log\left(\frac{1}{\vc}\sum_{s =1}^t n_{(s)}\right) + \Log(1/\delta)}{\sum_{s = 1}^{t} n_{(s)}} \right)^{1/ (2-\beta) \brho{t}}
\end{equation*}
for a constant $C = 2^{10} \Aconst^4 C_{\beta}$, 
where $\Aconst$ is a numerical constant from Lemma~\ref{lem:uniform-bernstein} below.
\end{theorem}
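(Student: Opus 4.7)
The plan is to analyze $\hat{h}_{Z^{(t^*)}}$ for the fixed oracle choice $t^*$ and obtain the bound at that single index; the $\min$ over $t \in [N+1]$ then follows since $t^*$ is itself the minimizer. Throughout, write $\alpha_{(s)} = n_{(s)} / \NN{t^*}$ and $\E_\alpha(h) \doteq \sum_{s \leq t^*} \alpha_{(s)} \E_{P_{(s)}}(h)$ for the weighted average excess risk across the selected sources. Set $\epsilon_N \asymp \bigl(\vc \Log(\NN{t^*}/\vc) + \log(1/\delta)\bigr)/\NN{t^*}$.

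First, I would apply the uniform Bernstein inequality for non-identically distributed samples (Lemma \ref{lem:uniform-bernstein}) to the class of loss-difference functions $(x,y) \mapsto \ind{h(x)\neq y} - \ind{\hstar(x)\neq y}$ indexed by $h\in \H$. Because $\hstar$ is best-in-class under every $P_{(s)}$ by (A1), the expectations of these functions under each $P_{(s)}$ equal $\E_{P_{(s)}}(h) \geq 0$. The variance of each indicator difference is bounded by $P_{(s)}(h \neq \hstar)$, so, with probability at least $1-\delta$, uniformly over $h$,
\begin{equation*}
\E_\alpha(h) \leq \hat{\E}_{Z^{(t^*)}}(h;\hstar) + \Aconst \sqrt{ \Bigl(\textstyle\sum_s \alpha_{(s)} P_{(s)}(h \neq \hstar)\Bigr) \epsilon_N } + \Aconst \epsilon_N.
\end{equation*}
The Bernstein class condition (A3) gives $P_{(s)}(h \neq \hstar) \leq C_\beta \E_{P_{(s)}}^\beta(h)$, and since $x\mapsto x^\beta$ is concave for $\beta \in [0,1]$, Jensen's inequality yields $\sum_s \alpha_{(s)} \E_{P_{(s)}}^\beta(h) \leq \E_\alpha^\beta(h)$. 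Hence the variance average is at most $C_\beta \E_\alpha^\beta(h)$.

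Applied to the ERM $\hat{h} \doteq \hat{h}_{Z^{(t^*)}}$, the empirical excess risk term is $\leq 0$, leaving
\begin{equation*}
\E_\alpha(\hat h) \leq \Aconst \sqrt{ C_\beta \E_\alpha^\beta(\hat h)\, \epsilon_N } + \Aconst \epsilon_N.
\end{equation*}
Solving this standard fixed-point inequality (treating the two cases depending on which term dominates) gives $\E_\alpha(\hat h) \lesssim (C_\beta \epsilon_N)^{1/(2-\beta)}$, absorbing constants into the $\Aconst^2$ factor that appears in the statement.

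Finally, I would convert the $\E_\alpha$ bound into an $\E_{\Qt}$ bound using the transfer exponents. From (A2), $\E_{P_{(s)}}(h) \geq \bigl(\E_{\Qt}(h)/C_\rho\bigr)^{\rho_{(s)}}$. Assuming WLOG that $\E_{\Qt}(\hat h)/C_\rho \leq 1$ (otherwise the stated bound is trivial), the function $\rho \mapsto x^\rho$ is convex in $\rho$ for $x \in (0,1]$, so Jensen's inequality in the other direction gives
\begin{equation*}
\E_\alpha(\hat h) \geq \sum_{s} \alpha_{(s)} \bigl(\E_{\Qt}(\hat h)/C_\rho\bigr)^{\rho_{(s)}} \geq \bigl(\E_{\Qt}(\hat h)/C_\rho\bigr)^{\brho{t^*}}.
\end{equation*}
Rearranging yields $\E_{\Qt}(\hat h) \leq C_\rho \bigl(C \cdot \epsilon_N\bigr)^{1/(2-\beta)\brho{t^*}}$, which by the defining choice of $t^*$ equals the minimum over $t \in [N+1]$ stated in the theorem.

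The principal technical obstacle is step one, namely obtaining the uniform Bernstein concentration across non-identically distributed blocks $Z_{(1)},\ldots,Z_{(t^*)}$ with the correct variance-scaled term $\sqrt{(\sum_s \alpha_{(s)} P_{(s)}(h\neq \hstar))\epsilon_N}$ rather than a crude worst-case variance; this is precisely what Lemma \ref{lem:uniform-bernstein} is designed to deliver. The two Jensen applications (concavity of $x^\beta$ on the variance side, convexity of $x^\rho$ on the transfer side) are what produce the clean dependence on $\brho{t^*}$ rather than on the individual $\rho_{(s)}$'s.
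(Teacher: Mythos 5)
Your proposal is correct and follows essentially the same route as the paper: both apply the non-i.i.d. uniform Bernstein inequality (Lemma~\ref{lem:uniform-bernstein}) to control the weighted excess risk $\E_{\bar{P}}$, use the Bernstein class condition with Jensen's inequality (concavity of $x\mapsto x^\beta$) to get the localized variance term and solve the fixed point, and then convert to $\E_{\Qt}$ via the transfer exponents and a second Jensen argument giving the $\brho{t^*}$ exponent. The only cosmetic difference is that you compare the ERM directly against $\hstar$ (since $\hat{\E}_{Z^{(t^*)}}(\hat{h};\hstar)\leq 0$), whereas the paper factors this step into the reusable Lemma~\ref{lem:Ealpha-bound}, which is phrased relative to $\hat{h}_{Z^I}$ so it can also serve Theorem~\ref{thm:semi-adaptive} where a direct comparison to the unknown $\hstar$ is unavailable; for the ERM your inlined version suffices and even yields a slightly better constant.
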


The proof will rely on the following lemma: a uniform Bernstein inequality
for independent but non-identically distributed data.  
Results of this type are well known for i.i.d.\ data  
(see e.g., \cite{koltchinskii:06}). 
For completeness, we include a proof of the extension 
to non-identically distributed data 
in Appendix~\ref{app:uniform-bernstein}.
The main technical modification of the proof, compared to the i.i.d.\ case,  
is employing a generalization of Bousquet's inequality 
for non-identical data, due to \cite{klein2005concentration}.

\begin{lemma}
\label{lem:uniform-bernstein}
(Uniform Bernstein inequality for non-identical distributions)~~
For any $m \in \nats$ 
and $\delta \in (0,1)$,
define 
$\varepsilon(m,\delta) \doteq \frac{\vc}{m}\Log\!\left(\frac{m}{\vc}\right)+\frac{1}{m}\Log\!\left(\frac{1}{\delta}\right)$
and
let $S=\{(X_1,Y_1),\ldots,(X_m,Y_m)\}$ 
be independent samples.
With probability at least $1-\delta$,
$\forall h,h' \in \H$, 
\begin{equation}
\label{eqn:bernstein-1}
\EE\!\left[ \hat{\E}_{S}(h;h')\right]
\leq \hat{\E}_{S}(h;h')  + \Aconst\sqrt{ \min\!\left\{\EE\!\left[\hat{\prob}_{S}(h \neq h') \right],\hat{\prob}_{S}(h \neq h') \right\} \Abound(m,\delta)} + \Aconst \Abound(m,\delta),
\end{equation}
and
\begin{equation}
\label{eqn:bernstein-2}
\frac{1}{2}\EE\!\left[\hat{\prob}_{S}( h \neq h' ) \right] - \Aconst \Abound(m,\delta)
\leq \hat{\prob}_{S}( h \neq h' ) 
\leq 2 \EE\!\left[ \hat{\prob}_{S}( h \neq h' ) \right] + \Aconst \Abound(m,\delta),
\end{equation}
for a universal numerical constant $\Aconst \in (0,\infty)$.
\end{lemma}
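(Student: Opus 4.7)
The plan is to derive both inequalities from a single uniform concentration statement, applied in parallel to the excess-loss class $\mathcal{F} \doteq \{(x,y) \mapsto \ind{h(x)\neq y} - \ind{h'(x)\neq y} : h,h' \in \H\}$ and to the disagreement class $\mathcal{G} \doteq \{(x,y) \mapsto \ind{h(x)\neq h'(x)} : h,h' \in \H\}$. Both classes are VC-subgraph with dimension $O(\vc)$, so Haussler's bound yields $L_2(\mu)$-covering numbers of order $(1/\epsilon)^{O(\vc)}$ uniformly in the probability measure $\mu$, regardless of whether the $(X_i,Y_i)$ share a common distribution.

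First I would establish \eqref{eqn:bernstein-2}. Apply the Talagrand-type inequality of \cite{klein2005concentration} for suprema of empirical processes of independent, not necessarily identically distributed, bounded random variables to the class $\mathcal{G}$: this gives a Bernstein-type deviation for $\sup_{h,h' \in \H} \bigl|\hat{\prob}_S(h\neq h') - \EE[\hat{\prob}_S(h\neq h')]\bigr|$ around its expectation of order $\sqrt{V_m \log(1/\delta)/m} + \log(1/\delta)/m$, where the variance proxy $V_m = \sup_{h,h'} m^{-1}\sum_i \var_i(\ind{h(X_i)\neq h'(X_i)})$ is bounded above by $\sup_{h,h'} \EE[\hat{\prob}_S(h\neq h')]$ since the indicators are in $\{0,1\}$. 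Symmetrization followed by Dudley's chaining entropy integral, using the VC covering bound, controls the expected supremum by a term of order $\sqrt{V_m \vc \Log(m/\vc)/m}$. Standard peeling over geometric bands $\EE[\hat{\prob}_S(h\neq h')] \in [2^k/m, 2^{k+1}/m)$, with a union bound over the $O(\Log m)$ active bands absorbed into the logarithmic factor already present in $\Abound(m,\delta)$, produces the two-sided inequalities of \eqref{eqn:bernstein-2}.

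Next I would prove \eqref{eqn:bernstein-1}, starting with the variant where $\EE[\hat{\prob}_S(h\neq h')]$ appears inside the $\min$. Repeat the same machinery on $\mathcal{F}$, exploiting the pointwise envelope $|f_{h,h'}(x,y)| \leq \ind{h(x)\neq h'(x)}$, which gives $\var_i(f_{h,h'}) \leq \prob_i(h\neq h')$, so the variance proxy is again $\EE[\hat{\prob}_S(h\neq h')]$. Klein's inequality combined with chaining and peeling yields a deviation of order $\sqrt{\EE[\hat{\prob}_S(h\neq h')]\,\Abound(m,\delta)} + \Abound(m,\delta)$ for $\EE[\hat{\E}_S(h;h')] - \hat{\E}_S(h;h')$, uniformly over $h,h' \in \H$. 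To obtain the empirical-variance version, substitute the upper half of \eqref{eqn:bernstein-2} into the variance term, replacing $\EE[\hat{\prob}_S(h\neq h')]$ by $2\hat{\prob}_S(h\neq h') + \Aconst\Abound(m,\delta)$; the extra $\Abound(m,\delta)$ contribution is absorbed into the constant. Taking the smaller of the two bounds gives the $\min$ in \eqref{eqn:bernstein-1}.

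The main obstacle is substituting Bousquet's inequality, which relies on i.i.d.\ samples, with Klein's non-identical analogue while keeping the peeling clean: one must define the variance bands via the sum $\sum_i \prob_i(h\neq h')$ rather than a single population measure, and verify that the intermediate symmetrization and chaining steps invoke only independence (they do, since symmetrization rests on Rademacher averages and Dudley's entropy integral is measure-agnostic once covering numbers are controlled uniformly in $\mu$). Everything else transfers essentially verbatim from the i.i.d. treatment as in \cite{koltchinskii:06}.
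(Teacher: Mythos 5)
Your proposal follows essentially the same route as the paper's proof: Klein's non-i.i.d.\ generalization of Bousquet's inequality, an entropy/Rademacher bound on the expected supremum, peeling over geometric variance bands, applied in parallel to the excess-loss class and to the disagreement class (the paper reaches the disagreement class by passing to $\{f^2\}$ and $\{-f^2\}$, with $f_{h,h'}^2 = \ind{h\neq h'}$, which is the same object), then substituting the variance concentration into the population-variance bound to obtain the empirical-variance version. One small slip: the substitution $\EE[\hat{\prob}_S(h\neq h')]\mapsto 2\hat{\prob}_S(h\neq h') + \Aconst\Abound(m,\delta)$ comes from the \emph{left} (lower) inequality in \eqref{eqn:bernstein-2}, not the ``upper half'' as you wrote, though the replacement itself is the correct one.
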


In particular, we will use this lemma via the 
following implication.

\begin{lemma}
\label{lem:Ealpha-bound}
For any $\Pi$ as in Theorem~\ref{thm:upper},
for any $I \subseteq [N+1]$, 
letting $\NNI{I} \doteq \sum\limits_{t \in I} n_{t}$
and $\bar{P}_I \doteq \NNI{I}^{-1} \sum_{t \in I} n_{t} P_t$,
for any $\delta \in (0,1)$, 
on the event (of probability at least $1-\delta$) 
from Lemma~\ref{lem:uniform-bernstein}
(for $S = Z^{I}$ there), 
for any $h \in \H$ satisfying 
\begin{equation}
\label{eqn:general-constraint}
\hat{\E}_{Z^{I}}(h; \hat{h}_{Z^{I}}) 
\leq \Aconst\sqrt{ \hat{\prob}_{Z^{I}}(h \neq \hat{h}_{Z^{I}}) \Abound(\NNI{I},\delta)} + \Aconst \Abound(\NNI{I},\delta),
\end{equation}
it holds that
\begin{equation*}
\E_{\bar{P}_I}(h) 
\leq 32 \Aconst^2 \left( C_{\beta} \Abound(\NNI{I},\delta) \right)^{1/(2-\beta)}.
\end{equation*}
\end{lemma}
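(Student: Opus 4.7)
}

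The plan is to reduce the lemma to the familiar single-distribution localized fixed-point argument, with the mixture $\bar P_I$ playing the role of the ``single'' distribution on which everything is ultimately analyzed. Two preliminary observations make this reduction clean. First, because (A1) gives a common $\hstar \in \argmin_h R_{P_t}(h)$ for every $t \in I$, $\hstar$ also minimizes $R_{\bar P_I}$, so $\E_{\bar P_I}(h) = R_{\bar P_I}(h) - R_{\bar P_I}(\hstar)$ and $\hstar_{\bar P_I}$ can be taken to equal $\hstar$. Second, by (A3) and concavity of $x \mapsto x^\beta$ on $[0,1]$ (Jensen's inequality applied to the weights $\alpha_t = n_t / \NNI{I}$),
\begin{equation*}
\bar P_I(h \neq \hstar) = \sum_{t \in I} \alpha_t P_t(h \neq \hstar) \le C_\beta \sum_{t \in I} \alpha_t \E_{P_t}^\beta(h) \le C_\beta \E_{\bar P_I}^\beta(h),
\end{equation*}
so $\bar P_I$ itself satisfies a Bernstein class condition with the same $(C_\beta,\beta)$.

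Next I would work on the favorable event from Lemma~\ref{lem:uniform-bernstein} applied to $S = Z^I$ with $m = \NNI{I}$, noting that $\EE[\hat R_{Z^I}(h)] = R_{\bar P_I}(h)$ and $\EE[\hat\prob_{Z^I}(h\neq h')] = \bar P_I(h\neq h')$. Decompose
\begin{equation*}
\E_{\bar P_I}(h) \;\le\; \E_{\bar P_I}(\hat h_{Z^I}) \;+\; \E_{\bar P_I}(h;\hat h_{Z^I}).
\end{equation*}
For the first summand, use $\hat\E_{Z^I}(\hat h_{Z^I};\hstar) \le 0$ (ERM property) together with (\ref{eqn:bernstein-1}) and the Bernstein class condition for $\bar P_I$ to get the self-bounding inequality
\begin{equation*}
\E_{\bar P_I}(\hat h_{Z^I}) \;\le\; \Aconst\sqrt{C_\beta\,\E_{\bar P_I}^\beta(\hat h_{Z^I})\,\Abound(\NNI{I},\delta)} \;+\; \Aconst\Abound(\NNI{I},\delta),
\end{equation*}
whose standard fixed-point resolution yields $\E_{\bar P_I}(\hat h_{Z^I}) \lesssim \Aconst^2 (C_\beta \Abound(\NNI{I},\delta))^{1/(2-\beta)}$.

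For the second summand I would invoke (\ref{eqn:bernstein-1}) and the hypothesis (\ref{eqn:general-constraint}) on $h$ to obtain
\begin{equation*}
\E_{\bar P_I}(h;\hat h_{Z^I}) \;\le\; 2\Aconst\sqrt{\hat\prob_{Z^I}(h\neq \hat h_{Z^I})\,\Abound} + 2\Aconst\Abound,
\end{equation*}
then pass to the population quantity via (\ref{eqn:bernstein-2}), split $\bar P_I(h\neq \hat h_{Z^I}) \le \bar P_I(h\neq\hstar)+\bar P_I(\hat h_{Z^I}\neq \hstar)$, apply the Bernstein condition to each, and plug in the previous bound on $\E_{\bar P_I}(\hat h_{Z^I})$. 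The cross-term $\sqrt{C_\beta \Abound \cdot (C_\beta\Abound)^{\beta/(2-\beta)}}$ simplifies to exactly $(C_\beta\Abound)^{1/(2-\beta)}$, so combining everything produces
\begin{equation*}
\E_{\bar P_I}(h) \;\lesssim\; \Aconst^2(C_\beta\Abound)^{1/(2-\beta)} \;+\; \Aconst\sqrt{C_\beta\Abound\,\E_{\bar P_I}^\beta(h)},
\end{equation*}
which is again a self-bounding inequality in $\E_{\bar P_I}(h)$ of the form $A \le c_1 B + c_2 B^{(2-\beta)/2} A^{\beta/2}$ with $B = (C_\beta\Abound)^{1/(2-\beta)}$; a one-line case analysis (separating which term dominates) gives $A \lesssim B$.

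The main obstacle, in my view, is purely bookkeeping: the factor $32\Aconst^2$ in the statement is tight enough that one has to be careful with each constant accumulated in the three $\sqrt{\cdot}$-steps (the $\min\{\mathbb{E}[\hat\prob_S],\hat\prob_S\}$ choice in (\ref{eqn:bernstein-1}), the factor $2$ in (\ref{eqn:bernstein-2}), and the resolution of the two self-bounding inequalities). The conceptual content — Jensen to transport the Bernstein condition to $\bar P_I$, ERM optimality, and two localized fixed-point arguments — is standard; the only delicate step is verifying that both self-bounding steps can be solved with a single explicit constant bounded by $32\Aconst^2$.
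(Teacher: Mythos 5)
Your proposal follows essentially the same approach as the paper's proof: Jensen's inequality to transport the Bernstein class condition to $\bar P_I$ (the paper's \eqref{eqn:alpha-beta-relax}), the decomposition $\E_{\bar P_I}(h) = \E_{\bar P_I}(h;\hat h_{Z^I}) + \E_{\bar P_I}(\hat h_{Z^I})$, a first self-bounding inequality for $\E_{\bar P_I}(\hat h_{Z^I})$ via ERM optimality, then a second self-bounding inequality for $\E_{\bar P_I}(h)$ obtained via \eqref{eqn:bernstein-1}, \eqref{eqn:general-constraint}, \eqref{eqn:bernstein-2}, and the triangle inequality. The remaining work is, as you note, constant bookkeeping, and the paper resolves it to the stated $32\Aconst^2$.
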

\begin{proof}
On the event from Lemma~\ref{lem:uniform-bernstein} 
for $S = Z^{I}$, 
it holds that $\forall h,h' \in \H$, 
(since $\E_{\bar{P}_I}(h;h')=\EE[\hat{\E}_{Z^I}(h;h')]$) 
\begin{equation}
\label{eqn:alpha-bernstein-1}
\E_{\bar{P}_I}(h;h') 
\leq \hat{\E}_{Z^{I}}(h;h')
 + \Aconst \sqrt{\min\!\left\{\bar{P}_{I}( h \neq h' ), \hat{\prob}_{Z^{I}}( h \neq h' ) \right\} \Abound(\NNI{I},\delta) } 
+ \Aconst \Abound(\NNI{I},\delta)
\end{equation}
and
\begin{equation}
\label{eqn:alpha-bernstein-2}
\hat{\prob}_{Z^{I}}( h \neq h' ) 
\leq 2 \bar{P}_{I}( h \neq h' ) + \Aconst \Abound(\NNI{I},\delta).
\end{equation}
Suppose this event occurs.

By the Bernstein class condition and Jensen's 
inequality, for any $h \in \H$ we have
\begin{equation}
\label{eqn:alpha-beta-relax}
\bar{P}_{I}( h \neq \hstar ) 
\leq C_{\beta} \NNI{I}^{-1} \sum_{t \in I} n_{t} \E_{P_t}^{\beta}(h)
\leq C_{\beta} \left( \E_{\bar{P}_{I}}(h) \right)^{\beta}.
\end{equation}
Furthermore, applying \eqref{eqn:alpha-bernstein-1} 
with $h = \hat{h}_{Z^{I}}$ and $h' = \hstar$
reveals that 
\begin{equation*}
\E_{\bar{P}_{I}}(\hat{h}_{Z^{I}}) 
\leq \Aconst \sqrt{\bar{P}_{I}( \hat{h}_{Z^{I}} \neq \hstar ) \Abound(\NNI{I},\delta) } 
+ \Aconst \Abound(\NNI{I},\delta).
\end{equation*}
Combining this with \eqref{eqn:alpha-beta-relax} (for $h = \hat{h}_{Z^{I}}$) implies that 
\begin{equation}
\E_{\bar{P}_{I}}(\hat{h}_{Z^{I}}) 
\leq \Aconst \sqrt{ C_{\beta} \left( \E_{\bar{P}_{I}}(\hat{h}_{Z^{I}}) \right)^{\beta} \Abound(\NNI{I},\delta) } 
+ \Aconst \Abound(\NNI{I},\delta),
\end{equation} 
which implies 
\begin{equation}
\label{eqn:alpha-hstar-bound-1}
\E_{\bar{P}_{I}}(\hat{h}_{Z^{I}}) 
\leq 2\Aconst \left( C_{\beta} \Abound(\NNI{I},\delta) \right)^{1/(2-\beta)}.
\end{equation}

Next, applying \eqref{eqn:alpha-bernstein-1} 
with any $h \in \H$ satisfying 
\eqref{eqn:general-constraint} 
and with $h' = \hat{h}_{Z^{I}}$ 
implies 
\begin{equation}
\label{eqn:alpha-hstar-bound-2}
\E_{\bar{P}_I}(h;\hat{h}_{Z^{I}}) 
\leq 
2 \Aconst\sqrt{ \hat{\prob}_{Z^{I}}(h \neq \hat{h}_{Z^{I}}) \Abound(\NNI{I},\delta)} + 2\Aconst \Abound(\NNI{I},\delta),
\end{equation}
and \eqref{eqn:alpha-bernstein-2} 
implies the right hand side is at most
(after some simplifications of the resulting expression) 
\begin{equation}
\label{eqn:alpha-hstar-bound-3}
2 \Aconst\sqrt{ 2 \bar{P}_{I}(h \neq \hat{h}_{Z^{I}}) \Abound(\NNI{I},\delta)} + 2\Aconst(1 + \sqrt{\Aconst}) \Abound(\NNI{I},\delta).
\end{equation}
By the triangle inequality and 
\eqref{eqn:alpha-beta-relax}, 
together with 
\eqref{eqn:alpha-hstar-bound-1}, 
we have 
\begin{align*}
\bar{P}_{I}(h \neq \hat{h}_{Z^{I}})
\leq \bar{P}_{I}(h \neq \hstar) + \bar{P}_{I}(\hat{h}_{Z^{I}} \neq \hstar)
& \leq C_{\beta} \left( \E_{\bar{P}_{I}}(h) \right)^{\beta}
+ C_{\beta} \left( \E_{\bar{P}_{I}}(\hat{h}_{Z^{I}}) \right)^{\beta}
\\ & \leq C_{\beta} \left( \E_{\bar{P}_{I}}(h) \right)^{\beta} 
+ C_{\beta} (2\Aconst)^{\beta} \left( C_{\beta} \Abound(\NNI{I},\delta) \right)^{\beta/(2-\beta)}.
\end{align*}
Combining this with 
\eqref{eqn:alpha-hstar-bound-3} and \eqref{eqn:alpha-hstar-bound-2} 
implies (with some simplification of the 
resulting expression) 
\begin{equation*}
\E_{\bar{P}_I}(h;\hat{h}_{Z^{I}}) 
\leq 
2 \Aconst \sqrt{ 2 C_{\beta} \left( \E_{\bar{P}_{I}}(h) \right)^{\beta} \Abound(\NNI{I},\delta) }
+ 8 \Aconst^2 \left( C_{\beta} \Abound(\NNI{I},\delta) \right)^{1/(2-\beta)}.
\end{equation*}
Since $\E_{\bar{P}_I}(h) = \E_{\bar{P}_I}(h;\hat{h}_{Z^{I}}) + \E_{\bar{P}_{I}}(\hat{h}_{Z^{I}})$, 
together with \eqref{eqn:alpha-hstar-bound-1} 
this implies  
\begin{equation*}
\E_{\bar{P}_I}(h)
\leq 
2 \Aconst \sqrt{ 2 C_{\beta} \left( \E_{\bar{P}_{I}}(h) \right)^{\beta} \Abound(\NNI{I},\delta) }
+ 10 \Aconst^2 \left( C_{\beta} \Abound(\NNI{I},\delta) \right)^{1/(2-\beta)}.
\end{equation*}
In particular, this inequality immediately implies the 
claimed inequality 
in the lemma.
\end{proof}

We now present the proof of Theorem~\ref{thm:upper}.

\begin{proof}[Proof of Theorem~\ref{thm:upper}]
For each $t \in [N+1]$, define 
$\NN{t} \doteq \sum\limits_{s=1}^{t} n_{(s)}$.
For brevity, let $\bar{\rho} = \bar{\rho}_{t^*}$ 
and $\hat{h} = \hat{h}_{Z^{(t^*)}}$. 
Let $\bar{P} = \NN{t^*}^{-1} \sum_{t=1}^{t^*} n_{(t)} P_{(t)}$.
First note that 
\begin{equation*}
\E_{\bar{P}}(\hat{h}) 
\geq \frac{1}{\NN{t^*}} \sum_{t=1}^{t^*} n_{(t)} \left( C_{\rho}^{-1} \E_{\Qt}(\hat{h}) \right)^{\rho_{(t)}} 
\geq \left( C_{\rho}^{-1} \E_{\Qt}(\hat{h}) \right)^{\bar{\rho}}, 
\end{equation*}
where the final inequality is due to Jensen's inequality.
In particular, this implies
\begin{equation}
\label{eqn:upper-step-1}
\E_{\Qt}(\hat{h}) 
\leq C_{\rho} \E_{\bar{P}}^{1/\bar{\rho}}(\hat{h}),
\end{equation}
so that it suffices to upper bound the expression on the right hand side.
Toward this end, note that 
$\hat{h}$ trivially satisfies 
\eqref{eqn:general-constraint} 
for $I = \{(1),\ldots,(t^*)\}$.
Therefore, Lemma~\ref{lem:Ealpha-bound} implies 
that with probability at least $1-\delta$, 
\begin{equation*}
\E_{\bar{P}}(\hat{h}) 
\leq 32 \Aconst^2 (C_{\beta} \Abound(\NN{t^*},\delta))^{1/(2-\beta)}.
\end{equation*}
Combining this with \eqref{eqn:upper-step-1} 
immediately implies the theorem. 
\end{proof}

\begin{remark}
In particular, the upper bound for Theorem~\ref{thm:minimax} 
follows from Theorem~\ref{thm:upper} 
by plugging in 
$\delta = 1/\sum_{s=1}^{t^*} n_{(s)}$.
\end{remark}

\section{Partially Adaptive Procedures}
\label{sec:semiadaptive}

Fix 
any $N$, $C_{\rho}$, $\{\rho_t\}_{t \in [N]}$, $\{{n}_{t}\}_{t \in [N+1]}$, 
$C_{\beta}$, and $\beta \in [0,1]$, 
and let 
$\M = \M\paren{C_\rho, \braces{{\rho_t}}_{t\in [N]}, \braces{{n}_{t}}_{t \in [N+1]}, C_\beta, \beta}$. 

\subsection{Pooling Under Low Noise $\beta =1$}\label{sec:semiadaptive1}

We now present the proof of Theorem \ref{thm:beta-1-union} which states the near-optimality of \emph{pooling}, independent of the choice of target $\Qt$, whenever $\beta =1$.

\begin{proof}[Proof of Theorem \ref{thm:beta-1-union}]
For any $t \in [N+1]$, let 
$\NN{t} \doteq \sum_{s=1}^{t} n_{(s)}$ 
and 
$\bar{P}_{t} \doteq (\NN{t})^{-1}\sum_{s=1}^{t} n_{(s)} P_{(s)}$.
Suppose the event from Lemma~\ref{lem:uniform-bernstein} holds
for $Z=Z^{(N+1)}$, 
which occurs with probability at least $1-\delta$.
By Lemma~\ref{lem:Ealpha-bound}, we know 
that on this event, 
\begin{equation*}
\E_{\bar{P}_{N+1}}(\hat{h}_{Z}) 
\leq 32 \Aconst^2 C_{\beta} \cdot \Abound(\NN{N+1},\delta).
\end{equation*}
Combining this with the definition of $\rho_t$, 
we have 
\begin{equation*}
\NN{N+1}^{-1} \sum_{t \in [N+1]} {n}_{t} \left( C_{\rho}^{-1} \E_{\Qt}(\hat{h}_{Z}) \right)^{\rho_t} 
\leq 32 \Aconst^2 C_{\beta} \cdot \Abound(\NN{N+1},\delta).
\end{equation*}
Since the left hand side is monotonic in $\E_{\Qt}(\hat{h}_{Z})$, 
if we wish to bound $\E_{\Qt}(\hat{h}_{Z})$
by some value $\epsilon$, 
it suffices to take any value of $\epsilon$ such that 
\begin{equation*}
    \NN{N+1}^{-1} \sum_{t \in [N+1]} {n}_{t} \left( C_{\rho}^{-1} \epsilon \right)^{\rho_t} 
\geq 32 \Aconst^2 C_{\beta} \cdot \Abound(\NN{N+1},\delta),
\end{equation*}
or more simply
\begin{equation*}
\sum_{t \in [N+1]} {n}_{t} \left( C_{\rho}^{-1} \epsilon \right)^{\rho_t} 
\geq 32 \Aconst^2 C_{\beta} \left( \vc \log\!\left( \frac{\NN{N+1}}{\vc}  \right) + \log\!\left(\frac{1}{\delta}\right) \right).
\end{equation*}
In particular, let us take 
\begin{equation*}
\epsilon \doteq C_{\rho} \left( 32 \Aconst^2 C_{\beta} \frac{ \vc \log\!\left( \NN{N+1} / \vc \right) + \log(1/\delta)}{\NN{t^*}} \right)^{1/\brho{t^*}} 
\end{equation*}
where $t^*$ is the value of $t$ that minimizes the 
right hand side of this definition of $\epsilon$.
Then
\begin{align*}
\sum_{t \in [N+1]} {n}_{t} \left( C_{\rho}^{-1} \epsilon \right)^{\rho_t} 
&\geq \NN{t^*} \sum_{t = 1}^{t^*} \frac{n_{(t)}}{\NN{t^*}} \left( C_{\rho}^{-1} \epsilon \right)^{\rho_{(t)}}
 \geq \NN{t^*} (C_{\rho}^{-1} \epsilon)^{\brho{t^*}}
\\ & = 32 \Aconst^2 C_{\beta} \left( \vc \log\!\left( \NN{N+1} / \vc \right) + \log(1/\delta) \right).
\end{align*}
\end{proof}

\subsection{Aggregation Under Ranking Information}\label{sec:semiadaptive2}

\begin{proof}[Proof of Theorem \ref{thm:semi-adaptive}]
For any $t \in [N+1]$ let  
$\bar{P}_t = \NN{t}^{-1} \sum_{s=1}^{t} n_{(s)} P_{(s)}$.
Let $t^*$ be as in Theorem~\ref{thm:upper}, 
and by the same argument 
from the proof of Theorem~\ref{thm:upper}, 
\eqref{eqn:upper-step-1} holds for $\hat{h}$ in the present context as well (where $\bar{P}$ there is $\bar{P}_{t^*}$ 
in the present notation).
Thus, the theorem will be proven if we can bound 
$\E_{\bar{P}_{t^*}}(\hat{h})$
by $\left(c C_{\beta} \Abound(\NN{t^*},\delta_{t^*})\right)^{1/(2-\beta)}$
for some numerical constant $c$.

By a union bound, with probability at least $1 - \sum_{t=1}^{N+1} \delta_{t} \geq 1-\delta$, 
for every $t \in [N+1]$, 
the event from Lemma~\ref{lem:uniform-bernstein} 
holds for $S = Z^{(t)}$ (and with $\delta_t$ in place 
of $\delta$ there).
In particular, this implies that for every $t \in [N+1]$, 
\begin{equation*}
\hat{\E}_{Z^{(t)}}(\hstar;\hat{h}_{Z^{(t)}}) 
\leq \Aconst \sqrt{ \hat{\prob}_{Z^{(t)}}( \hstar \neq \hat{h}_{Z^{(t)}} ) \Abound(\NN{t},\delta_{t}) } + \Aconst \Abound(\NN{t},\delta_{t}),
\end{equation*}
so that there do exist classifiers $h$ in $\H$
satisfying \eqref{eqn:rank-constraints}, 
and hence $\hat{h}$ satisfies \eqref{eqn:rank-constraints}.
In particular, this implies $\hat{h}$ satisfies the 
inequality in \eqref{eqn:rank-constraints} for $t=t^*$.
By Lemma~\ref{lem:Ealpha-bound}, this implies that 
on this same event from above, it holds that 
\begin{equation*}
\E_{\bar{P}_{t^*}}(\hat{h}) 
\leq 32 \Aconst^2 (C_{\beta} \Abound(\NN{t^*},\delta_{t^*}))^{1/(2-\beta)},
\end{equation*}
which completes the proof 
(for instance, taking $c = 2^{10} \Aconst^4$).
\end{proof}

\section{Impossibility of Adaptivity over Multisource Classes $\M$} \label{sec:nonadaptive}

Theorem \ref{thm:nonadaptivity} follows as corollary to Theorem \ref{thm:nonadaptivityfull}, the main result of this section. In particular, the second part of Theorem \ref{thm:nonadaptivity} follows from the condition on $\M$ that it contains enough tasks with $\rho_t =1$, and calling on Theorem \ref{thm:semi-adaptive}.

\begin{theorem}[Impossibility of Adaptivity] \label{thm:nonadaptivityfull}
Pick $C_\rho =3$, and any \sk{$0\leq \beta < 1$}, \sk{$C_\beta \geq 2$,} 
a number of samples per source task $1\leq n < 2/\beta - 1$, and a number of target samples  $n_\Qt\geq 0$. Let the number of source tasks 
$N = N_P + N_\Q$, for $N_P, N_\Q$ as specified below. 
There exist universal constants \skr{$C_0, C_1, c>0$} such that the following holds. 

Choose any $N_\Q \geq C_0$, and suppose $N_P$ is sufficiently large so that $N_P\geq 3N_\Q$, and furthermore 
$$ N_P^{(2-(n+1)\beta)/(2-\beta)} \geq C_1 \cdot N_\Q^2\cdot 2^{15n}.$$

Let $\boldsymbol{\mathcal M}$ denote the family of multisource classes
$\M$ satisfying the above conditions, with parameters $n_t = n, \forall t \in [N]$, $n_{N+1} = n_\Qt$, and, in addition, 
such that at least $\frac{1}{2}N_\Q$ of the exponents $\braces{\rho_t}_{t\in [N]}$ are at most $1$. 

$\bullet$ 
Let $\hat h$ denote any classification procedure having access to $Z\sim \Pi, \, \Pi \in \M$, but 
without knowledge of $\M$. 
 We have:
 \sk{
\begin{align*}
    \inf_{\hat h} \sup_{{\cal M} \in \boldsymbol{\mathcal M}}\sup_{\Pi \in \M} 
    \prob_{\Pi}\paren{\E_{\Qt}(\hat h) \geq \frac{1}{4} \cdot \paren{1\land n_\Qt^{-1/(2-\beta)}}} \geq c.
\end{align*}
}

$\bullet$ {On the other hand, there exists a semi-adaptive classifier $\tilde h$, which, given data $Z\sim \Pi$, along with a ranking $\{\rho_{(t)}\}_{t \in [N+1]}$ of increasing exponents values, achieves the rate 
 $$ \sup_{{\cal M} \in \boldsymbol{\mathcal M}}\sup_{\Pi \in \M} \, \expec_{\Pi}  \brackets {\E_\Qt  (\tilde h)} \lesssim \paren{n\cdot N_\Q}^{-1/(2-\beta)}.$$}

\end{theorem}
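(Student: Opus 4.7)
For the second bullet (semi-adaptive upper bound), I would simply invoke Theorem~\ref{thm:semi-adaptive}. Since at least $\lceil N_\Q/2\rceil$ source tasks satisfy $\rho_t\leq 1$, the first $\lceil N_\Q/2\rceil$ indices in the ranked order have $\rho_{(s)}\leq 1$, and taking $t=\lceil N_\Q/2\rceil$ in Theorem~\ref{thm:semi-adaptive} gives $\brho{t}\leq 1$ and $\sum_{s=1}^{t} n_{(s)}\geq nN_\Q/2$. The resulting high-probability bound on $\E_{\Qt}(\tilde h)$ is of order $(nN_\Q)^{-1/(2-\beta)}$ up to logarithmic factors; choosing $\delta = 1/(nN_\Q)$ and integrating the tail produces the stated $\lesssim (nN_\Q)^{-1/(2-\beta)}$ expectation bound.

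For the first bullet (impossibility), I would extend the Assouad-style construction used in the proof of Theorem~\ref{thm:lower} by randomizing over a second parameter, namely the identity of the ``good'' source tasks. Concretely, for each $\sigma\in\{-1,+1\}^d$ and each subset $\mathcal{G}\subset [N]$ of size $\lceil N_\Q/2\rceil$, define a class $\M_{\sigma,\mathcal{G}}\in\boldsymbol{\mathcal M}$ in which tasks $t\in\mathcal{G}$ are \emph{good} (assigned $\rho_t=1$, following the $\rho=1$ construction of Theorem~\ref{thm:lower} at noise level $\epsilon\asymp n_\Qt^{-1/(2-\beta)}$) and tasks $t\notin\mathcal{G}$ are \emph{bad} (assigned a very large $\rho_t$, so their data carries negligible information about $\sigma$ while $\hstar_\sigma$ remains globally optimal). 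The target $\Qt$ is sampled following the $\rho_{N+1}=1$ construction and contributes $n_\Qt$ samples. A two-parameter Fano or Assouad inequality applied to the product family indexed by $(\sigma,\mathcal{G})$ then yields the desired lower bound $\gtrsim 1\land n_\Qt^{-1/(2-\beta)}$, provided the aggregate KL divergence between $\Pi_{\sigma,\mathcal{G}}$ and its neighbors in this family is $O(1)$.

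The hard part is constructing the bad-task distributions so that (i) they share $\hstar_\sigma$ with the good tasks and the target (assumption (A1)), (ii) they satisfy the Bernstein condition with the prescribed $(C_\beta,\beta)$ (assumption (A3)), and (iii) swapping a single task between $\mathcal{G}$ and its complement perturbs the law of that task's $n$ samples by only a very small amount in KL. The constraint $n<2/\beta-1$ is what makes (iii) feasible: in $n$ samples each task sees an informative point (one of the $x_i$'s from the Theorem~\ref{thm:lower} construction) with tiny probability, so that the per-task KL cost of a $\mathcal{G}$-swap scales like $\epsilon^{(n+1)\beta}$ rather than $\epsilon^{\beta}$. Combined with the imposed scaling $N_P^{(2-(n+1)\beta)/(2-\beta)}\gtrsim N_\Q^2\cdot 2^{15n}$, this controls the aggregate KL over all $N$ source tasks, while randomizing over $\binom{N}{\lceil N_\Q/2\rceil}$ choices of $\mathcal{G}$ injects enough entropy that, absent a ranking, no algorithm can recover $\sigma$ from source data alone; the rate is therefore dictated by the $n_\Qt$ target samples, yielding the claimed $1\land n_\Qt^{-1/(2-\beta)}$ lower bound via the usual conversion from Hamming-error in $\sigma$ to $\E_\Qt$.
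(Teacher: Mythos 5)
Your handling of the second bullet (the semi-adaptive upper bound) is correct and matches the paper's intent: invoke Theorem~\ref{thm:semi-adaptive} at $t = \lceil N_\Q/2\rceil$, where $\brho{t} \leq 1$ and the aggregate sample size is at least $n N_\Q/2$, and then integrate the tail. The paper dispatches this in one sentence with the same reasoning.

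Your sketch for the first bullet correctly identifies the high-level obstruction (the learner doesn't know which source tasks are ``good'') but the construction you propose has a genuine gap, and the paper's actual argument is fundamentally different. You propose that the bad tasks follow the Theorem~\ref{thm:lower} template at very large $\rho_t$, so that ``their data carries negligible information about $\sigma$.'' But a bad task that is merely \emph{uninformative} (tiny probability of hitting $x_i$, near-symmetric noise) can simply be ignored by pooling: if the good tasks are at $\rho_t=1$ and are the only ones contributing informative samples, a learner that pools everything and takes a majority vote at $x_1$ will, with high probability, read off $\sigma$ correctly whenever $N_\Q \cdot n \cdot \epsilon^{2-\beta}\gg 1$ — which holds in the very regime where you want the lower bound to bite (since $\epsilon \asymp n_\Qt^{-1/(2-\beta)}$ and $n N_\Q$ is meant to be large relative to $n_\Qt$). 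In fact, Theorem~\ref{thm:pooling-median} of Appendix~\ref{app:pooling-median} shows exactly that naive pooling succeeds when a constant fraction of the pooled mass comes from low-$\rho$ tasks, and your construction fails to rule this out because the bad tasks emit no false signal. Randomizing over the set $\mathcal{G}$ of good tasks injects entropy about \emph{which} tasks to trust, but that entropy is irrelevant to a learner that trusts none of them individually and simply pools.

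The paper's construction is built precisely to defeat pooling: the ``Noisy'' distribution $P_\sigma$ is engineered so that its $n$-sample vectors can produce fully homogeneous $(x_1,\hbox{wrong label})$ vectors at a rate whose standard deviation (over $N_P$ draws) exceeds $N_\Q$. Then the count of $+$-labeled homogeneous vectors from the $P_-$ tasks can, with constant probability, overwhelm the count of $-$-labeled homogeneous vectors coming from all $N_\Q$ good $\Q_-$ tasks combined (Proposition~\ref{prop:likelihoodRedux1}), and the non-homogeneous labels are balanced for the same reason (Proposition~\ref{prop:likelihoodRedux2}). This anticoncentration-beats-signal trade-off is exactly what the scaling constraint $N_P^{(2-(n+1)\beta)/(2-\beta)} \gtrsim N_\Q^2 \cdot 2^{15n}$ enforces, and the role of $n<2/\beta-1$ is simply to make the exponent on $N_P$ positive so the constraint is satisfiable. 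Your explanation in terms of ``per-task KL cost of a $\mathcal{G}$-swap scaling like $\epsilon^{(n+1)\beta}$'' is not the operative mechanism. Finally, the paper avoids the product-distribution pitfall (that $\Q_\sigma(Z_t)=0$ whenever $Z_t$ contains a wrong label) by defining each $Z_t$ as a draw from the \emph{mixture} $\alpha_P P_\sigma^n + \alpha_\Q \Q_\sigma^n$, which is essential to get a nondegenerate likelihood-ratio analysis (Remark~\ref{rem:likelihood}); your subset/Fano route would have to re-derive an equivalent device, and as sketched it does not.
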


As it turns out, the above theorem in fact holds for even smaller classes $\M$ admitting just two choices of distributions, namely $P_\sigma$ and $Q_\sigma$ (for fixed $\sigma \in \braces{\pm 1}$) below, to which sources $P_t$'s might be assigned to. In fact the result holds even if $\hat h$ has knowledge of the construction below, but of course not of $\sigma$.

\paragraph{Construction.} We build on the following distributions supported on 2 points $x_0, x_1$: here we simply assume that $\Hyp$ contains at least 2 classifiers that disagree on $x_1$ but agree on $x_0$ (this will be the case, for some choice of $x_0,x_1$, if $\Hyp$ contains at least 3 classifiers). Therefore, w.l.o.g., assume that $x_0$ has label 1. Let $n \geq 1, n_\Qt \geq 0, N_P, N_Q \geq 1$, $0\leq \beta < 1$, and define  
$\epsilon \doteq (n \cdot N_P)^{-1/(2-\beta)}$ and $\epsilon_0 \doteq 1\land n_\Qt^{-1/(2-\beta)}$. Let $\sigma\in \braces{\pm 1}$ -- which we will often abbreviate as $\pm$. In all that follows, we let $\eta_{\mu}(X)$ denote the regression function $\prob_{\mu}[Y = 1\mid  X]$ under distribution $\mu$.
\sk{
\begin{itemize} 
\item {\bf Target $\Qt_\sigma = \Qt_X \times \Qt^\sigma_{Y|X}$:} 
Let $\Qt_X(x_1) = \frac{1}{2}\cdot \epsilon_0^\beta $, $\Qt_X(x_0) = 1- \frac{1}{2}\cdot \epsilon_0^\beta$; finally $\Qt^{\sigma}_{Y|X}$ is determined by \\
$\eta_{\Qt, \sigma}(x_1) = 1/2 + \sigma\cdot {c_0}\epsilon_0^{1-\beta}$, and $\eta_{\Qt, \sigma}(x_0) = 1$, for some 
$c_0$ to be specified later. 
\item {\bf Noisy $P_\sigma = P_X \times P^\sigma_{Y | X}$:} Let $P_X(x_1) = c_1\epsilon^\beta $, $P_X(x_0) = 1- c_1\epsilon^\beta$; finally $P^\sigma_{Y | X}$ is determined by \\
$\eta_{P, \sigma}(x_1) = 1/2 + \sigma\cdot \epsilon^{1-\beta}$, and $\eta_{P, \sigma}(x_0) = 1$, 
for an appropriate constant $c_1$ specified 
later.
\item {\bf Benign $\Q_\sigma = \Q_X \times \Q^\sigma_{Y | X}$:} Let $\Q_X(x_1) = 1$; finally $\Q^\sigma_{Y|X}$ is determined by $\eta_{\Q, \sigma}(x_1) = 1/2 + \sigma/2$.
\end{itemize}
}

The above construction is such that $P_\sigma$ can be pushed far from $\Qt_\sigma$, while $\Q_\sigma$ remains close to $\Qt_\sigma$. This is formalized in the following proposition, which 
can be verified by checking 
the required inequalities 
(from Definitions~\ref{def:beta} 
and \ref{def:rho}) 
are satisfied in 
all $4$ cases of classifications 
of $x_0$, $x_1$.

\begin{proposition}[Exponents of $P$ and $\Q$ w.r.t. $\Qt$]\label{prop:rho-construct} 
$P_\sigma$ and $\Q_\sigma$ have transfer-exponents $(3, \rho_P)$ and $(3, \rho_\Q)$, respectively w.r.t. $\Qt_\sigma$, with 
\sk{$\rho_P \geq \frac{\log ( c_1^{-(2-\beta)} n\cdot N_P)}{\log (c_0^{-(2-\beta)}(1\lor n_\Qt))}$} and $\rho_Q \leq 1$. Furthermore, the 3 distributions 
$P_\sigma, \Q_\sigma, \Qt_\sigma$ satisfy the Bernstein class condition with parameters \sk{${(C_\beta, \beta)}, C_\beta = \max\{(1/2) c_0^{-\beta}, 2\}$.}
\end{proposition}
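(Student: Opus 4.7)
The plan is a direct case analysis of the conditions in Definitions~\ref{def:beta} and \ref{def:rho}. Because each of $\Qt_\sigma$, $P_\sigma$, $\Q_\sigma$ is supported on $\{x_0,x_1\}$ and the Bayes-optimal classifier $\hstar$ (with $\hstar(x_0)=1$, $\hstar(x_1)=\sigma$) is common to all three (so (A1) is satisfied), each $h \in \H$ is effectively identified by $a \doteq \ind{h(x_0)\neq 1}$ and $b \doteq \ind{h(x_1)\neq \sigma}$, giving four configurations per distribution. Under the construction's convention that the effective hypothesis class contains only classifiers agreeing on $x_0$, only $a=0$ needs checking for the transfer exponent; the Bernstein condition I verify over all four $(a,b)$ pairs.

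First I would tabulate the two quantities $P(h\neq \hstar)$ and $\E_P(h)$ as simple linear combinations of $a,b$, obtaining, for instance,
\[
\E_{\Qt_\sigma}(h) = (1-\tfrac{1}{2}\epsilon_0^\beta)a + c_0\epsilon_0\, b, \qquad \Qt_\sigma(h\neq\hstar) = (1-\tfrac{1}{2}\epsilon_0^\beta)a + \tfrac{1}{2}\epsilon_0^\beta\, b,
\]
and analogous formulas for $P_\sigma$ (with $c_1\epsilon^\beta$ and $2c_1\epsilon$ in place of $\tfrac{1}{2}\epsilon_0^\beta$ and $c_0\epsilon_0$) and $\E_{\Q_\sigma}(h) = \Q_\sigma(h\neq \hstar) = b$. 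Then the Bernstein inequality $P(h\neq\hstar)\leq C_\beta \E_P^\beta(h)$ is trivial at $(0,0)$ and, in the binding case $(0,1)$ under $\Qt_\sigma$, reduces to $\tfrac{1}{2}\epsilon_0^\beta \leq C_\beta (c_0\epsilon_0)^\beta$, giving $C_\beta \geq \tfrac{1}{2} c_0^{-\beta}$. With $c_1$ picked small (e.g.\ $c_1 \leq 1/2$), the analogous computations under $P_\sigma$ and $\Q_\sigma$ are subsumed by $C_\beta \geq 2$, and the cases $a=1$ are handled by the bound $\E_P^\beta(h) \geq (1/2)^\beta$ (once $\epsilon,\epsilon_0$ are small). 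Taking $C_\beta = \max\{\tfrac{1}{2}c_0^{-\beta},2\}$ covers every case.

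For the transfer exponent of $P_\sigma$, the binding case is again $(a,b)=(0,1)$, in which $\E_{\Qt_\sigma}(h)\leq 3\,\E_{P_\sigma}^{1/\rho_P}(h)$ becomes $c_0\epsilon_0 \leq 3(2c_1\epsilon)^{1/\rho_P}$. Taking logarithms and substituting $\epsilon = (nN_P)^{-1/(2-\beta)}$, $\epsilon_0 = 1 \lor n_\Qt^{-1/(2-\beta)}$ yields (after absorbing $3$ and $2$ into $c_0, c_1$) the stated lower bound
\[
\rho_P \geq \frac{\log\!\bigl(c_1^{-(2-\beta)}\, n\,N_P\bigr)}{\log\!\bigl(c_0^{-(2-\beta)}\,(1 \lor n_\Qt)\bigr)}.
\]
For the transfer exponent of $\Q_\sigma$, the relevant case $(0,1)$ reads $c_0\epsilon_0 \leq 3 \cdot 1$, which holds trivially for $\rho_\Q \leq 1$ and $c_0 \leq 3$; the case $(0,0)$ is vacuous.

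The only step requiring genuine care is the constant-chasing in the last paragraph: choosing $c_0, c_1$ simultaneously so that (i) the Bernstein constant takes exactly the form $\max\{\tfrac{1}{2}c_0^{-\beta},2\}$, (ii) the $a=1$ inequalities are non-binding (this requires $\rho_P \geq 1$, which follows from the theorem's quantitative assumption $N_P^{(2-(n+1)\beta)/(2-\beta)} \geq C_1 N_\Q^2 \cdot 2^{15n}$ coupled with the size of $\epsilon,\epsilon_0$), and (iii) the logarithmic form of the lower bound on $\rho_P$ emerges as stated. Everything else is routine arithmetic over the at most four cases per distribution.
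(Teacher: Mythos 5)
Your proposal takes exactly the same route as the paper, which gives no explicit proof of Proposition~\ref{prop:rho-construct} beyond the one-line remark that it ``can be verified by checking the required inequalities \ldots in all $4$ cases of classifications of $x_0$, $x_1$'': you carry out precisely that $(a,b)$ case analysis, tabulating $\E_\cdot(h)$ and $P(h\neq\hstar)$ and specializing Definitions~\ref{def:beta} and \ref{def:rho}, and your algebra in the binding case $(a,b)=(0,1)$ is correct.

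Two small points are worth flagging. First, a notational slip: you write $\epsilon_0 = 1 \lor n_\Qt^{-1/(2-\beta)}$, but the construction defines $\epsilon_0 \doteq 1 \land n_\Qt^{-1/(2-\beta)}$; the two agree only on the final substituted form $\epsilon_0^{-(2-\beta)} = 1 \lor n_\Qt$, which is what you actually use, so nothing downstream breaks, but the line as written is wrong. Second, your observation that the exact constants $3$ and $2$ must be ``absorbed'' into $c_0, c_1$ to match the stated formula is a genuine and fair catch: the raw $(0,1)$ case gives $\rho_P^{\min} = \log(2c_1\epsilon)/\log(c_0\epsilon_0/3)$, which is \emph{smaller} than $\log(c_1^{-(2-\beta)}nN_P)/\log(c_0^{-(2-\beta)}(1\lor n_\Qt))$, so as stated the proposition's lower bound is off by a constant rescaling. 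Since $c_0, c_1$ also appear in $C_\beta$ and in the other propositions of Section~\ref{sec:nonadaptive}, ``absorbing'' them is not entirely free; however, this is an imprecision in the paper's statement rather than a gap in your argument, and the conclusion used downstream (that $\rho_P$ grows like $\log(nN_P)/\log(1\lor n_\Qt)$ up to constants while $\rho_\Q \leq 1$) is unaffected. You might also note, for consistency, that if the effective class is restricted to $h(x_0)=1$ for the transfer-exponent check (as you rightly assume; otherwise the $(1,0)$ case with $\E_{\Q_\sigma}(h)=0$, $\E_{\Qt_\sigma}(h)>0$ would falsify $\rho_\Q\leq 1$), then the Bernstein verification for $a=1$ is unnecessary rather than ``handled'' -- no harm in checking it, but it should not be presented as a required case.
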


\paragraph{Approximate Multitask.} Let $N \doteq  N_P + N_Q$, $\alpha_P \doteq N_P/N$ and $\alpha_Q = N_Q/N$. Now consider the distribution 
$$\Gamma_\sigma = \paren{\alpha_P P_\sigma^n + \alpha_Q \Q_\sigma^n}^N \times \Qt_\sigma^{n_\Qt}.$$

\paragraph{Proof Strategy.} Henceforth, let $Z = \braces{Z_t}_{t = 1}^{N+1}$ denote a random draw from $\Gamma_\sigma$, where $Z_t = \braces{(X_{t, i}, Y_{t, i})}_{i = 1}^n \sim \Gamma_\sigma$ for $t \in [N]$, and $Z_{N+1}\sim \Qt_\sigma$. Much of the effort will be in showing that any learner has nontrivial error in guessing $\sigma$ from a random $Z$; we then reduce this fact to a lower-bound on the risk of an adaptive classifier that takes as input a multitask sample of the form $Z'\sim \prod_{t=1}^N P_t^n \times \Qt^n_\sigma$, where $P_t$'s denote $P_\sigma$ or $\Q_\sigma$.  

For intuition, the true label $\sigma$ is hard to guess from samples $Z_t \sim P_\sigma^n$ alone since $\eta_{P, \sigma}\approx 1/2$, but easy to guess from samples $Z_t \sim \Q^n$, each being a homogeneous vector of $n$ points $x_1$ with identical labels $Y = \sigma$. However, such homogeneous vectors can be produced by $P_\sigma^n$ also, with identical but flipped labels $Y = -\sigma$, and the product of mixtures $\Gamma_\sigma$ adds in enough randomness to make it hard to guess the source of a given vector $Z_t$. In fact, this intuition becomes evident by considering the \emph{likelihood-ratio} between $\Gamma_{+}$ and $\Gamma_{-}$ which decomposes into the contribution of homogeneous vs non-homogenous vectors. This is formalized in the following proposition. 

\begin{proposition}[Likelihood ratio and sufficient statistics]\label{prop:likelihoodRatio} Let $Z \sim \Gamma_{-}$, and let  and ${\hat N}_{+}$ and ${\hat N}_{-}$ denote the number of homogeneous vectors $Z_t$ in $Z$ with marginals at $x_1$, that is: 
$${\hat N}_{\sigma}(Z)\doteq \sum_{t\in [N]} \ind{\forall i \in [n], X_{t, i} = x_1 \, \land \,  Y_{t, i} = \sigma}.$$
Next, let ${\hat n}_{+}$ and ${\hat n}_{-}$ denote the total number of $\pm 1$ labels in $Z$, over occurrences of $x_1$. That is: 
$${\hat n}_\sigma (Z)\doteq \sum_{t\in [N], i\in [n]} \ind{X_{t, i} = x_1\, \land \, Y_{t, i} = \sigma}.$$
We then have that (where $\prob$ is under the randomness of $Z\sim \Gamma_{-}$, and we let $\Qt_\sigma(Z_{N+1}) = 1$ when $n_\Qt =0$):
\begin{align}
\prob\paren{\frac{\Gamma_{+}(Z)}{\Gamma_{-}(Z)} >  1} \geq 
\prob\paren{{\hat N}_{+}(Z) > {\hat N}_{-}(Z) \, \land \, {\hat n}_{+}(Z) \geq {\hat n}_{-}(Z)}
\cdot \prob\paren{\frac{\Qt_{+}(Z_{N+1})}{\Qt_{-}(Z_{N+1})} \geq  1}. \label{eq:nonadapt-likelihood}
\end{align}
\end{proposition}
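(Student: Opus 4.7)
The plan is to exploit the independence of the $N+1$ components of $Z\sim\Gamma_-$ to factor the likelihood ratio across tasks, and to analyze the per-task source ratios $L_t(Z_t) \doteq (\alpha_P P_+^n(Z_t) + \alpha_Q \Q_+^n(Z_t))/(\alpha_P P_-^n(Z_t) + \alpha_Q \Q_-^n(Z_t))$ case-by-case. Under $\Gamma_-$ the source tasks $(Z_1,\ldots,Z_N)$ are independent of the target $Z_{N+1}$, so once we show that $\prod_{t=1}^N L_t(Z_t) > 1$ on the event $E \doteq \{\hat N_+ > \hat N_-\}\cap\{\hat n_+ \geq \hat n_-\}$, independence lets the joint probability factor to yield exactly the RHS of the claim.

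Two features of the construction drive the case analysis. First, because $\eta_{P,\sigma}(x_0) = 1$ for both $\sigma$, every realized $x_0$ must carry label $+1$ almost surely, and $P_X(x_0)$ does not depend on $\sigma$; hence $x_0$-points contribute a factor of $1$ to $P_+^n/P_-^n$. Second, because $\Q_X$ is concentrated at $x_1$ and $\Q^\sigma_{Y|X}$ is deterministic with value $\sigma$, the density $\Q_\sigma^n(Z_t)$ vanishes unless $Z_t$ is the ``$\sigma$-homogeneous'' vector $((x_1,\sigma))^n$, in which case $\Q_\sigma^n(Z_t)=1$. Hence for a \emph{non-homogeneous} $Z_t$ (containing an $x_0$, or consisting entirely of $x_1$'s with mixed labels), $L_t(Z_t) = P_+^n(Z_t)/P_-^n(Z_t) = r^{a_{t,+}-a_{t,-}}$ with $r \doteq (1/2+\epsilon^{1-\beta})/(1/2-\epsilon^{1-\beta})>1$ and $a_{t,\pm}$ the number of $\pm 1$ labels at $x_1$ in task $t$. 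Writing $p_\sigma \doteq P_X(x_1)\eta_{P,\sigma}(x_1)$, a direct computation gives $L_t = (\alpha_P p_+^n + \alpha_Q)/(\alpha_P p_-^n)$ for $+$-homogeneous $Z_t$; and using $1-\eta_{P,\sigma}(x_1) = \eta_{P,-\sigma}(x_1)$ so that $P_\sigma^n$ evaluated at a $-$-homogeneous vector equals $p_{-\sigma}^n$, the $-$-homogeneous ratio is exactly $\alpha_P p_-^n/(\alpha_P p_+^n + \alpha_Q)$, the reciprocal. Pairing cancels, leaving
\[ \prod_{\text{hom } t} L_t(Z_t) = \Bigl(\tfrac{\alpha_P p_+^n + \alpha_Q}{\alpha_P p_-^n}\Bigr)^{\hat N_+ - \hat N_-} = \bigl(r^n + \tfrac{\alpha_Q}{\alpha_P p_-^n}\bigr)^{\hat N_+ - \hat N_-} > r^{n(\hat N_+ - \hat N_-)}, \]
where the strict inequality uses $\alpha_Q>0$ and $\hat N_+ - \hat N_- \geq 1$ on $E$.

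Combining with the non-homogeneous product and using the identity $\hat n_+ - \hat n_- = n(\hat N_+ - \hat N_-) + \sum_{\text{non-hom } t}(a_{t,+} - a_{t,-})$ (each $\sigma$-homogeneous task contributes $n$ labels of sign $\sigma$ at $x_1$ and none of the opposite), on the event $E$ we obtain
\[ \prod_{t=1}^N L_t(Z_t) > r^{n(\hat N_+-\hat N_-)}\cdot r^{\sum_{\text{non-hom } t}(a_{t,+}-a_{t,-})} = r^{\hat n_+ - \hat n_-} \geq 1. \]
Multiplying by the (independent) target factor on the event $\{\Qt_+(Z_{N+1})/\Qt_-(Z_{N+1})\geq 1\}$ keeps the full likelihood ratio above $1$; the case $n_\Qt=0$ makes the target factor trivially $1$. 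The main obstacle is the case analysis of the second paragraph: spotting that $\Q_\pm^n$ is supported on a single sign-matched configuration so that ``$+$'' and ``$-$''-homogeneous contributions pair as exact reciprocals through the $p_+\leftrightarrow p_-$ symmetry, and then verifying that the leftover mass from the $\alpha_Q$ term yields the strict excess over $r^n$ needed to pass to the $r^{\hat n_+-\hat n_-}\geq 1$ bound.
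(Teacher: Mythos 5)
Your proof is correct and takes essentially the same route as the paper's: factor the likelihood ratio across the independent tasks, split each per-task factor into homogeneous and non-homogeneous cases using the symmetry $1-\eta_{P,\sigma}(x_1)=\eta_{P,-\sigma}(x_1)$, and conclude the source product exceeds $1$ on the event $E=\{{\hat N}_{+}>{\hat N}_{-}\}\cap\{{\hat n}_{+}\geq {\hat n}_{-}\}$ before multiplying by the independent target factor. The only cosmetic difference is that the paper records the exact identity $\prod_t L_t = r^{{\hat n}_{+}-{\hat n}_{-}}\bigl((\alpha_P p_+^n+\alpha_Q)/(\alpha_P p_+^n)\bigr)^{{\hat N}_{+}-{\hat N}_{-}}$ and observes both factors are $\geq 1$, while you pass through a slightly lossy chain of inequalities to reach $r^{{\hat n}_{+}-{\hat n}_{-}}\geq 1$.
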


\begin{remark}[Likelihoood Ratio and Optimal Discriminants] \label{rem:likelihood} 
Consider sampling $Z$ from the mixture 
$\frac{1}{2}\Gamma_{+} + \frac{1}{2}\Gamma_{-}$. Then the Bayes classifier (for identifying $\sigma = \pm$) returns $+1$ if $\Gamma_{+}(Z) > \Gamma_{-}(Z)$, and therefore, given the symmetry between $\Gamma_{\pm}$, has probability of misclassification at least $\prob_{\Gamma_{-}}\paren{{\Gamma_{+}(Z)} > {\Gamma_{-}(Z)}}$. 
We emphasize here that enforcing that $\Gamma_\sigma$ be defined in terms of a mixture -- rather than as product of $P_\sigma$ and $\Q_\sigma$ terms -- allows us to bound $\prob_{\Gamma_{-}}\paren{{\Gamma_{+}(Z)} > {\Gamma_{-}(Z)}}$ below as in Proposition \ref{prop:likelihoodRatio} above; otherwise this probability is always $0$ for a product distribution containing $\Q_\sigma$ since $\Q_\sigma (Z_t) = 0$ whenever some $Y_{t, i} = -\sigma$. {In fact a product distribution inherently encodes the idea that the learner \emph{knows} the positions of $P_\sigma$ and $\Q_\sigma$ vectors in $Z$, and can therefore simply focus on $\Q_\sigma$ vectors to easily discover $\sigma$. }
\end{remark}

We now further reduce the r.h.s. of \eqref{eq:nonadapt-likelihood} to events that are simpler to bound: the main strategy is to properly condition on intermediate events to reveal independences that induce simple i.i.d. Bernouilli's we can exploit. Towards this end, we first consider the high-probability events of the following proposition.

\begin{proposition} \label{prop:events} Let $Z\sim \Gamma_{-}$. 
Define ${\hat N}_P(Z), {\hat N}_\Q(Z)$ as the number of homogeneous vectors \\
$\braces{Z_t: \forall i,j \in [n], X_{t, i} = X_{t, j} = x_1 \, \land \,  Y_{t, i} = Y_{t, j}}$ generated respectively by $P_{-}$, and $Q_{-}$. 

Define the events (on $Z$): $$\bE_P \doteq \braces{\expec \brackets{{\hat N}_P}/2 \leq {\hat N}_P \leq 2 \expec \brackets{{\hat N}_P}} \text{ and } \bE_Q \doteq \braces{{\hat N}_\Q \leq 2 \expec\brackets {{\hat N}_\Q}}.$$
We have that $\prob\paren{\bE_P^\compl} \leq 2\exp\paren{-\expec \brackets{{\hat N}_P}/8}$ while 
$\prob\paren{\bE_Q^\compl} \leq \exp\paren{-\expec \brackets{{\hat N}_Q}/3}$.
\end{proposition}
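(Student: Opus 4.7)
The plan is to reduce both bounds to standard multiplicative Chernoff bounds applied to sums of i.i.d.\ Bernoulli random variables, after augmenting the sample space with a latent \emph{component indicator} for each source task.

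First I would unpack the mixture structure of $\Gamma_{-}$. By construction, the $N$ source samples $Z_1,\ldots,Z_N$ are drawn independently from the mixture $\alpha_P P_{-}^{n} + \alpha_Q \Q_{-}^{n}$, so we can represent each draw by first sampling a hidden label $B_t \in \{P,\Q\}$ with $\prob(B_t = P) = \alpha_P$, and then drawing $Z_t \sim P_{-}^{n}$ or $Z_t \sim \Q_{-}^{n}$ accordingly. Define the indicators
\[
I_{P,t} \doteq \ind{B_t = P \,\wedge\, \forall i,j\in[n],\ X_{t,i}=X_{t,j}=x_1 \,\wedge\, Y_{t,i}=Y_{t,j}},
\]
and $I_{\Q,t}$ analogously with $B_t = \Q$. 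Then $\hat{N}_P = \sum_{t\in[N]} I_{P,t}$ and $\hat{N}_\Q = \sum_{t\in[N]} I_{\Q,t}$ are sums of i.i.d.\ Bernoulli random variables (i.i.d.\ across $t$ since the pairs $(B_t,Z_t)$ are i.i.d., and $Z_{N+1}$ plays no role in either count).

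Second I would invoke the standard multiplicative Chernoff bounds: for a binomial $X$ with mean $\mu$,
\[
\prob(X \geq (1+\delta)\mu) \leq \exp\!\left(-\tfrac{\delta^2 \mu}{2+\delta}\right), \qquad \prob(X \leq (1-\delta)\mu) \leq \exp\!\left(-\tfrac{\delta^2\mu}{2}\right).
\]
Taking $\delta=1$ in the upper tail yields $\prob(X \geq 2\mu) \leq \exp(-\mu/3)$, and taking $\delta=1/2$ in the lower tail yields $\prob(X \leq \mu/2) \leq \exp(-\mu/8)$. Applying both to $\hat{N}_P$ and a union bound gives $\prob(\bE_P^{\compl}) \leq \exp(-\expec[\hat{N}_P]/3) + \exp(-\expec[\hat{N}_P]/8) \leq 2\exp(-\expec[\hat{N}_P]/8)$, while a single application of the upper tail to $\hat{N}_\Q$ yields $\prob(\bE_\Q^{\compl}) \leq \exp(-\expec[\hat{N}_\Q]/3)$, matching the stated bounds.

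There is no real obstacle here; the only point worth stating carefully is the mixture-unpacking step that licenses treating homogeneity-and-origin as an i.i.d.\ Bernoulli event per task. Everything else is a direct appeal to textbook concentration inequalities, and crucially we avoid any delicate dependence issues because the two counts are each sums over independent tasks (their joint distribution is irrelevant for proposition, since the events $\bE_P$ and $\bE_\Q$ are handled separately).
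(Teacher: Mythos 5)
Your proof is correct and is exactly the argument the paper's one-line proof (``follows from multiplicative Chernoff bounds'') is gesturing at: after the mixture-unpacking step that makes $\hat N_P$ and $\hat N_\Q$ sums of i.i.d.\ Bernoulli indicators over the $N$ source tasks, the stated constants $1/8$ and $1/3$ drop out of the standard lower- and upper-tail multiplicative Chernoff bounds with $\delta=1/2$ and $\delta=1$ respectively, plus a union bound for $\bE_P$. No gaps.
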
 

\begin{proof}
The proposition follows from multiplicative Chernoff bounds.
\end{proof}

Notice that, in the above proposition, for $Z\sim \Gamma_{-}$, the expectations in the events are given by  
\begin{align} 
\expec \brackets{{\hat N}_P(Z)} =  N_P P_X^n(x_1)\paren{\eta_{P, -}^n(x_1) + \eta_{P, +}^n(x_1)} 
\text{ and } \expec \brackets{{\hat N}_\Q(Z)} =  N_\Q. \label{eq:expecNPNQ}
\end{align}
By the proposition, we just need these quantities large enough for the events to hold with sufficient probability. 
The next proposition conditions on these events to reduce the likelihood to simpler events. 

\begin{proposition}[Further Reducing \eqref{eq:nonadapt-likelihood}]\label{prop:likelihoodReduction} Let $Z\sim \Gamma_{-}$. 
For $\sigma = \pm$, let ${\hat N}_{\sigma}(Z)$ and ${\hat n}_\sigma(Z)$ as defined in Proposition \ref{prop:likelihoodRatio}. Furthermore, let $\tilde n_\sigma (Z)\doteq {\hat n}_\sigma(Z) - (n\cdot {\hat N}_{\sigma}(Z))$ denote the total number of $\pm$ labels at $x_1$ excluding homogeneous vectors. 
 In all that follows we let $\prob$ denote $\prob_{\Gamma_{-}}$, and we drop the dependence on $Z$ for ease of notation. 
 
Let ${\hat N}_P, {\hat N}_\Q$ and the events $\bE_P, \bE_\Q$ as defined in Proposition \ref{prop:events}. Suppose that for some $\delta_1, \delta_2 >0$, we have 
\begin{enumerate} 
\item[(i)] $\prob\paren{{\hat N}_{+} > {\hat N}_{-} \mid  {\hat N}_P, {\hat N}_\Q}\ind{ \bE_P \cap \bE_\Q} \geq \delta_1\cdot\ind{ \bE_P \cap \bE_\Q}$.
\item[(ii)]  $\prob\paren{\tilde n_{+} > \tilde n_{-}} \geq \delta_2$, {further assuming that $n>1$ so that the event is well defined (i.e., $\tilde n_{\pm}$ are not both $0$)}. 
\end{enumerate}
 We then have that: 
\begin{align} 
\prob\paren{{\hat N}_{+} \geq {\hat N}_{-} \, \land \, {\hat n}_{+} \geq {\hat n}_{-}}
\geq  \delta_1\cdot \paren{\delta_2 - \prob\paren{\bE_P^\compl} - \prob\paren{\bE_\Q^\compl}}.
\end{align}
\end{proposition}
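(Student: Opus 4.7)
The plan is to establish the bound in three steps: (a) the target event on the left contains the simpler event $A \cap B$ where $A \doteq \{\hat N_+ > \hat N_-\}$ and $B \doteq \{\tilde n_+ > \tilde n_-\}$; (b) $A$ and $B$ are conditionally independent given $(\hat N_P, \hat N_\Q)$; and (c) the hypotheses (i) and (ii) then combine essentially by inspection.

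For (a), the algebraic identity $\hat n_\sigma = n \hat N_\sigma + \tilde n_\sigma$ gives on $A \cap B$ that $\hat n_+ - \hat n_- = n(\hat N_+ - \hat N_-) + (\tilde n_+ - \tilde n_-) \geq n + 1 > 0$, since both summands are positive integers (and $n \geq 1$); combined with the trivial $\hat N_+ \geq \hat N_-$, this yields $A \cap B \subseteq \{\hat N_+ \geq \hat N_- \, \land \, \hat n_+ \geq \hat n_-\}$.

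For (b), the crucial structural fact is that under $\Gamma_-$ every $\Q_-$-source vector is automatically homogeneous at $x_1$ with all labels $-1$, since $\Q_X(x_1) = 1$ and $\eta_{\Q,-}(x_1) = 0$. Consequently $\hat N_\Q$ equals the number of $\Q$-source indices, these vectors contribute nothing to $\tilde n_\pm$, and the randomness driving $A$ is supported on entries disjoint from those driving $B$. Concretely, condition on the richer $\sigma$-field $\mathcal{G}$ generated by the source assignments $\{T_t\}$, all feature coordinates $\{X_{t,i}\}$, and the indicator of label-homogeneity for each $P_-$-source vector whose features are all $x_1$. Given $\mathcal{G}$, both $\hat N_P$ and $\hat N_\Q$ are determined; the common label of each of the $\hat N_P$ homogeneous $P_-$-source vectors at $x_1$ is an independent $\pm 1$-valued draw, equal to $+1$ with probability $\eta_{P,+}^n(x_1)/\paren{\eta_{P,+}^n(x_1)+\eta_{P,-}^n(x_1)}$; and the labels at $x_1$-coordinates of the remaining $P_-$-source vectors are independent of these. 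Since $A$ is a function of the first group of labels and $B$ of the second, $A$ and $B$ are conditionally independent given $\mathcal{G}$. Moreover $\prob(A \mid \mathcal{G})$ is a binomial tail depending on $\mathcal{G}$ only through $(\hat N_P, \hat N_\Q)$, so the tower property gives $\prob(A \cap B \mid \hat N_P, \hat N_\Q) = \prob(A \mid \hat N_P, \hat N_\Q) \cdot \prob(B \mid \hat N_P, \hat N_\Q)$.

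Step (c) then follows from
\begin{align*}
\prob(A \cap B)
&\geq \expec\!\left[ \prob(A \mid \hat N_P, \hat N_\Q)\, \prob(B \mid \hat N_P, \hat N_\Q)\, \ind{\bE_P \cap \bE_\Q} \right] \\
&\geq \delta_1 \, \expec\!\left[ \prob(B \mid \hat N_P, \hat N_\Q)\, \ind{\bE_P \cap \bE_\Q} \right] \\
&= \delta_1 \, \prob(B \cap \bE_P \cap \bE_\Q) \\
&\geq \delta_1 \paren{ \delta_2 - \prob(\bE_P^\compl) - \prob(\bE_\Q^\compl) },
\end{align*}
where the second line uses (i) and the $\sigma(\hat N_P, \hat N_\Q)$-measurability of $\ind{\bE_P \cap \bE_\Q}$, the third applies the tower property, and the last uses a union bound together with (ii). The main obstacle is step (b): one must find a $\sigma$-field rich enough to secure independence of $A$ and $B$, yet coarse enough that $\prob(A \mid \cdot)$ collapses to a function of just $(\hat N_P, \hat N_\Q)$. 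This hinges on the degeneracy $\eta_{\Q,-}(x_1) = 0$ built into the construction, ensuring $\Q$-source vectors contribute only to $\hat N_-$ and never to $\tilde n_\pm$.
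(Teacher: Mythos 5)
Your proof is correct and takes essentially the same route as the paper: reduce to bounding $\prob(A \cap B)$ with $A = \{\hat N_+ > \hat N_-\}$, $B = \{\tilde n_+ > \tilde n_-\}$, exploit conditional independence of $A$ and $B$ given $(\hat N_P, \hat N_Q)$, then peel off $\delta_1$ and apply a union bound. The paper writes the final chain slightly differently — using the pointwise bound $p\,\ind{\bE} \geq p - \ind{\bE^\compl}$ rather than your tower-property identity $\expec[\prob(B \mid \cdot)\ind{\bE}] = \prob(B \cap \bE)$ — but these are equivalent. The one place you go beyond the paper is step (b): where the paper simply asserts that ``all vectors in $Z$ being independent, $\tilde A$ is independent of $B$ given $\{\hat N_P, \hat N_Q\}$,'' you spell out the intermediate $\sigma$-field $\mathcal{G}$, show conditional independence given $\mathcal{G}$, and descend to $\sigma(\hat N_P, \hat N_Q)$ via the tower property and the fact that $\prob(A \mid \mathcal{G})$ depends on $\mathcal{G}$ only through $(\hat N_P, \hat N_Q)$. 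That is a genuinely more rigorous account of the key independence step. One minor slip worth flagging: under $\Gamma_-$ the conditional probability that a homogeneous vector has all labels $+1$ is $\eta_{P,-}^n(x_1)\big/\bigl(\eta_{P,-}^n(x_1) + \eta_{P,+}^n(x_1)\bigr)$, not $\eta_{P,+}^n(x_1)\big/\bigl(\eta_{P,+}^n(x_1) + \eta_{P,-}^n(x_1)\bigr)$ as you write, since $\prob_{P_-}(Y=+1\mid X=x_1) = \eta_{P,-}(x_1)$ and $\prob_{P_-}(Y=-1\mid X=x_1) = \eta_{P,+}(x_1)$; this is purely a parenthetical aside, however, and does not affect the validity of the argument, which only needs $\hat N_+ \mid \mathcal{G} \sim \mathrm{Binomial}(\hat N_P, p)$ for some fixed $p$.
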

\begin{proof} 
Let $A \doteq \braces{{\hat n}_{+} \geq {\hat n}_{-}}$, $B\doteq \braces{{\hat N}_{+} > {\hat N}_{-}}$, and $\tilde A \doteq \braces{\tilde n_{+} > \tilde n_{-}}$. First notice that, by definition, $\tilde A \cap B \implies A \cap B$, so we just need to bound $\prob(\tilde A \cap B)$ below. We have: 
\begin{align} 
\prob\paren{\tilde A \cap B} &= \expec\brackets{\prob\paren{\tilde A\cap B \mid  {\hat N}_P, {\hat N}_Q}} \label{eq:redux1}
= {\expec\brackets{\prob\paren{\tilde A\mid {\hat N}_P, {\hat N}_Q} \cdot \prob\paren{B \mid {\hat N}_P, {\hat N}_Q}}}\\
&\geq \delta_1 \cdot \expec\brackets{\prob\paren{\tilde A\mid {\hat N}_P, {\hat N}_Q} \cdot \ind{\bE_P \cap \bE_Q}}\nonumber\\
& \geq \delta_1 \cdot \expec\brackets{\prob\paren{\tilde A\mid {\hat N}_P, {\hat N}_Q} - \ind{\bE_P^\compl \cup \bE_Q^\compl}}
= \delta_1\cdot \paren{\prob\paren{\tilde A} - \prob\paren{\bE_P^\compl\cup \bE_\Q^\compl}}\label{eq:redux2} \\
& \geq  \delta_1\cdot \paren{\delta_2 - \prob\paren{\bE_P^\compl} - \prob\paren{\bE_\Q^\compl}}. \nonumber 
\end{align}
In \eqref{eq:redux1}, we used the fact that, all vectors in $Z$ being independent, $\tilde A$ is independent of $B$ given any value of $\braces{{\hat N}_P, {\hat N}_Q}$; in \eqref{eq:redux2} we used the fact that for any $p \in [0, 1]$, we have $p\cdot \ind{\bE} = p - p\cdot \ind{\bE^\compl} \geq p - \ind{\bE^\compl}$. 
\end{proof}

The following is a known counterpart of Chernoff bounds, following from Slud's inequalities \cite{slud1977distribution} for Binomials.

\begin{lemma}[Anticoncentration (Slud's Inequality)] \label{lem:Slud's}
Let $\braces{X_i}_{i \in [m]}$ denote i.i.d. Bernouilli's with parameter $0< p\leq 1/2$. Then for any $0 \leq m_0 \leq m (1- 2p)$, we have 
\begin{align*} 
\prob \paren{\sum_{i\in [m]} X_i > mp + m_0} \geq \frac{1}{4}\exp \paren{\frac{-m_0^2}{m p (1-p)}}. 
\end{align*}
\end{lemma}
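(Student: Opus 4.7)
The plan is a two-step reduction: first from the binomial upper tail to a Gaussian upper tail via Slud's original inequality, then from the Gaussian tail to the explicit sub-Gaussian form that appears in the lemma.

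For the first step, I would invoke the classical theorem of \cite{slud1977distribution}: if $S_m \doteq \sum_{i \in [m]} X_i \sim \mathrm{Bin}(m,p)$ with $p \leq 1/2$, then for every integer $k$ with $mp \leq k \leq m(1-p)$,
\[\prob\!\paren{S_m \geq k} \;\geq\; \prob\!\paren{Z \geq \frac{k-mp}{\sqrt{mp(1-p)}}},\]
where $Z \sim N(0,1)$. The hypothesis $0 \leq m_0 \leq m(1-2p)$ is exactly the condition under which $k \doteq \lceil mp + m_0 \rceil$ lies in the admissible range, giving $\prob(S_m > mp + m_0) \geq \prob(Z \geq t)$ with $t$ of order $m_0/\sqrt{mp(1-p)}$.

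For the second step, I would verify the clean Gaussian tail bound $\prob(Z \geq t) \geq \tfrac{1}{4} e^{-t^2}$ for every $t \geq 0$; note the exponent is $-t^2$, not $-t^2/2$, and this is the source of the constant $1/4$ in the lemma. Set $g(t) \doteq \prob(Z \geq t) - \tfrac{1}{4} e^{-t^2}$: we have $g(0) = 1/4 > 0$ and $g(t) \to 0$ as $t \to \infty$, while
\[g'(t) \;=\; -\tfrac{1}{\sqrt{2\pi}} e^{-t^2/2} + \tfrac{t}{2} e^{-t^2} \;=\; e^{-t^2/2}\!\paren{\tfrac{t}{2} e^{-t^2/2} - \tfrac{1}{\sqrt{2\pi}}}.\]
Since $\max_{s \geq 0} s\, e^{-s^2/2} = e^{-1/2} < \sqrt{2/\pi}$, we have $g'(t) < 0$ throughout $[0,\infty)$, so $g$ decreases monotonically from $1/4$ down to $0$ and therefore stays non-negative.

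Composing these two steps produces the advertised lower bound. The only subtle point is the integer rounding in the first step: replacing $mp + m_0$ by the ceiling $k$ inflates the Gaussian threshold $t$ slightly above $m_0/\sqrt{mp(1-p)}$. This is the main obstacle, but a minor one: in the only regime where the bound is non-trivial one has $mp(1-p) \gtrsim 1$, so the rounding contributes at most a bounded multiplicative factor that is absorbed into the $1/4$ prefactor. Alternatively, one can avoid the issue entirely by using the continuity-corrected form of Slud's inequality.
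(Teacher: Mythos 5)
Your proof takes essentially the same route as the paper: reduce to a Gaussian tail via Slud's Theorem 2.1, then lower-bound the Gaussian tail by $\tfrac14 e^{-t^2}$. The paper cites \cite{tate1953double} for the chain $1-\Phi(x) \geq \tfrac12\bigl(1-\sqrt{1-e^{-x^2}}\bigr) \geq \tfrac14 e^{-x^2}$, whereas you derive $1-\Phi(t) \geq \tfrac14 e^{-t^2}$ directly by checking that $g(t) \doteq 1-\Phi(t) - \tfrac14 e^{-t^2}$ has $g(0)=1/4$, $g(\infty)=0$, and $g'(t)<0$ (your calculation that $g'(t) = e^{-t^2/2}\bigl(\tfrac{t}{2}e^{-t^2/2} - \tfrac{1}{\sqrt{2\pi}}\bigr) < 0$ because $\sup_{s\geq 0} s e^{-s^2/2} = e^{-1/2} < \sqrt{2/\pi}$ is correct). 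That self-contained calculus derivation is a small, legitimate substitution for the citation; it does not change the structure of the argument.

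On the rounding: you are right to flag it, and worth noting the paper itself elides it (simply asserting the continuous form $1-\Phi(m_0/\sqrt{mp(1-p)})$ and pointing to \cite{mousavi2010tight} for details). Your dismissal -- "the only regime where the bound is non-trivial has $mp(1-p)\gtrsim 1$" -- is a heuristic, not a proof; one can cook up small-$m$ examples where passing from the real threshold $mp+m_0$ to the integer $\lceil mp+m_0\rceil$ degrades the Slud+Gaussian chain by more than a fixed multiplicative constant, yet the stated inequality still happens to hold for other reasons. So if you want a fully rigorous self-contained argument you would need to either verify the finitely many small-$m$ cases directly, or appeal to the continuity-corrected form as you suggest at the end. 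But this looseness is shared with, not additional to, the paper's own treatment.
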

\begin{proof}
This form of the 
lower bound follows from 
Theorem 2.1 of \cite{slud1977distribution}
--
which here implies a lower 
bound $1-\Phi(m_0/\sqrt{mp(1-p)})$, 
for $\Phi$ the standard normal 
CDF -- 
together with the well-known bound: 
$1-\Phi(x) \geq \frac{1}{2} \left( 1 - \sqrt{1 - e^{-x^2}} \right) 
\geq \frac{1}{4} e^{-x^2}$ 
\cite{tate1953double}. 
See \cite{mousavi2010tight} for a \sk{detailed
derivation of a similar expression}.
\end{proof}

\begin{proposition}[$\delta_1$ from Proposition \ref{prop:likelihoodReduction}] 
\label{prop:likelihoodRedux1}
Pick any $\skr{0\leq \beta < 1}$, $1\leq n < 2/\beta - 1$, and $N_\Q \geq 1$. Let \skr{$0 < c_1 \leq 1/64$}, in the construction of $\eta_{P, \sigma}$, $\sigma = \pm$.
Suppose $N_P$ is sufficiently large so that 
\sk{$$(n \cdot N_P)^{(2-(n+1)\beta)/(2-\beta)} \geq 4\cdot N_\Q^2\cdot n\cdot 2^{4n}c_1^{-n}.$$}
Let $\bE = \bE_P \cap \bE_Q$ as defined in Proposition \ref{prop:events} over $Z\sim \Gamma_{-}$. Adopting the notation of Proposition \ref{prop:likelihoodReduction}, we have $$\prob\paren{{\hat N}_{+} > {\hat N}_{-} \mid  {\hat N}_P, {\hat N}_Q}\ind{\bE} \geq \frac{1}{12}\cdot\ind{\bE}.$$
\end{proposition}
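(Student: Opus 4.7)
The plan is to condition on $(\hat N_P, \hat N_\Q)$ and reduce the event $\{\hat N_+ > \hat N_-\}$ to a one-sided Binomial deviation to which Slud's inequality (Lemma~\ref{lem:Slud's}) applies. The key observation is that $\eta_{\Q,-}(x_1) = 0$, so every draw from $\Q_-$ produces a homogeneous-at-$x_1$ vector with label $-$: all $\hat N_\Q$ of them contribute to $\hat N_-$ and none to $\hat N_+$. Thus $\hat N_+ = \hat N_{P,+}$ and $\hat N_- = \hat N_\Q + \hat N_P - \hat N_{P,+}$, where $\hat N_{P,+}$ counts the $P_-$-homogeneous vectors whose common label is $+$. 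Conditional on $\hat N_P$, $\hat N_{P,+}$ is $\mathrm{Binomial}(\hat N_P, p_+)$ with $p_+ \doteq (1/2 - q)^n / \paren{(1/2-q)^n + (1/2+q)^n}$ and $q \doteq \epsilon^{1-\beta}$; writing $p_+ = 1/2 - r$ with $r > 0$, the target event becomes $\{\hat N_{P,+} > \hat N_P p_+ + m_0\}$ with $m_0 \doteq \hat N_P r + \hat N_\Q / 2$.

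Lemma~\ref{lem:Slud's} then gives, conditional on $(\hat N_P, \hat N_\Q)$, a lower bound of $\tfrac{1}{4}\exp\paren{-m_0^2/(\hat N_P p_+(1-p_+))}$, provided the feasibility condition $m_0 \leq \hat N_P (1 - 2 p_+) = 2 \hat N_P r$ is satisfied. Using $(a+b)^2 \leq 2a^2 + 2b^2$, the exponent is at most $\paren{4 \hat N_P r^2 + \hat N_\Q^2/\hat N_P}/\paren{p_+(1-p_+)}$. Hence it suffices to verify, uniformly over the range of $(\hat N_P, \hat N_\Q)$ allowed by $\bE = \bE_P \cap \bE_\Q$, the three facts: (i)~$\hat N_\Q \leq 2 \hat N_P r$; (ii)~both $\hat N_P r^2$ and $\hat N_\Q^2/\hat N_P$ are bounded by absolute constants; and (iii)~$p_+(1-p_+)$ is bounded below by an absolute constant. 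A direct computation then delivers $\tfrac{1}{4}\exp(-\log 3) = 1/12$.

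The main obstacle is the bookkeeping for (i) and (ii), which is where the hypothesis on $N_P$ versus $N_\Q$ enters. A mean-value argument on $x\mapsto x^n$ gives $r \asymp nq \cdot 2^{O(n)}$, and directly from the definition $\expec \hat N_P \asymp N_P c_1^n \epsilon^{n\beta} \cdot 2^{O(n)}$. Since $\bE$ pins $\hat N_P$ to within a factor of $2$ of $\expec\hat N_P$ and forces $\hat N_\Q \leq 2 N_\Q$, conditions (i) and (ii) reduce to $\expec \hat N_P \cdot r \gtrsim N_\Q$, $\expec \hat N_P \cdot r^2 \lesssim 1$, and $\expec \hat N_P \gtrsim N_\Q^2$. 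Substituting $\epsilon = (n N_P)^{-1/(2-\beta)}$, each of these three inequalities takes the common form $(n N_P)^{(2-(n+1)\beta)/(2-\beta)} \gtrsim N_\Q^2 \cdot 2^{O(n)} c_1^{-n}$, which is exactly the stated hypothesis: the factor $2^{4n}$ in the hypothesis together with the restriction $c_1 \leq 1/64$ absorb the $2^{O(n)}$ debris, while the assumption $n < 2/\beta - 1$ keeps the exponent $(2-(n+1)\beta)/(2-\beta)$ strictly positive so that the hypothesis is a genuine constraint. Condition (iii) is the easiest: since $r \to 0$ as $N_P \to \infty$, eventually $r \leq 1/4$ and so $p_+(1-p_+) \geq 3/16$. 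Plugging (i)--(iii) back into the Slud bound completes the proof.
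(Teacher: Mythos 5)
Your proof follows the same route as the paper's --- condition on $({\hat N}_P, {\hat N}_\Q)$, recognize ${\hat N}_+ \mid {\hat N}_P$ as $\mathrm{Binomial}({\hat N}_P, p_+)$, and invoke Slud's anticoncentration --- and is if anything a bit cleaner in applying Slud directly at the required deviation $m_0 = {\hat N}_P r + {\hat N}_\Q/2$ and explicitly recording the feasibility constraint $m_0 \leq {\hat N}_P(1-2p_+)$ that Lemma~\ref{lem:Slud's} requires. The gap is in the bookkeeping: you assert that (i) $\expec[{\hat N}_P]\, r \gtrsim N_\Q$, together with $\expec[{\hat N}_P]\, r^2 \lesssim 1$ and $\expec[{\hat N}_P] \gtrsim N_\Q^2$, all ``take the common form $(nN_P)^{(2-(n+1)\beta)/(2-\beta)} \gtrsim N_\Q^2\, 2^{O(n)} c_1^{-n}$, which is exactly the stated hypothesis.'' That is true only of the last one. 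Using $\expec[{\hat N}_P] \asymp n^{-1} c_1^n\, 2^{-O(n)} (nN_P)^{(2-(n+1)\beta)/(2-\beta)}$ and $r \asymp n\,\epsilon^{1-\beta}\, 2^{O(n)}$ one finds $\expec[{\hat N}_P]\, r \asymp c_1^n\, 2^{O(n)} (nN_P)^{(1-n\beta)/(2-\beta)}$, so the feasibility constraint (i) actually reduces to $(nN_P)^{(1-n\beta)/(2-\beta)} \gtrsim N_\Q\, 2^{O(n)} c_1^{-n}$: the exponent is smaller than $(2-(n+1)\beta)/(2-\beta)$ by exactly $(1-\beta)/(2-\beta) > 0$, and the right-hand side has $N_\Q$ rather than $N_\Q^2$. (Similarly $\expec[{\hat N}_P]\, r^2 \asymp c_1^n\, n\, 2^{O(n)} (nN_P)^{\beta(1-n)/(2-\beta)}$ has exponent $\leq 0$; that constraint is automatic and likewise not of the claimed form.)

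This is not merely cosmetic. The proposition allows $n\beta \geq 1$ (for instance $\beta = 1/2$, $n = 2 < 2/\beta - 1 = 3$), in which range $(1-n\beta)/(2-\beta) \leq 0$, so $(nN_P)^{(1-n\beta)/(2-\beta)} \leq 1$ for every $N_P$; with $c_1 \leq 1/64$ and $N_\Q \geq 1$ the requirement $\expec[{\hat N}_P]\, r \gtrsim N_\Q$ then \emph{fails} no matter how large $N_P$ is. So ``exactly the stated hypothesis'' is wrong, and your verification does not close on the full stated parameter range. For what it is worth, the paper's own proof applies Lemma~\ref{lem:Slud's} at deviation $\sqrt{{\hat N}_P p(1-p)}$ without checking the corresponding feasibility $\sqrt{{\hat N}_P p(1-p)} \leq {\hat N}_P(1-2p)$, which also fails in the same regime; the desired Binomial anticoncentration is plausible there because $p_+$ is then close to $1/2$, but it does not follow from Lemma~\ref{lem:Slud's} as stated and would need a complementary argument rather than the reduction you give.
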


\begin{proof} 
Consider $N_H = \braces{{\hat N}_P, {\hat N}_Q}$ such that $\bE \doteq \bE_P \cap \bE_Q$ holds. Let $\tilde B\doteq \braces {{\hat N}_{+} > \frac{1}{2} {\hat N}_P + \expec\brackets{{\hat N}_Q}}$, and notice that $\tilde B \subset \braces{{\hat N}_{+} > {\hat N}_{-}}$ under $\bE_Q \doteq \braces{\expec\brackets{{\hat N}_Q} \geq {\hat N}_Q/2}$. Therefore we only need to bound 
$\prob\paren{\tilde B \mid N_H} = \prob\paren{\tilde B \mid {\hat N}_P}$. Now, for $\sigma = \pm$, let ${\cal Z}_{P, \sigma}$ denote the set of homogeneous vectors in $\braces{Z_t: \forall i\in [n], X_{t, i} x_1 \, \land \,  Y_{t, i} = \sigma}$ generated by $P_{-}$, and 
notice that, conditioned on 
${\hat N}_P$, ${\hat N}_{+}$ is distributed as $\text{Binomial}({\hat N}_P, p)$, where 
\begin{align*}
    p &= \prob\paren{Z_t \in {\cal Z}_{P, +}\mid Z_t \in {\cal Z}_{P, +} \cup {\cal Z}_{P, -}, {\hat N}_P} = \prob\paren{Z_t \in {\cal Z}_{P, +}\mid Z_t \in {\cal Z}_{P, +} \cup {\cal Z}_{P, -}} 
    = \frac{\eta^n_{P, -}(x_1)}{\eta^n_{P, -}(x_1) + \eta^n_{P, +}(x_1)}. 
\end{align*}
Therefore, applying Lemma \ref{lem:Slud's}, 
\begin{align*} 
\prob\paren{{\hat N}_+ > {\hat N}_P \cdot p + \sqrt{{\hat N}_P \cdot p(1-p)} \mid {\hat N}_P} \geq \frac{1}{12}.
\end{align*}
We now just need to show that the event under the probability implies $\tilde B$, in other words, that 
\begin{align} 
\sqrt{{\hat N}_P \cdot p(1-p)} &\geq \frac{1}{2} {\hat N}_P - {\hat N}_P\cdot p + \expec\brackets{{\hat N}_\Q} 
 = {\hat N}_P\cdot \paren{\frac{1}{2} - p} + N_\Q. \label{eq:anticonc1}
\end{align}
Next we upper bound the r.h.s of \eqref{eq:anticonc1} and lower-bound its l.h.s. Under $\bE_P$ and using \eqref{eq:expecNPNQ}, we have: 
\begin{align} 
{\hat N}_P\cdot \paren{\frac{1}{2} - p} &\leq \expec\brackets{{\hat N}_P}\cdot \paren{(1-p) - p} 
= N_P P_X^n(x_1)\cdot \paren{\eta^n_{P, +}(x_1) - \eta^n_{P, -}(x_1)} \nonumber \\
&\leq N_P P_X^n(x_1)\cdot n \cdot \paren{\eta_{P, +}(x_1) - \eta_{P, -}(x_1)}, \label{eq:anticoncRhs}
\end{align}
where for the last inequality we used the fact that, for $a \geq b>0$, we have 
$a^n - b^n = (a-b) \sum_{k = 0}^{n-1} a^{n-1}\cdot \paren{\frac{b}{a}}^k$.
\sk{On the other hand, the conditions on $N_P$ let us lower-bound $\eta_{P, -}(x_1)$ by $1/4$}, and we have: 
\begin{align} 
{\hat N}_P \cdot p(1-p) &\geq \frac{1}{2}\expec \brackets{{\hat N}_P}\cdot p(1-p) 
= \frac{1}{2}N_P P_X^n(x_1)\cdot \frac{\eta^n_{P, -}(x_1)\cdot\eta^n_{P, +}(x_1)}{\eta^n_{P, -}(x_1) + \eta^n_{P, +}(x_1)} \nonumber \\
&\geq \skr{\frac{1}{2}N_P P_X^n(x_1)\cdot 2^{-3n}}. 
\label{eq:anticoncLhs}
\end{align}

Combining \eqref{eq:anticoncRhs} and \eqref{eq:anticoncLhs}, we see that $\tilde B$ is implied by
\begin{align*}
    2^{-(2n)}\sqrt{N_P P_X^n(x_1)} &\geq N_P P_X^n(x_1)\cdot n \cdot \paren{\eta_{P, +}(x_1) - \eta_{P, -}(x_1)} + N_\Q, \text{ which is in turn implied by}\\
    \sk{n^{-1}2^{-(4n)} c_1^n \paren{n\cdot N_P}^{\frac{2-(n+1)\beta}{2-\beta}}} &\geq \sk{2c_1^{2n}\paren{n\cdot  N_P}^{\frac{2-2n\beta}{2-\beta}} + 2N_\Q^2}. 
\end{align*}
The conditions of the proposition ensure that the above inequality holds. 
\end{proof}

We obtain a bound on $\prob (N_+ > {\hat N}_{-})$ as an immediate corollary to the above proposition. 
\begin{corollary} \label{cor:boundOnB}
Under the conditions of Proposition \ref{prop:likelihoodRedux1}, we have: 
$$\prob\paren{{\hat N}_{+} > {\hat N}_{-}} \geq \expec\brackets{\prob\paren{{\hat N}_{+} > {\hat N}_{-} \mid  {\hat N}_P, {\hat N}_Q}\ind{\bE}} 
\geq \frac{1}{12}\prob(\bE).$$
\end{corollary}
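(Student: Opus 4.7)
The plan is to derive both inequalities of the corollary as a direct consequence of Proposition~\ref{prop:likelihoodRedux1} via the tower property of conditional expectation. The key observation is that the event $\bE = \bE_P \cap \bE_\Q$ is measurable with respect to the $\sigma$-algebra generated by the pair $({\hat N}_P, {\hat N}_\Q)$, since both $\bE_P$ and $\bE_\Q$ are defined purely in terms of the counts ${\hat N}_P$ and ${\hat N}_\Q$ (see Proposition~\ref{prop:events}). Hence $\ind{\bE}$ pulls out of conditional expectations given $({\hat N}_P, {\hat N}_\Q)$.

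First I would establish the leftmost inequality by invoking the tower property and the fact that $0 \leq \ind{\bE} \leq 1$: writing
\begin{equation*}
\prob\paren{{\hat N}_{+} > {\hat N}_{-}} = \expec\brackets{\prob\paren{{\hat N}_{+} > {\hat N}_{-} \mid {\hat N}_P, {\hat N}_\Q}} \geq \expec\brackets{\prob\paren{{\hat N}_{+} > {\hat N}_{-} \mid {\hat N}_P, {\hat N}_\Q}\ind{\bE}},
\end{equation*}
which uses only that the conditional probability is nonnegative and that multiplying by $\ind{\bE}\leq 1$ can only decrease it.

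Next, for the second inequality, apply Proposition~\ref{prop:likelihoodRedux1} to the integrand pointwise: the proposition gives $\prob\paren{{\hat N}_{+} > {\hat N}_{-} \mid {\hat N}_P, {\hat N}_\Q} \ind{\bE} \geq \tfrac{1}{12} \ind{\bE}$ almost surely. Taking expectations of both sides yields $\expec[\prob({\hat N}_+ > {\hat N}_- \mid {\hat N}_P, {\hat N}_\Q)\ind{\bE}] \geq \tfrac{1}{12}\expec[\ind{\bE}] = \tfrac{1}{12}\prob(\bE)$, completing the proof.

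There is essentially no obstacle here: the corollary is a textbook application of taking expectations through a pointwise lower bound, and all the distributional work has already been carried out in Propositions~\ref{prop:events} and~\ref{prop:likelihoodRedux1}. The only point worth verifying is the measurability of $\bE$ with respect to $({\hat N}_P, {\hat N}_\Q)$, which is immediate from the definitions of $\bE_P$ and $\bE_\Q$.
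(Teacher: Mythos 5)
Your proof is correct and is precisely the argument the paper has in mind when it calls the corollary an ``immediate'' consequence of Proposition~\ref{prop:likelihoodRedux1}: the tower property plus $\ind{\bE} \leq 1$ gives the first inequality, and integrating the pointwise bound from the proposition gives the second. The observation that $\bE$ is measurable with respect to $({\hat N}_P, {\hat N}_Q)$ is the right (and only) point that needs to be noted for the argument to go through cleanly.
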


We now turn to the second condition of Proposition \ref{prop:likelihoodReduction}.

\begin{proposition}[$\delta_2$ from Proposition \ref{prop:likelihoodReduction}]  \label{prop:likelihoodRedux2}
Let \sk{$0\leq \beta < 1$}, $n>1$, and $N_\Q \geq 16$.  Let \sk{$0 < c_1 \leq 2^{-10}$} in the construction of $\eta_{P, \sigma}$, $\sigma = \pm$.
Suppose $N_P$ is sufficiently large so that
$$\text{(i)} \ N_P \geq 3 N_\Q \quad \text{and} \quad \text{(ii)} \  (n \cdot N_P)^{\frac{2-2\beta}{2-\beta}} 
\geq {4096\cdot n^2 \cdot c_1^{-1}}. $$

Let $Z \sim \Gamma_{-}$, and $\tilde n_\sigma = \tilde n_\sigma (Z), \, \sigma = \pm$ as defined in Proposition \ref{prop:likelihoodReduction}. We then have that 
$\prob\paren{\tilde n_{+} > \tilde n_{-}} \geq \frac{1}{48}$. 
\end{proposition}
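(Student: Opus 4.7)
The plan is to lower bound $\prob_{\Gamma_-}(\tilde n_+ > \tilde n_-)$ via multiplicative Chernoff conditioning together with Slud's anti-concentration applied to the labels at $x_1$ positions. The key preliminary observation is that each $\Q_-$ vector is deterministically all-$(x_1,-)$ (since $\eta_{\Q,-}(x_1)=0$), hence homogeneous, and contributes zero to both $\tilde n_+$ and $\tilde n_-$; only the $P_-$ vectors matter. Writing $N^P$ for the number of $P_-$ vectors, $m^P$ for the total $x_1$-occurrences among them, $K$ for the total number of $+1$ labels at those positions, $\tilde N^P$ for the number of all-$x_1$ $P_-$ vectors, and ${\hat N}^P_\pm$ for the number of $P_-$ vectors that are entirely $(x_1,\pm)$, a direct count yields
\[
\tilde n_+ - \tilde n_- \;=\; (2K - m^P) \;+\; n({\hat N}^P_- - {\hat N}^P_+).
\]

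First, I would use multiplicative Chernoff to condition on a high-probability event $E$ (with $\prob(E) \geq 39/40$, say) on which $N^P \in [N_P/2,\,2N_P]$ and $m^P \in [\mu/2,\,2\mu]$, where $\mu \doteq N^P n c_1 \epsilon^\beta$. Assumption (i) together with $N_\Q \geq 16$ gives $\alpha_P \geq 3/4$ and $N$ large, while (ii) rewrites as $\mu \gtrsim c_1 (nN_P)^{(2-2\beta)/(2-\beta)} \gtrsim n^2$, supplying the slack needed for Chernoff on $m^P$. On $E$, $K\mid m^P \sim \text{Binomial}(m^P,p)$ with $p = 1/2 - \epsilon^{1-\beta}$, so Slud's inequality (Lemma~\ref{lem:Slud's}) applied with $m_0 = m^P(1/2 - p) = m^P \epsilon^{1-\beta}$ is available (the side condition $m_0 \leq m^P(1-2p)$ is immediate). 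Using $\epsilon^{2-\beta} = (nN_P)^{-1}$ and $p(1-p) \geq 1/5$ (which follows from $(nN_P)^{(1-\beta)/(2-\beta)} \geq 64$, itself implied by (ii)), the Slud exponent $m^P \epsilon^{2-2\beta}/(p(1-p))$ simplifies to at most $20 c_1 \leq 1/50$, giving $\prob(2K > m^P \mid m^P, E) \geq \tfrac14 e^{-1/50} \geq 1/5$.

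The main obstacle is controlling the correction $n({\hat N}^P_- - {\hat N}^P_+)$. A pleasant simplification arises in the base case $n=2$: since $2k_t - n = 0$ when $k_t=1$ and the homogeneous contributions cancel exactly, the full-$x_1$ vectors net zero contribution to $\tilde n_+ - \tilde n_-$, so the signed sum over partial-$x_1$ $P_-$ vectors alone drives the result and Slud closes the argument. For $n \geq 3$, I would decompose $\tilde n_+ - \tilde n_- = (2K_p - m') + T$ with $T = \sum_{t\text{ full-}x_1}(2k_t-n)\ind{0 < k_t < n}$ depending only on labels in full-$x_1$ vectors, hence independent of $2K_p - m'$ given the $X$-values. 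At $p = 1/2$, each summand of $T$ is symmetric about $0$ with a point mass at $0$ of size $\geq 2^{1-n}$, so $\prob(T \geq 0 \mid \tilde N^P) \geq 1/2$; for the actual $p = 1/2 - \epsilon^{1-\beta}$, a mixture of a TV coupling to the symmetric case (effective when $\expec[\tilde N^P]$ is small, so that $n \tilde N^P \epsilon^{1-\beta}$ is small) and a Berry--Esseen estimate (effective when $\tilde N^P$ is large, in which case the normalised mean $\sqrt{\tilde N^P}\epsilon^{1-\beta}$ remains a small constant under constraint~(ii)) preserves $\prob(T \geq 0) \geq 1/3$. Combining by independence the Slud event with $\{T \geq 0\}$ and then union-bounding with $E$ produces $\prob(\tilde n_+ > \tilde n_-) \geq (1/5)\cdot(1/3)\cdot(39/40) > 1/48$ after substituting the stated numerical constants.
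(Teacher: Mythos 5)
Your proposal is structurally sound and correct in outline, but it takes a genuinely different route from the paper, and the central step for $n \geq 3$ is only sketched rather than proved.

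You decompose $\tilde n_+ - \tilde n_- = (2K_p - m') + T$, where the first term comes from partial-$x_1$ vectors and $T$ collects the full-$x_1$ non-homogeneous contributions, then you aim to show $2K_p > m'$ with constant probability (Slud) and independently $T \geq 0$ with constant probability. The paper instead uses the equivalent algebraic identity $\tilde n_+ - \tilde n_- = (2K - m^P) - n({\hat N}_{P,+} - {\hat N}_{P,-})$, introduces a deterministic slack $\delta$, applies a Bernstein concentration bound to show $n({\hat N}_{P,+} - {\hat N}_{P,-}) < \delta$ with high probability, and then applies Slud to the \emph{total} $x_1$-label count from $P_-$ to show the deviation dominates both the bias $\hat n_{x_1}(1/2-p)$ \emph{and} the slack $\delta/2$. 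The paper's version buys a unified argument that works for all $n$ (even the case $n=2$ needs no special treatment), uses only Chernoff, Bernstein, and Slud — all tools already set up in the paper — and has no awkward intermediate regimes. Your approach trades this for an extra constraint-check (the normalised mean $\sqrt{\tilde N^P n}\,\epsilon^{1-\beta}$ staying small), which you do verify correctly uses only hypothesis~(ii), plus the clean $n=2$ observation that $T \equiv 0$.

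The gap is that your $\prob(T \geq 0) \geq 1/3$ claim is not a proof. The ``mixture of a TV coupling and a Berry--Esseen estimate'' requires: (a) choosing a threshold $n_0$ for $\tilde N^P$, (b) verifying that $n_0\, n\, \epsilon^{1-\beta} \lesssim 1$ in the small-$\tilde N^P$ regime (needs an explicit comparison between $n_0$, which must be large enough that the Berry--Esseen error $O(1/\sqrt{\tilde N^P})$ is small for $\tilde N^P > n_0$, and the quantity $1/(n\epsilon^{1-\beta})$), (c) verifying that the $T$-summand variance is bounded \emph{away from zero} uniformly in $n \geq 3$ and $p$ near $1/2$ (otherwise the Berry--Esseen ratio $\rho_3/\sigma^3$ blows up), and (d) handling the intermediate regime where $\tilde N^P$ is a small constant and neither a Chernoff concentration on $\tilde N^P$ nor a sharp Berry--Esseen estimate applies cleanly. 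None of these is insurmountable — your asymptotic calculations correctly show all the relevant quantities shrink as $N_P \to \infty$ thanks to~(ii) and $c_1 \leq 2^{-10}$ — but they collectively form a nontrivial piece of analysis that is absent from the sketch. By contrast, the paper's slack-$\delta$ argument closes everything with a single application of Bernstein's inequality (using $V_\delta = 16 \vee \expec[{\hat N}_P]$), which is why the route taken there is preferable.

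Minor notational slips worth flagging: in your final combination you invoke $\prob(2K > m^P) \geq 1/5$ (a Slud bound over \emph{all} $P_-$ samples at $x_1$) but the decomposition that makes the $T$-factor independent uses $2K_p - m'$ (partial vectors only) — these are different events, and only the latter is conditionally independent of $T$; you do implicitly switch to $K_p$ later, so this appears to be a wording issue rather than a mathematical one, but it should be cleaned up.
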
 
\begin{proof} 
Under the notation of Proposition \ref{prop:likelihoodReduction}, let ${\hat N}_{P, -} \doteq {\hat N}_P - {\hat N}_{+}$, and 
for homogeneity of notation herein, let ${\hat N}_{P, +} \doteq {\hat N}_{+}$. Fix $\delta > 0$ to be defined, and notice that if $\delta > n\cdot \paren{{\hat N}_{P, +} - {\hat N}_{P, -}}$, then the event  
$$ \tilde A_\delta \doteq \braces {\tilde n_{+} + n\cdot {\hat N}_{P, +} \geq \tilde n _{-} + n\cdot {\hat N}_{P, -} + \delta } 
\text{ implies }\braces{\tilde n_{+} > \tilde n_{-}}.$$ 
As a first step, we want to upper-bound $\paren{{\hat N}_{P, +} - {\hat N}_{P, -}}$. 
Let $V_\delta$ denote an upper-bound on the variance of this quantity:
we have by Bernstein's inequality that, 
for any $t \leq \sqrt{V_\delta}$, with probability at least $1- e^{-t^2/4}$, 
\begin{align}
\paren{{\hat N}_{P, +} - {\hat N}_{P, -}} \leq \expec \brackets {{\hat N}_{P, +} - {\hat N}_{P, -}} + t\sqrt{V_\delta} \leq t\sqrt{V_\delta}. \label{eq:DeltaBernstein}
\end{align}
We therefore set $\delta = 4n \cdot \sqrt{V_\delta}$, whereby, for $\sqrt{V_\delta} \geq 4$, the event of \eqref{eq:DeltaBernstein} \sk{(with $t=4$)} happens with probability at least $1- 1/48$. Hence, we set  
$ V_\delta \doteq 16 \lor \expec \brackets{{\hat N}_{P, +} + {\hat N}_{P, -}} \geq 16 \lor \text{Var}\paren{{\hat N}_{P, +} - {\hat N}_{P, -}}$, where $\expec \brackets{{\hat N}_{P, +} + {\hat N}_{P, -}} \doteq  \expec \brackets{{\hat N}_P}$ is given in equation \eqref{eq:expecNPNQ}. We now proceed with a lower-bound on $\prob(\tilde A_\delta)$. 

{Let ${\hat n}_{x_1}$ denote the number of points sampled from $P_{-}$ that fall on $x_1$. 
}
Notice that, conditioned on these samples' indices, $n_+ \doteq \tilde n_{+} + n\cdot {\hat N}_{P, +}$ is distributed as $\text{Binomial}\paren{{\hat n}_{x_1}, p}$, where $p = \eta_{P, -}(x_1)$, the probability of $+$ given that $x_1$ is sampled from $P_{-}$. Applying Lemma \ref{lem:Slud's}, and integrating over ${\hat N}_\Q$, we have 
\begin{align} 
\prob\paren{n_+ > {\hat n}_{x_1} \cdot p + \sqrt{{\hat n}_{x_1}\cdot p(1-p)}} \geq \frac{1}{12}. \label{eq:anticonc2}
\end{align}

Now notice that $\tilde A_\delta$ holds whenever ${\hat n}_{+} \geq \frac{1}{2} \paren{{\hat n}_{x_1} + \delta}$, since ${\hat n}_{x_1} = \paren{\tilde n_{+} + n\cdot {\hat N}_{P, +}) + (\tilde n _{-} + n\cdot {\hat N}_{P, -}} $. 
Under the event of \eqref{eq:anticonc2}, we have ${\hat n}_{+} > \frac{1}{2} \paren{{\hat n}_{x_1} + \delta}$, whenever it holds that 
\begin{align} 
\sqrt{{\hat n}_{x_1}\cdot p(1-p)} \geq \frac{1}{2} \paren{{\hat n}_{x_1} + \delta} - {\hat n}_{x_1}\cdot p 
= {\hat n}_{x_1} \paren{\frac{1}{2} - p} + \frac{1}{2}\delta. \label{eq:implyingtildeAdelta}
\end{align}
Next we bound ${\hat n}_{x_1}$ with high probability. Consider any value of ${\hat N}_\Q$ such that 
$\bE_\Q$ (from Proposition \ref{prop:events}) holds, i.e., ${\hat N}_\Q \leq 2N_\Q $. Conditioned on such ${\hat N}_\Q$, 
${\hat n}_{x_1}$ is itself a Binomial with 
$$\expec\brackets{{\hat n}_{x_1} \mid {\hat N}_\Q} =  n (N-{\hat N}_\Q) \cdot P_X(x_1) 
\geq \frac{1}{2} n N  \cdot P_X(x_1) \geq  \frac{1}{2} n \cdot N_P \cdot P_X(x_1) 
= \sk{\frac{1}{2} c_1\paren{n\cdot N_P}^{\frac{2-2\beta}{2 -\beta}}},$$
where for the first inequality we used the fact that $N_P \geq 3 N_\Q$. 
Hence, by a multiplicative Chernoff bound, 
\begin{align} 
\prob\left(\frac{1}{2}\expec\brackets{{\hat n}_{x_1}| {\hat N}_\Q} \leq {\hat n}_{x_1}\leq 2\expec\brackets{{\hat n}_{x_1} | {\hat N}_\Q} \middle| {\hat N}_Q\right)\cdot\ind{\bE_Q} \geq \paren{1- \frac{1}{48}}\ind{\bE_Q}, \label{eq:nx1bound}
\end{align} 
whenever $\skr{c_1}\sk{\paren{n\cdot N_P}^{\frac{2-2\beta}{2 -\beta}} \geq 40}$. Now, by Proposition \ref{prop:events}, $\bE_\Q$ holds with probability at least 
$1- 1/48$ whenever $ \expec \brackets{{\hat N}_\Q} = N_\Q > 16$. Thus, integrating \eqref{eq:nx1bound} over ${\hat N}_\Q$, we get that, with probability at least $1-1/24$, 
\begin{align} 
\frac{1}{4}n \cdot N_P \cdot P_X(x_1) \leq {\hat n}_{x_1} \leq 2 n \cdot N \cdot P_X(x_1) \leq {\frac{8}{3} n \cdot N_P \cdot P_X(x_1)}.
\label{eq:nx1bound2}
\end{align} 

Thus, bounding both sides of \eqref{eq:implyingtildeAdelta}, $\tilde A_\delta$ holds whenever a) 
the events of \eqref{eq:anticonc2} and \eqref{eq:nx1bound2} hold, and b) the following inequality is satisfied: 
\begin{align} 
\sqrt{\frac{1}{4} n\cdot N_P\cdot P_X(x_1) \cdot p(1-p)} &\geq {\frac{8}{3}}n\cdot N_P\cdot P_X(x_1) \paren{\frac{1}{2} - p} + 2n\cdot \sqrt{V_\delta}, \text{ which holds whenever } \nonumber \\
\sk{\frac{1}{4\sqrt{2}}c_1^{1/2}\paren{n\cdot N_P}^{\frac{1-\beta}{2-\beta}}} 
&\geq \sk{\frac{8}{3} c_1 \paren{n \cdot N_P}^{\frac{1-\beta}{2-\beta}} + 
2n\cdot\paren{4 \lor n^{-1/2}c_1^{n/2}\paren{n\cdot N_P}^{\frac{1-(n+1)\beta/2}{2-\beta}}} }, \label{eq:condforrho2}
\end{align} 
where the r.h.s. and l.h.s. of \eqref{eq:condforrho2} upper and lower bound, respectively, the r.h.s. and l.h.s. of the previous inequality (using the setting of $V_\delta$ and \eqref{eq:expecNPNQ}, and \skr{lower-bounding $p(1-p)$ by $1/8$ for $N_P$ as large as assumed}). By the conditions of the Proposition, \eqref{eq:condforrho2} is satisfied. Thus, $\tilde A_\delta$ holds with probability at least $1/24$ since the events of \eqref{eq:anticonc2} and \eqref{eq:nx1bound2} hold together with that probability (using the fact that $\prob(A \cap B) \geq \prob(A) - \prob(B^\compl)$). Finally, we can conclude that $\braces{\tilde n_{+} > \tilde n_{-}}$ with probability at least $1/48$ since $\tilde A_\delta$ and the event of \eqref{eq:DeltaBernstein} hold together with that probability. 
\end{proof}

Finally we bound the $\Qt_\pm$ term in the likelihood equation \eqref{eq:nonadapt-likelihood}. 

\begin{proposition}[$\Qt_{+}/\Qt_{-}$] \label{prop:targetlikelihood} Let $n_\Qt >0$. Again let $Z \sim \Gamma_{-}$, and let $0< c_0 \leq 1/4$ in the construction of 
$\eta_{\Qt, \sigma}, \sigma = \pm$. 
$$\prob\paren{\frac{\Qt_{+}(Z_{N+1})}{\Qt_{-}(Z_{N+1})} \geq  1} \geq \frac{1}{84}.$$
\end{proposition}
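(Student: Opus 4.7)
The plan is to directly compute the likelihood ratio and reduce the event to a comparison of Binomial tails handled by Slud's inequality. Since $\eta_{\Qt,\pm}(x_0) = 1$ and $\Qt_X$ is the same under both $\Qt_+$ and $\Qt_-$, any observation $(x_0,1)$ contributes a factor of $1$ to $\Qt_+(Z_{N+1})/\Qt_-(Z_{N+1})$. Letting $m$ denote the number of $i \in [n_\Qt]$ with $X_i = x_1$ and $k$ the number of those further satisfying $Y_i = 1$, and setting $\alpha \doteq (1/2 + c_0 \epsilon_0^{1-\beta})/(1/2 - c_0 \epsilon_0^{1-\beta}) \geq 1$, the ratio becomes
\begin{equation*}
\frac{\Qt_{+}(Z_{N+1})}{\Qt_{-}(Z_{N+1})} = \alpha^{k}\cdot \alpha^{-(m-k)} = \alpha^{2k - m},
\end{equation*}
which is $\geq 1$ exactly when $k \geq m/2$ (with the $m=0$ case being the empty product, trivially $\geq 1$). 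So it suffices to lower bound $\prob(k \geq m/2)$.

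Next I would apply Slud's inequality conditionally on $m$. Under $\Gamma_-$, conditional on $m$, $k \sim \text{Binomial}(m,p)$ with $p = 1/2 - c_0\epsilon_0^{1-\beta}$; with $c_0 \leq 1/4$ and $\epsilon_0 \leq 1$ we have $p \in [1/4,1/2]$, in particular $p(1-p) \geq 3/16$. The threshold $m/2 = mp + m c_0 \epsilon_0^{1-\beta}$ is of the form $mp + m_0$ with $m_0 = m c_0 \epsilon_0^{1-\beta} \leq m(1-2p)$, so Lemma~\ref{lem:Slud's} gives
\begin{equation*}
\prob(k \geq m/2 \mid m) \;\geq\; \tfrac{1}{4}\exp\!\left( -\tfrac{16}{3}\, m\, c_0^{2}\, \epsilon_0^{2(1-\beta)}\right).
\end{equation*}

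Finally I would take expectation over $m$ and apply Jensen's inequality to the convex map $x \mapsto e^{-cx}$:
\begin{equation*}
\prob(k \geq m/2) \;\geq\; \tfrac{1}{4}\exp\!\left(-\tfrac{16}{3}\, c_0^{2}\, \epsilon_0^{2(1-\beta)}\, \expec[m]\right).
\end{equation*}
Under $\Gamma_-$, $m$ is Binomial with $\expec[m] = \tfrac{1}{2} n_\Qt \epsilon_0^{\beta}$, so $\epsilon_0^{2(1-\beta)} \expec[m] = \tfrac{1}{2} n_\Qt \epsilon_0^{2-\beta} \leq \tfrac{1}{2}$ by the defining choice $\epsilon_0 \leq n_\Qt^{-1/(2-\beta)}$. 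With $c_0 \leq 1/4$ the exponent is bounded by a small constant, yielding $\prob(k \geq m/2) \geq \tfrac{1}{4}e^{-1/3}$, which comfortably exceeds $1/84$.

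The argument is routine; the only thing to be careful about is the bookkeeping of constants (in particular the $c_0 \leq 1/4$ hypothesis is what simultaneously secures $p(1-p) \geq 3/16$ for Slud's inequality and controls the Jensen exponent), and the symmetry of $\eta_{\Qt,\pm}(x_1)$ about $1/2$ that makes the reduction to $\{k \geq m/2\}$ clean. There is no serious obstacle, since all randomness in the ratio is concentrated in the $x_1$-samples and the target construction was specifically chosen so that $\expec[m]\epsilon_0^{2(1-\beta)}$ is a constant.
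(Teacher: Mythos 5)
Your proof is correct, and while it shares the same skeleton as the paper's — reducing the likelihood ratio to the event $\{k \geq m/2\}$ and invoking Slud's inequality (Lemma~\ref{lem:Slud's}) conditionally on the count $m$ of $x_1$-samples — the way you integrate out the randomness in $m$ is genuinely different. The paper introduces a multiplicative-Chernoff event $\bE_\Qt = \{m \leq 2\,\expec[m]\}$ (probability $\geq 1/7$), restricts to it, uses a constant lower bound $1/12$ from Slud on that event, and multiplies, arriving at $1/84$. You instead keep Slud's inequality in its explicit exponential form $\frac{1}{4}\exp(-16mc_0^2\epsilon_0^{2(1-\beta)}/3)$ and apply Jensen's inequality to the convex map $m \mapsto e^{-cm}$, which gives $\prob(k \geq m/2) \geq \frac{1}{4}\exp(-16 c_0^2 \epsilon_0^{2(1-\beta)}\expec[m]/3)$ directly; the crucial design identity $\epsilon_0^{2(1-\beta)}\expec[m] = \tfrac{1}{2}n_\Qt\epsilon_0^{2-\beta} = \tfrac{1}{2}$ then makes the exponent a constant. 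This is cleaner (one inequality instead of a union of two events) and yields a strictly better constant — $\tfrac{1}{4}e^{-8c_0^2/3} \geq \tfrac{1}{4}e^{-1/6} \approx 0.21$, far above $1/84$; your stated $\tfrac{1}{4}e^{-1/3}$ is a harmless over-rounding of the exponent. The only minor bookkeeping to be explicit about is that the threshold $m_0 = m(1/2 - p)$ indeed satisfies the hypothesis $m_0 \leq m(1-2p)$ of Lemma~\ref{lem:Slud's} (it equals half of it), and that $p(1-p) \geq 3/16$ requires both $c_0 \leq 1/4$ and $\epsilon_0 \leq 1$, which hold by assumption.
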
 

The proof is given in Appendix \ref{app:nonadapt}, and follows similar lines as above, namely, isolate sufficient statistics (number of $\pm$ in $Z_{N+1}$) and concluding by anticoncentration upon proper conditioning. 

We can now combine all the above analysis into the following main proposition. 

\begin{proposition}\label{prop:noAdapt}
Pick any $0\leq \beta < 1$, $1\leq n < 2/\beta - 1$, and $N_\Q \geq 16$. Let \sk{$0 < c_1 \leq 2^{-10}$}, and $0< c_0 \leq  1/4$ in the constructions of $\eta_{P, \sigma}$, $\eta_{\Qt, \sigma}$, $\sigma = \pm$. 
Suppose $N_P$ is sufficiently large so that $N_P\geq 3N_\Q$, and also 
\sk{
\begin{itemize} 
\item[(i)] $(n \cdot N_P)^{\frac{2-2\beta}{2-\beta}} 
\geq {4096\cdot n^2 \cdot c_1^{-1}}$.
\item[(ii)] $(n \cdot N_P)^{(2-(n+1)\beta)/(2-\beta)} \geq 4\cdot N_\Q^2\cdot n\cdot 2^{4n}c_1^{-n}$.
\item [(iii)] $N_P^{(2- (n+1)\beta)/(2-\beta)}\geq 22\cdot n\cdot 2^n\cdot \skr{c_1^{-n}}$.
\end{itemize}
}
Let $\hat h$ denote any classification procedure having access to $Z\sim \Gamma_\sigma, \sigma = \pm$. We then have that 
\begin{align*}
    \inf_{\hat h} \sup_{\sigma \in \braces{\pm}} \prob_{\Gamma_\sigma}\paren{\E_{\Qt}(\hat h) \geq c_0\cdot  \paren{1 \land n_\Qt^{-1/(2-\beta)}}} \geq \frac{1}{12}\cdot\frac{1}{96}\cdot \frac{1}{84}.
\end{align*}

\end{proposition}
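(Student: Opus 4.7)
The plan is to reduce the excess-risk lower bound to a Le~Cam two-point testing problem between $\Gamma_{+}$ and $\Gamma_{-}$, and then lower bound the Bayes testing error by chaining the likelihood-ratio propositions already established.

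Since all classifiers in $\H$ of interest agree on $x_0$ (with label $1$) and differ only on $x_1$, any $\hat h$ reduces to a binary decision $\hat h(x_1) \in \{\pm\}$. Using $\Qt_X(x_1) = \frac{1}{2}\epsilon_0^{\beta}$ and $\eta_{\Qt,\sigma}(x_1) = \frac{1}{2} + \sigma c_0 \epsilon_0^{1-\beta}$, a one-line risk computation gives
$$\E_{\Qt_\sigma}(\hat h) \;=\; c_0\epsilon_0\cdot \ind{\hat h(x_1) \neq \sigma} \;=\; c_0(1 \land n_\Qt^{-1/(2-\beta)})\cdot \ind{\hat h(x_1) \neq \sigma},$$
so the event in the proposition coincides exactly with misidentifying $\sigma$ at $x_1$. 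The $\sigma \leftrightarrow -\sigma$ label-flip symmetry of $\Gamma_\sigma$ yields $\prob_{\Gamma_-}(\Gamma_+(Z)/\Gamma_-(Z) > 1) = \prob_{\Gamma_+}(\Gamma_-(Z)/\Gamma_+(Z) > 1) \doteq q$; a standard computation then gives $\mathrm{TV}(\Gamma_+,\Gamma_-) = 1 - 2q$, so that for any $\hat h$,
$$\sup_{\sigma}\prob_{\Gamma_\sigma}(\hat h(x_1) \neq \sigma) \;\geq\; \tfrac{1}{2}(1 - \mathrm{TV}(\Gamma_+, \Gamma_-)) \;=\; q.$$
It thus suffices to show $q \geq (1/12)(1/96)(1/84)$.

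To bound $q$, I would chain the preceding propositions. Proposition~\ref{prop:likelihoodRatio} factorises
$$q \;\geq\; \prob\bigl({\hat N}_+ > {\hat N}_- \,\land\, {\hat n}_+ \geq {\hat n}_-\bigr)\cdot \prob\bigl(\Qt_+(Z_{N+1})/\Qt_-(Z_{N+1}) \geq 1\bigr),$$
and Proposition~\ref{prop:targetlikelihood} (applicable for the prescribed range of $c_0$) supplies the second factor as $\geq 1/84$. For the first factor, Proposition~\ref{prop:likelihoodReduction} reduces it to $\delta_1\bigl(\delta_2 - \prob(\bE_P^\compl) - \prob(\bE_\Q^\compl)\bigr)$, where Propositions~\ref{prop:likelihoodRedux1} and \ref{prop:likelihoodRedux2} (both applicable under the stated hypotheses on $c_1,n,N_P,N_\Q$) deliver $\delta_1 = 1/12$ and $\delta_2 = 1/48$.

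The main obstacle is therefore to verify that the assumed conditions force $\prob(\bE_P^\compl) + \prob(\bE_\Q^\compl) \leq 1/96$, so that the bracket is at least $1/48 - 1/96 = 1/96$. For $\bE_\Q$, $N_\Q \geq 16$ plugged into Proposition~\ref{prop:events} gives $\prob(\bE_\Q^\compl) \leq e^{-N_\Q/3} \leq e^{-16/3}$, well under $1/192$. For $\bE_P$, I would lower bound $\eta_{P,+}^n(x_1) + \eta_{P,-}^n(x_1) \geq 2^{1-n}$ by convexity of $t \mapsto t^n$; plugging this into the expression for $\expec[{\hat N}_P]$ from \eqref{eq:expecNPNQ} and invoking condition (iii), $N_P^{(2-(n+1)\beta)/(2-\beta)} \geq 22 n\cdot 2^n c_1^{-n}$, shows that $\expec[{\hat N}_P]$ exceeds a sufficiently large absolute constant, making Proposition~\ref{prop:events}'s bound $2 e^{-\expec[{\hat N}_P]/8}$ also at most $1/192$. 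Multiplying the three factors $1/12$, $1/96$, and $1/84$ then concludes.
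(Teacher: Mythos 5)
Your plan follows essentially the same route as the paper: Le~Cam reduction to a binary test of $\sigma$ at $x_1$, factorization via Proposition~\ref{prop:likelihoodRatio}, reduction to $\delta_1(\delta_2 - \prob(\bE_P^\compl) - \prob(\bE_\Q^\compl))$ via Proposition~\ref{prop:likelihoodReduction}, plugging in $\delta_1 = 1/12$ and $\delta_2 = 1/48$, and verifying the concentration events via Proposition~\ref{prop:events} under conditions~(iii) and $N_\Q \geq 16$. The TV reformulation of the two-point test is a cosmetic repackaging of what Remark~\ref{rem:likelihood} already gives.

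There is, however, a genuine gap: the hypothesis allows $n = 1$ (e.g., any $n \geq 1$ is admissible when $\beta = 0$), but both Proposition~\ref{prop:likelihoodReduction}(ii) and Proposition~\ref{prop:likelihoodRedux2} explicitly require $n > 1$, because for $n = 1$ every $x_1$-vector is trivially homogeneous and $\tilde n_{\pm} \equiv 0$, so the event $\{\tilde n_+ > \tilde n_-\}$ is vacuous. Your assertion that these propositions are ``applicable under the stated hypotheses'' is therefore false for $n = 1$, and the chain of reductions breaks. The fix, as in the paper, is to treat $n = 1$ separately: observe that with $n = 1$ we have ${\hat n}_\sigma = {\hat N}_\sigma$, so $\{{\hat N}_+ > {\hat N}_-\}$ already implies $\{{\hat n}_+ \geq {\hat n}_-\}$, and one can pass directly through Proposition~\ref{prop:likelihoodRatio} and Corollary~\ref{cor:boundOnB} (which only needs Proposition~\ref{prop:likelihoodRedux1}) together with Proposition~\ref{prop:targetlikelihood}, bypassing the $\tilde n_\pm$ argument entirely and recovering the same numerical bound.

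Please supply this branch explicitly; without it the argument does not cover the full parameter range claimed in the statement.
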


\begin{proof}
Following the above propositions, again assume w.l.o.g. that $Z\sim \Gamma_{-}$. 
Let $\bE_P, \bE_Q$ as defined in Proposition \ref{prop:events} over $Z\sim \Gamma_{-}$, and notice that, under our assumptions on $N_\Q$ and \sk{\it (iii)} on $N_P$, each of these events occurs with probability at least $1-1/(2\cdot 96)$. 

Thus, for $n>1$, by Propositions \ref{prop:likelihoodRatio}, \ref{prop:likelihoodReduction}, and \ref{prop:targetlikelihood}, we have that 
$\prob_{\Gamma_{-}}\paren{\Gamma_{+}(Z)> \Gamma_{-}(Z)}$ is at least 
$\delta_1(\delta_2 - 1/96)\frac{1}{84}$. Now plug in $\delta_1 = 1/12$ and $\delta_2 = 1/48$ from Propositions 
\ref{prop:likelihoodRedux1} and \ref{prop:likelihoodRedux2}. For $n =1$, using Proposition \ref{prop:likelihoodRatio} and \ref{prop:targetlikelihood}, and noticing that $\{{\hat N}_{+}(Z) > {\hat N}_{-}(Z)\} \implies \{{\hat n}_{+}(Z) \geq {\hat n}_{-}(Z)\}$, we can conclude by Corollary \ref{cor:boundOnB} that $\prob_{\Gamma_{-}}\paren{\Gamma_{+}(Z)> \Gamma_{-}(Z)}$ is at least 
$\frac{1}{12}\prob\paren{\bE_P\cap\bE_Q}\cdot\frac{1}{84}$, again matching the lower-bound in the statement.  

Now, if $\hat h$ wrongly picks $\sigma = +$ (i.e. picks $h \in \H$, $h(x_1) = +$), then $\E_\Qt(\hat h) \geq \Qt_X(x_1)\cdot \paren{\eta_{\Qt, +} - \eta_{\Qt, -}} =   c_0\epsilon_0$. By Remark \ref{rem:likelihood}, for any $\hat h$, the probability that $\hat h$ picks $\sigma = +$ is bounded below by $\prob_{\Gamma_{-}}\paren{\Gamma_{+}(Z)> \Gamma_{-}(Z)}$.  
\end{proof}

We can now conclude with the proof of the main result of this section. 

\begin{proof}[Proof of Theorem \ref{thm:nonadaptivityfull}] 
The first part of the statement builds on Proposition \ref{prop:noAdapt} as follows. 
\skr{Set $c_0 = 1/4$ and $c_1 = 2^{-10}$}.
First, let $Z \sim \Gamma_\sigma$, and let ${\hat N}_\Q$ denote the number of vectors $Z_t \in Z$ that were generated by $\Q$ (as in Proposition \ref{prop:events}). Let 
$\bE_{\Q, \sharp}$ denote the event that ${\hat N}_Q \geq \expec [{\hat N}_\Q]/2 = N_\Q/2$, 
Let $\bE_\E \doteq \braces{\E_{\Qt}(\hat h) \geq c_0\cdot  \epsilon_{0}}$. By Proposition \ref{prop:noAdapt}, for some $\sigma \in \braces{\pm}$, we have that $\prob\paren{\bE_\E}$ is bounded below.

Now decouple the randomness in $Z$ as follows. Let $\zeta \doteq \braces{\zeta_t}_{t\in [N]}$ denote $N$ i.i.d. choices of $P_\sigma $ or $\Q_\sigma $ with respective probabilities $\alpha_P = N_P/N$ and $\alpha_\Q = N_\Q/N$; choose $Z_t\in Z$ according to $\zeta_t^n$. We then have that 
\begin{align*}
  \expec \brackets{\prob\paren{\bE_\E \mid \zeta} \mid  \bE_{\Q, \sharp}} 
  &\geq 
 \expec \brackets{\prob\paren{\bE_\E \mid \zeta}\cdot \ind{\bE_{\Q, \sharp}}} \\
 &\geq \expec \brackets{\prob\paren{\bE_\E \mid \zeta}} - \prob\paren{\bE_{\Q, \sharp}^\compl} 
 = \prob\paren{\bE_\E} - \prob\paren{\bE_{\Q, \sharp}^\compl} \geq c, 
\end{align*}
where we can bound $\prob\paren{\bE_{\Q, \sharp}^\compl}$ how ever small for $N_\Q$ sufficiently large (by a multiplicative Chernoff). Now conclude by noticing that the {above conditional expectation is a projection of the measure $\alpha^N$ onto $\M$ (via the injection $\zeta \mapsto \Pi \in \M$)} and is bounded below, implying $\sup_{\zeta \mid \bE_{\Q, \sharp}} \prob\paren{\bE_\E \mid \zeta}$ must be bounded below.

The second part of the theorem is a direct consequence of the results of Section \ref{sec:semiadaptive}. 
\end{proof}

\bibliographystyle{alpha}
\bibliography{refs}

\newpage 
\appendix 

{\bf \large Appendix}

\section{Proof of Lemma~\ref{lem:uniform-bernstein}}
\label{app:uniform-bernstein}

Let 
$W_{1:m} = (W_1,\ldots,W_{m})$ be a vector of 
independent $\W$-valued random variables (for some space $\W$),
not necessarily identically distributed.
Let $\F$ be a set of measurable functions $\W \to [-1,1]$, 
and let $|\F(m)|$ be the number of distinct vectors possible on 
$m$ points.
Let $\alpha = \{\alpha_1,\ldots,\alpha_m\} \in [0,1]^m$.
Define 
$\hat{\meas}_{\alpha}(f) \doteq \sum_{i=1}^{m} \alpha_i f(W_i)$,
and 
$\meas_{\alpha}(f) \doteq \EE\!\left[ \hat{\meas}_{\alpha}(f) \right]$, 
and also $\hat{\sigma}_{\alpha}^2(f) \doteq \sum_{i=1}^{m} \alpha_i^2 f^2(W_i)$,
and $\sigma_{\alpha}^2(f) \doteq \EE \hat{\sigma}_{\alpha}^2(f)$.
Define 
$\F_{\sigma} \doteq \{ f \in \F : \sigma_{\alpha}(f) \leq \sigma \}$.

The following lemma is immediate from a result of \cite{klein2005concentration} 
(see also \cite{boucheron2013concentration} Section 12.5).

\begin{lemma}
\label{lem:non-id-bousquet}
For any $\sigma > 0$ with $\F_{\sigma} \neq \emptyset$, 
letting $L_{\sigma} \doteq \sup_{f \in \F_{\sigma}} \left( \hat{\meas}_{\alpha}(f) - \meas_{\alpha}(f) \right)$, $\forall \epsilon > 0$, 
\begin{align*}
\P\!\left( L_{\sigma} \geq \EE\!\left[ L_{\sigma} \right] + 2\epsilon \right)
\leq \exp\!\left\{ - \frac{\epsilon}{4} \ln\!\left( 1 + 2\ln\!\left( 1 + \frac{\epsilon}{2\EE\!\left[ L_{\sigma} \right] + \sigma^2} \right)\right)  \right\}.
\end{align*}
\end{lemma}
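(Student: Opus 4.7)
The plan is to invoke the Klein--Rio concentration inequality for suprema of empirical processes of independent but non-identically distributed random variables as a black box, and then convert its Bennett-type tail bound into the double-logarithmic form stated in the lemma.

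First, I would cast the weighted process $\hat\meas_\alpha(f) - \meas_\alpha(f)$ into the canonical form assumed by Klein--Rio. Define absorbed functions $g_{f,i}(w) \doteq \alpha_i f(w)$; since $\alpha_i \in [0,1]$ and $\|f\|_\infty \leq 1$ we have $|g_{f,i}| \leq 1$, and
\[
L_\sigma \;=\; \sup_{f \in \F_\sigma} \sum_{i=1}^{m} \bigl(g_{f,i}(W_i) - \EE[g_{f,i}(W_i)]\bigr),
\]
while $\sum_i \EE[g_{f,i}^2(W_i)] = \sigma_\alpha^2(f) \leq \sigma^2$ for every $f \in \F_\sigma$. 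This places $L_\sigma$ squarely in the non-identical Bousquet/Klein--Rio framework, with $[-1,1]$-bounded summands and total variance at most $\sigma^2$.

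Second, I would apply the Klein--Rio extension of Bousquet's inequality to $L_\sigma$, which yields a Bennett-type tail bound of the shape
\[
\P\!\left( L_\sigma \geq \EE[L_\sigma] + x \right) \;\leq\; \exp\!\bigl( - V\, h(x/V) \bigr),
\]
with $V$ comparable to $2\EE[L_\sigma] + \sigma^2$ and $h(u) = (1+u)\ln(1+u) - u$ the Bennett function, once absolute constants in the cited result are tracked. To recover the exact form displayed in the statement, I would then apply the elementary scalar inequality
\[
h(u) \;\geq\; \tfrac{u}{4}\,\ln\!\bigl(1 + 2\ln(1+u)\bigr),\qquad u\geq 0,
\]
which one verifies by checking equality at $u=0$ and comparing derivatives (so the inequality holds throughout $u \geq 0$). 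Substituting $x = 2\epsilon$ and $V = 2\EE[L_\sigma] + \sigma^2$ into the Bennett bound and simplifying produces exactly the stated tail inequality.

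The main obstacle is bookkeeping rather than any deep new step. Klein and Rio phrase their result most directly as a Bousquet-type inequality of the form $\P(L_\sigma \geq \EE[L_\sigma] + \sqrt{2Vx} + Cx) \leq e^{-x}$, so extracting the Bennett form above requires inverting this relation and choosing $x = x(\epsilon)$ appropriately; matching their natural variance proxy with the particular combination $2\EE[L_\sigma] + \sigma^2$ appearing in the statement is what forces the factor of $2$ in $\EE[L_\sigma] + 2\epsilon$. The scalar $h$-inequality is routine but must be verified uniformly in $u \geq 0$, with care at both small $u$ (where both sides are of order $u^2$) and large $u$ (where the right side grows only like $u\ln\ln u$ and the inequality becomes slack).
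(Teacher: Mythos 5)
Your proposal follows essentially the same route as the paper, which states the lemma as an immediate consequence of the Klein--Rio (2005) extension of Bousquet's inequality to independent, non-identically distributed summands (see also Boucheron--Lugosi--Massart, Sec.~12.5), with the weights $\alpha_i$ absorbed into the function class exactly as you describe so that the summands are $[-1,1]$-bounded with total variance at most $\sigma^2$. The only real difference is that the paper reads the double-logarithmic tail form directly off the cited reference, whereas you reconstruct it from a Bennett-type bound via the scalar inequality $h(u) \geq \frac{u}{4}\ln\!\left(1+2\ln(1+u)\right)$; that inequality does hold, but since both sides agree to second order at $u=0$, establishing it requires a global comparison on $[0,\infty)$ rather than just checking derivatives at the origin as you suggest.
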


In particular, based on the inequality $\ln(1+x) \geq \frac{x}{1+x}$, 
this implies 
\begin{equation*}
\P\!\left( L_{\sigma} \geq \EE\!\left[ L_{\sigma} \right] + 2\epsilon \right)
\leq \exp\!\left\{ - \frac{\epsilon^2}{6\epsilon + 4\EE\!\left[ L_{\sigma} \right] + 2\sigma^2}  \right\}.
\end{equation*}
In particular, for any $\delta > 0$, setting 
\begin{equation*}
\epsilon = 12 \max\!\left\{ \sqrt{ (\EE[L_{\sigma}]+\sigma^2) \ln\!\left(\frac{1}{\delta}\right)}, \ln\!\left(\frac{1}{\delta}\right) \right\} 
\end{equation*}
reveals that, with probability at least $1-\delta$, 
\begin{equation}
\label{eqn:non-id-bousquet}
L_{\sigma} \leq \EE[L_{\sigma}] + 24 \max\!\left\{ \sqrt{ (\EE[L_{\sigma}]+\sigma^2) \ln\!\left(\frac{1}{\delta}\right)}, \ln\!\left(\frac{1}{\delta}\right) \right\}.
\end{equation}

Next, the following lemma bounds $\EE[L_{\sigma}]$ using a standard route. 

\begin{lemma}
\label{lem:expected-bernstein}
There is a numerical constant $C \geq 1$ such that,
for any $\sigma > 0$ with $\F_{\sigma} \neq \emptyset$, 
for $L_{\sigma}$ as in Lemma~\ref{lem:non-id-bousquet},
\begin{equation*}
\EE[L_{\sigma}] 
\leq C \sqrt{ \sigma^2 \ln(|\F(m)|) } + C \ln(|\F(m)|).
\end{equation*}
\end{lemma}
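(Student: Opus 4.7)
The plan is to follow the standard symmetrization and finite-class maximal inequality route, adapted to the non-i.i.d.\ setting.

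First I would symmetrize: introducing an independent ghost sample $W'_{1:m}$ (with $W'_i$ distributed as $W_i$) and Rademacher signs $\epsilon_1,\ldots,\epsilon_m$, the usual Jensen/Fubini argument — which only requires independence of the $W_i$, not identical distribution — yields
\begin{equation*}
\EE[L_\sigma] \;\leq\; 2\,\EE\!\left[\sup_{f \in \F_\sigma} \sum_{i=1}^{m} \alpha_i \epsilon_i f(W_i)\right].
\end{equation*}
Boundedness of $f$ by $1$ in absolute value keeps every term integrable, and the supremum on the right side is, conditional on $W_{1:m}$, a maximum over at most $|\F(m)|$ distinct evaluation vectors.

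Next I would condition on $W_{1:m}$ and apply Bernstein to the Rademacher average for each effective $f$: each $\alpha_i \epsilon_i f(W_i)$ is bounded in absolute value by $\alpha_i \leq 1$ and has conditional variance $\alpha_i^2 f^2(W_i)$, so the sum has variance proxy $\hat{\sigma}_\alpha^2(f) \leq \hat{V} \doteq \sup_{f \in \F_\sigma}\hat{\sigma}_\alpha^2(f)$. A union bound over the at most $|\F(m)|$ effective functions, followed by integrating the tail, gives
\begin{equation*}
\EE_\epsilon\!\left[\sup_{f \in \F_\sigma} \sum_{i=1}^{m}\alpha_i \epsilon_i f(W_i)\;\Big|\;W_{1:m}\right] \;\leq\; C\sqrt{\hat{V}\,\ln|\F(m)|} \;+\; C\,\ln|\F(m)|.
\end{equation*}
Taking expectations over $W_{1:m}$ and applying Jensen's inequality to the square root, the task reduces to bounding $\EE[\hat{V}]$ by a constant multiple of $\sigma^2$ (up to lower-order $\ln|\F(m)|$ terms that can be absorbed into the additive $C\ln|\F(m)|$).

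The main obstacle is precisely this control of $\EE[\hat{V}] = \EE\sup_{f \in \F_\sigma}\sum_i \alpha_i^2 f^2(W_i)$: individual $f$'s satisfy $\EE[\hat{\sigma}_\alpha^2(f)] = \sigma_\alpha^2(f) \leq \sigma^2$, but the supremum requires another uniform deviation bound. I would handle this by a self-bounding step: write
\begin{equation*}
\EE[\hat{V}] \;\leq\; \sigma^2 \;+\; \EE\sup_{f \in \F_\sigma}\Bigl(\sum_i \alpha_i^2 f^2(W_i) - \sigma_\alpha^2(f)\Bigr),
\end{equation*}
and bound the second term by re-running the symmetrization/Bernstein argument above on the class $\{f^2 : f \in \F_\sigma\}$, whose shatter coefficient on the sample is at most $|\F(m)|$ and whose functions lie in $[0,1]$. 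The resulting conditional variance proxy is $\sum_i \alpha_i^4 f^4(W_i) \leq \hat{V}$, so the inequality becomes a quadratic in $\sqrt{\EE[\hat V]}$ of the form $\EE[\hat V] \leq \sigma^2 + C\sqrt{\EE[\hat V]\,\ln|\F(m)|} + C\ln|\F(m)|$, which solves to $\EE[\hat V] \lesssim \sigma^2 + \ln|\F(m)|$. Substituting back and rescaling $C$ gives the stated bound.
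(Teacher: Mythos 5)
Your proof is correct, and it takes a genuinely different route from the paper's. The paper's proof is essentially two citations: symmetrization via Lemma~11.4 of Boucheron--Lugosi--Massart, followed by a black-box invocation of their Exercise~13.2 (itself based on a chaining/maximal-inequality result from van der Vaart--Wellner), which directly converts the expected Rademacher supremum into the $C\sigma\sqrt{\log|\F(m)|} + C\log|\F(m)|$ bound. You instead re-derive the content of that exercise from scratch: you get a bound in terms of the \emph{empirical} variance envelope $\hat V = \sup_f \hat\sigma_\alpha^2(f)$ via a finite-class Bernstein maximal inequality conditional on the sample, and then close the loop by a self-bounding argument applied to the class $\{f^2\}$ — whose conditional variance proxy $\sum_i \alpha_i^4 f^4(W_i) \leq \hat V$ yields the quadratic $\EE[\hat V] \leq \sigma^2 + C\sqrt{\EE[\hat V]\ln|\F(m)|} + C\ln|\F(m)|$, hence $\EE[\hat V] \lesssim \sigma^2 + \ln|\F(m)|$. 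All the ingredients check out: $\alpha_i \leq 1$, $|f|\leq 1$ so $f^2 \in [0,1]$, the projection class $\{f^2\}$ has at most $|\F(m)|$ distinct evaluation vectors, and Jensen moves the expectation inside the square root where needed. What the two approaches buy: the paper's is shorter and offloads the hard step to a standard reference; yours is fully self-contained and makes the mechanism transparent (finite-class maximal inequality plus a self-bounding bootstrap to pass from empirical to population variance), which is arguably more instructive and avoids any dependence on the exact form of the cited exercise.
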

\begin{proof}
From Lemma 11.4 of \cite{boucheron2013concentration}, 
for $\epsilon_1,\ldots,\epsilon_m$ independent ${\rm Uniform}(\{-1,1\})$ 
(and independent of $W_{1:m}$), 
\begin{equation*}
\EE[L_{\sigma}] \leq 2 \EE\!\left[ \sup_{f \in \F_{\sigma}} \sum_{i=1}^{m} \epsilon_i \alpha_i \left(f(W_i) - \EE[ f(W_i) ]\right) \right].
\end{equation*}
Furthermore, from \cite{boucheron2013concentration} (Exercise 13.2, 
based on a result proved in \cite{van-der-Vaart:96}; 
see also \cite{koltchinskii2011oracle}), 
there is a numerical constant 
$C > 0$ such that 
\begin{align*}
\EE\!\left[ \sup_{f \in \F_{\sigma}} \sum_{i=1}^{m} \epsilon_i \alpha_i \left(f(W_i) - \EE[ f(W_i) ]\right) \right]
\leq C \sigma \sqrt{ \log(|\F(m)|) }
+ C \log(|\F(m)|).  
\end{align*}
\end{proof}

In particular, the above results imply the following lemma.

\begin{lemma}
\label{lem:unlocalized-bernstein}
There exists a numerical constant $C \geq 1$ such that, 
for any $\delta \in (0,1)$, with probability at least $1-\delta$, 
every $f \in \F$ satisfies
\begin{equation*}
\hat{\meas}_{\alpha}(f) \leq \meas_{\alpha}(f)
+ C \sqrt{\sigma_{\alpha}^2(f) \ln\!\left(\frac{|\F(m)|\log_{2}(2m)}{\delta}\right)} 
+ C \ln\!\left(\frac{|\F(m)|\log_{2}(2m)}{\delta}\right).
\end{equation*}
\end{lemma}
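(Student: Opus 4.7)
The plan is a peeling argument over variance levels: apply the restricted supremum bound~\eqref{eqn:non-id-bousquet} together with Lemma~\ref{lem:expected-bernstein} on a geometric grid of values $\sigma_j$, and then match each $f \in \F$ to the grid level that just covers its variance. Since $f$ takes values in $[-1,1]$ and $\alpha_i \in [0,1]$, we have $\sigma_\alpha^2(f) \leq \sum_i \alpha_i^2 \leq m$, so every $f \in \F$ lies in $\F_{\sqrt{m}}$. Define the grid $\sigma_j \doteq \sqrt{m}\cdot 2^{-j}$ for $j = 0,1,\ldots,J$ with $J \doteq \lceil \log_2(2m)\rceil$, so that $\sigma_J^2 \leq 1/m$.

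For each $j$, I apply~\eqref{eqn:non-id-bousquet} to $\F_{\sigma_j}$ with confidence $\delta_j \doteq \delta/(J+1)$, and insert the bound on $\EE[L_{\sigma_j}]$ from Lemma~\ref{lem:expected-bernstein}. Elementary algebra (splitting the $\max$ in~\eqref{eqn:non-id-bousquet} into its two cases, then using AM--GM to absorb the cross term $\sqrt{\EE[L_{\sigma_j}] \ln(1/\delta_j)}$ into a constant multiple of $\sqrt{\sigma_j^2 \ln(|\F(m)|/\delta_j)} + \ln(|\F(m)|/\delta_j)$) yields, with probability at least $1-\delta_j$, that every $f \in \F_{\sigma_j}$ satisfies
\[
\hat{\meas}_\alpha(f) - \meas_\alpha(f) \ \leq\ C'\sqrt{\sigma_j^2\, L} + C' L,
\]
where $L \doteq \ln(|\F(m)| \log_2(2m)/\delta)$ and $C'$ is a numerical constant. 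A union bound over $j = 0,\ldots,J$ costs a factor of $J+1 \leq 2\log_2(2m)$, which is already built into $L$, so all $J+1$ inequalities hold simultaneously with probability at least $1-\delta$.

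On this good event, fix any $f \in \F$. If $\sigma_\alpha(f) \geq \sigma_J$, pick the unique $j \in \{0,\ldots,J-1\}$ with $\sigma_{j+1} < \sigma_\alpha(f) \leq \sigma_j$; then $f \in \F_{\sigma_j}$ and $\sigma_j \leq 2\sigma_\alpha(f)$, so the level-$j$ inequality yields the claim with constant $2C'$ in front of the square root. If instead $\sigma_\alpha(f) < \sigma_J$, then $f \in \F_{\sigma_J}$ and $\sqrt{\sigma_J^2 L} \leq \sqrt{L/m} \leq L$ (tolerating an absolute-constant inflation in the degenerate range $L < 1/m$), so the entire bound collapses into an $O(L)$ term and matches the claim. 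The only technical obstacle is the algebraic clean-up in the first step --- converting the raw concentration inequality~\eqref{eqn:non-id-bousquet}, which entangles $\sqrt{\EE[L_{\sigma_j}]}$, $\sigma_j$, and $\ln(1/\delta_j)$, into a single clean Bernstein expression $C'\sqrt{\sigma_j^2 L} + C'L$ --- but this is routine once one commits to the geometric grid and tolerates absolute-constant slack.
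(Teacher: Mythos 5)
Your proof is correct and follows essentially the same route as the paper's: a peeling/union bound over a geometric grid of $O(\log_2(2m))$ variance levels, combining \eqref{eqn:non-id-bousquet} with the expectation bound of Lemma~\ref{lem:expected-bernstein} at each level, and then matching each $f$ to the grid level just above $\sigma_{\alpha}(f)$. The only cosmetic difference is that the paper's grid $\sigma_k^2 = 2^k$, $k \in \{0,\ldots,\log_2 m\}$, increases from $1$ and absorbs small variances via $\max\{1, 2\sigma_{\alpha}^2(f)\}$, whereas yours decreases from $m$ down to $1/m$; both incur the same $\log_2(2m)$ union-bound factor and the same constant-level slack.
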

\begin{proof}
Let $\delta_m \doteq \frac{\delta}{\log_{2}(2m)}$.
Combining Lemma~\ref{lem:expected-bernstein} with \eqref{eqn:non-id-bousquet} 
implies that, letting $\sigma_{k}^2 = 2^{k}$ ($k \in \{0,\ldots,\log_{2}(m)\}$),
with probability at least $1 - \delta$ (by a union bound), 
every $k \in \{0,\ldots,\log_{2}(m)\}$ has (for a numerical constant $C \geq 1$)
\begin{align*}
L_{\sigma_{k}} & \leq 24 \max\!\left\{ \sqrt{ (\EE[L_{\sigma_{k}}]+\sigma_{k}^2) \ln\!\left(\frac{1}{\delta_m}\right)}, \ln\!\left(\frac{1}{\delta_{m}}\right) \right\}
+ C \sqrt{ \sigma_{k}^2 \ln(|\F(m)|) } + C \ln(|\F(m)|)
\\ & \leq 24 \sqrt{ (\EE[L_{\sigma_{k}}]+\sigma_{k}^2) \ln\!\left(\frac{1}{\delta_{m}}\right)} + C \sqrt{ \sigma_{k}^2 \ln(|\F(m)|) } + (24+C) \ln\!\left(\frac{|\F(m)|}{\delta_{m}}\right)
\\ & \leq 24 \sqrt{2 \sigma_{k}^2 \ln\!\left(\frac{1}{\delta_{m}}\right)} 
+ 48 \sqrt{ C \sqrt{ \sigma_{k}^2 \ln(|\F(m)|) } \ln\!\left(\frac{1}{\delta_{m}}\right) }
+ 48 \sqrt{ C \ln(|\F(m)|) \ln\!\left(\frac{1}{\delta_{m}}\right) }
\\ & \phantom{\leq} + C \sqrt{ \sigma_{k}^2 \ln(|\F(m)|) } + (24+C) \ln\!\left(\frac{|\F(m)|}{\delta_{m}}\right)
\\ & \leq (72+2C+48\sqrt{C}) \left(  \sqrt{\sigma_{k}^2 \ln\!\left(\frac{|\F(m)|}{\delta_{m}}\right)} 
+ \ln\!\left(\frac{|\F(m)|}{\delta_{m}}\right) \right).
\end{align*}
Suppose this event occurs.
In particular, for each $f \in \F$, let $k(f) \doteq \min\{ k : f \in \F_{\sigma_k} \}$.
Then every $f \in \F$ has (for a universal numerical constant $C^{\prime} \geq 1$) 
\begin{align*}
\hat{\meas}_{\alpha}(f) 
& \leq \meas_{\alpha}(f)
+ C^{\prime} \sqrt{\sigma_{k(f)}^2 \ln\!\left(\frac{|\F(m)|}{\delta_{m}}\right)} 
+ C^{\prime} \ln\!\left(\frac{|\F(m)|}{\delta_{m}}\right)
\\ & \leq \meas_{\alpha}(f)
+ C^{\prime} \sqrt{\max\!\left\{ 1, 2\sigma_{\alpha}^2(f) \right\} \ln\!\left(\frac{|\F(m)|}{\delta_{m}}\right)} 
+ C^{\prime} \ln\!\left(\frac{|\F(m)|}{\delta_{m}}\right)
\\ & \leq \meas_{\alpha}(f)
+ C^{\prime} \sqrt{2\sigma_{\alpha}^2(f) \ln\!\left(\frac{|\F(m)|}{\delta_{m}}\right)} 
+ 2C^{\prime} \ln\!\left(\frac{|\F(m)|}{\delta_{m}}\right).
\end{align*}
\end{proof}

We can also state a concentration result specific to the 
$\hat{\sigma}_{\alpha}^{2}(f)$ values, as follows.

\begin{lemma}
\label{lem:bernstein-variance-concentration}
There exists a numerical constant $C \geq 1$ such that, 
for any $\delta \in (0,1)$, with probability at least $1-\delta$, 
every $f \in \F$ satisfies
\begin{equation*}
\frac{1}{2} \sigma_{\alpha}^{2}(f) - C \ln\!\left(\frac{|\F(m)|\log_{2}(2m)}{\delta}\right)
\leq \hat{\sigma}_{\alpha}^{2}(f)
\leq 2 \sigma_{\alpha}^{2}(f) + C \ln\!\left(\frac{|\F(m)|\log_{2}(2m)}{\delta}\right).
\end{equation*}
\end{lemma}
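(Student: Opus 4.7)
\textbf{Proof proposal for Lemma~\ref{lem:bernstein-variance-concentration}.}

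The plan is to reduce this variance concentration result to a direct application of Lemma~\ref{lem:unlocalized-bernstein}, applied to the squared function class $\G \doteq \{f^2 : f \in \F\}$ with reweighted coefficients $\alpha'_i \doteq \alpha_i^{2}$. Since $f : \W \to [-1,1]$, each $g = f^2 \in \G$ satisfies $g : \W \to [0,1] \subset [-1,1]$, and since $\alpha_i \in [0,1]$, we also have $\alpha'_i \in [0,1]$, so Lemma~\ref{lem:unlocalized-bernstein} applies. Moreover the map $f \mapsto f^2$ on the $m$ points $W_1,\ldots,W_m$ is well-defined, so $|\G(m)| \leq |\F(m)|$.

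By construction, $\hat{\meas}_{\alpha'}(g) = \sum_i \alpha_i^2 f^2(W_i) = \hat{\sigma}_{\alpha}^2(f)$ and $\meas_{\alpha'}(g) = \sigma_{\alpha}^2(f)$. The variance term appearing in Lemma~\ref{lem:unlocalized-bernstein} when applied to $\G$ with weights $\alpha'$ is
\begin{equation*}
\sigma_{\alpha'}^2(g) \;=\; \sum_{i=1}^{m} (\alpha_i^2)^2 \, \EE\!\left[ f^4(W_i) \right] \;\leq\; \sum_{i=1}^m \alpha_i^2 \, \EE\!\left[f^2(W_i)\right] \;=\; \sigma_{\alpha}^2(f),
\end{equation*}
where the inequality uses $\alpha_i^2 \leq 1$ and $f^4 \leq f^2$. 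Therefore Lemma~\ref{lem:unlocalized-bernstein}, applied once to $\G$ and once to $-\G \doteq \{-g : g \in \G\}$ (also valued in $[-1,1]$, with the same covering bound), yields: with probability at least $1-\delta$, every $f \in \F$ satisfies
\begin{equation*}
\bigl| \hat{\sigma}_{\alpha}^2(f) - \sigma_{\alpha}^2(f) \bigr|
\;\leq\; C_0\sqrt{\sigma_{\alpha}^2(f)\,L} \;+\; C_0 L, \qquad L \doteq \ln\!\left(\tfrac{|\F(m)|\log_2(2m)}{\delta}\right),
\end{equation*}
for a numerical constant $C_0$ (absorbing the factor of $2$ from the union bound over $\G$ and $-\G$ into $\delta$).

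The final step is a standard AM--GM manipulation: from $C_0\sqrt{\sigma_{\alpha}^2(f)\,L} \leq \tfrac{1}{2}\sigma_{\alpha}^2(f) + \tfrac{C_0^2}{2} L$, applied to both directions of the two-sided bound, we obtain $\hat{\sigma}_{\alpha}^2(f) \leq \tfrac{3}{2}\sigma_{\alpha}^2(f) + C L$ and $\sigma_{\alpha}^2(f) \leq 2\hat{\sigma}_{\alpha}^2(f) + C L$ for an enlarged constant $C$; the latter rearranges to $\hat{\sigma}_{\alpha}^2(f) \geq \tfrac{1}{2}\sigma_{\alpha}^2(f) - (C/2)L$, and the former is strictly stronger than the claimed upper bound. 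Since this reduces the whole argument to Lemma~\ref{lem:unlocalized-bernstein} plus elementary algebra, there is no genuine obstacle; the only points requiring care are (i) verifying that squaring preserves boundedness of functions and weights so the hypotheses of Lemma~\ref{lem:unlocalized-bernstein} remain satisfied, (ii) bounding $\sigma_{\alpha'}^2(g)$ by $\sigma_{\alpha}^2(f)$ as above so that the resulting concentration is expressed in terms of the \emph{original} variance, and (iii) obtaining the lower-deviation bound, for which we apply Lemma~\ref{lem:unlocalized-bernstein} to $-\G$ rather than re-deriving it.
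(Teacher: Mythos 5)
Your proposal is correct and follows essentially the same route as the paper: apply Lemma~\ref{lem:unlocalized-bernstein} to the squared class $\{f^2\}$ (and its negation) with reweighted coefficients $\alpha_i^2$, bound the resulting variance proxy $\sum_i \alpha_i^4 \EE[f^4(W_i)]$ by $\sigma_{\alpha}^2(f)$, and union-bound the two one-sided events. The only cosmetic difference is that you absorb the $\sqrt{\sigma_{\alpha}^2(f)\,L}$ cross-term via AM--GM, whereas the paper does a two-case split on whether $\sigma_{\alpha}^2(f)$ exceeds a multiple of $L$; both yield the stated constants-up-to-rescaling bound.
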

\begin{proof}
Define $\F' \doteq \{ f^2 : f \in \F \}$ 
and note that $|\F'(m)| \leq |\F(m)|$.
Also define $\alpha'_i \doteq \alpha_i^2$.
Applying Lemma~\ref{lem:unlocalized-bernstein} with this $\F'$ and $\alpha'$, 
we have that, with probability at least $1-\delta/2$, 
every $f \in \F$ satisfies (for some numerical constant $C \geq 1$)
\begin{align}
\hat{\sigma}_{\alpha}^{2}(f) 
& \leq \sigma_{\alpha}^{2}(f)
+ C \sqrt{ \EE\!\left[\sum_{i=1}^{m} \alpha_i^4 f^4(W_i)\right] \ln\!\left(\frac{|\F(m)|2\log_{2}(2m)}{\delta}\right)} 
+ C \ln\!\left(\frac{|\F(m)|2\log_{2}(2m)}{\delta}\right)
\notag \\ & \leq \sigma_{\alpha}^{2}(f)
+ C \sqrt{ \sigma_{\alpha}^{2}(f) \ln\!\left(\frac{|\F(m)|2\log_{2}(2m)}{\delta}\right)} 
+ C \ln\!\left(\frac{|\F(m)|2\log_{2}(2m)}{\delta}\right). \label{eqn:bvb-1}
\end{align}
If $\sigma_{\alpha}^{2}(f) \geq C^2 \ln\!\left(\frac{|\F(m)|2\log_{2}(2m)}{\delta}\right)$ 
then the expression in \eqref{eqn:bvb-1} is at most 
\begin{equation*}
2 \sigma_{\alpha}^{2}(f)
+ C \ln\!\left(\frac{|\F(m)|2\log_{2}(2m)}{\delta}\right),
\end{equation*}
while if $\sigma_{\alpha}^{2}(f) \leq C^2 \ln\!\left(\frac{|\F(m)|2\log_{2}(2m)}{\delta}\right)$ 
then the expression in \eqref{eqn:bvb-1} is at most 
\begin{equation*}
\sigma_{\alpha}^{2}(f)
+ (C^2+C) \ln\!\left(\frac{|\F(m)|2\log_{2}(2m)}{\delta}\right).
\end{equation*}
Thus, either way we have 
\begin{equation*}
\hat{\sigma}_{\alpha}^{2}(f) \leq 
2 \sigma_{\alpha}^{2}(f)
+ (C^2+C) \ln\!\left(\frac{|\F(m)|2\log_{2}(2m)}{\delta}\right).
\end{equation*}

On the other hand, consider the set $\F'' \doteq \{ - f^2 : f \in \F \}$ 
and note that again we have $|\F''(m)| \leq |\F(m)|$.
Applying Lemma~\ref{lem:unlocalized-bernstein} with this $\F''$ and $\alpha'$, 
we have that, with probability at least $1-\delta/2$, 
every $f \in \F$ satisfies (for some numerical constant $C \geq 1$)
\begin{align}
-\hat{\sigma}_{\alpha}^{2}(f) 
& \leq -\sigma_{\alpha}^{2}(f)
+ C \sqrt{ \EE\!\left[\sum_{i=1}^{m} \alpha_i^4 f^4(W_i)\right] \ln\!\left(\frac{|\F(m)|2\log_{2}(2m)}{\delta}\right)} 
+ C \ln\!\left(\frac{|\F(m)|2\log_{2}(2m)}{\delta}\right)
\notag \\ & \leq - \sigma_{\alpha}^{2}(f)
+ C \sqrt{ \sigma_{\alpha}^{2}(f) \ln\!\left(\frac{|\F(m)|2\log_{2}(2m)}{\delta}\right)} 
+ C \ln\!\left(\frac{|\F(m)|2\log_{2}(2m)}{\delta}\right). \label{eqn:bvb-2}
\end{align}
If $\sigma_{\alpha}^{2}(f) \geq 4 C^2 \ln\!\left(\frac{|\F(m)|2\log_{2}(2m)}{\delta}\right)$ 
then the expression in \eqref{eqn:bvb-2} is at most 
\begin{equation*}
- \frac{1}{2} \sigma_{\alpha}^{2}(f)
+ C \ln\!\left(\frac{|\F(m)|2\log_{2}(2m)}{\delta}\right),
\end{equation*}
while if $\sigma_{\alpha}^{2}(f) \leq 4 C^2 \ln\!\left(\frac{|\F(m)|2\log_{2}(2m)}{\delta}\right)$ 
then the expression in \eqref{eqn:bvb-2} is at most 
\begin{equation*}
- \sigma_{\alpha}^{2}(f)
+ (2C^2+C) \ln\!\left(\frac{|\F(m)|2\log_{2}(2m)}{\delta}\right).
\end{equation*}
Thus, either way we have 
\begin{equation*}
-\hat{\sigma}_{\alpha}^{2}(f)
\leq 
- \frac{1}{2} \sigma_{\alpha}^{2}(f)
+ (2C^2+C) \ln\!\left(\frac{|\F(m)|2\log_{2}(2m)}{\delta}\right),
\end{equation*}
which implies 
\begin{equation*}
\hat{\sigma}_{\alpha}^{2}(f)
\geq 
\frac{1}{2} \sigma_{\alpha}^{2}(f)
- (2C^2+C) \ln\!\left(\frac{|\F(m)|2\log_{2}(2m)}{\delta}\right).
\end{equation*}

The lemma now follows by a union bound, so that these two events 
(each of probability at least $1-\delta/2$) 
occur simultaneously, with probability at least $1-\delta$.
\end{proof}

We will now show that Lemma~\ref{lem:uniform-bernstein} follows 
directly from Lemmas~\ref{lem:unlocalized-bernstein} and 
\ref{lem:bernstein-variance-concentration}.

\begin{proof}[Proof of Lemma~\ref{lem:uniform-bernstein}]
Set $\W = \X \times \Y$, $W_i = (X_i,Y_i)$, and $\alpha_i = 1$.
For each $h,h' \in \H$, define 
$f_{h,h'}(x,y) \doteq \ind{ h'(x) \neq y } - \ind{ h(x) \neq y }$.
Note that $\hat{\sigma}_{\alpha}^{2}(f_{h,h'}) = m\hat{\prob}_{S}(h \neq h')$.
$\F \doteq \{ f_{h,h'} : h,h' \in \H \}$ and note that 
$|\F(m)| \leq |\H(m)|^2$.
Applying Lemma~\ref{lem:unlocalized-bernstein} with this $\F$, 
we have that with probability at least $1-\delta/2$, 
every $h,h' \in \H$ satisfy (for some universal numerical constant $C \geq 1$)
\begin{equation}
\label{eqn:bernstein-proof-1}
\hat{\E}_{S}(h';h) \leq \EE\!\left[ \hat{\E}_{S}(h';h) \right]
+ C \sqrt{\EE\!\left[ \hat{\prob}_{S}(h \neq h') \right] \frac{1}{m}\ln\!\left(\frac{2|\H(m)|^2 \log_{2}(2m)}{\delta}\right)} 
+ \frac{C}{m} \ln\!\left(\frac{2|\H(m)|^2\log_{2}(2m)}{\delta}\right).
\end{equation}
Furthermore, Lemma~\ref{lem:bernstein-variance-concentration} implies 
that, with probability at least $1-\delta/2$, 
every $h,h' \in \H$ satisfy (for some universal numerical constant $C' \geq 1$)
\begin{equation}
\label{eqn:bernstein-proof-2}
\frac{1}{2} \EE\!\left[ \hat{\prob}_{S}(h \!\neq\! h') \right] 
- \frac{C'}{m} \ln\!\left(\frac{2|\H(m)|^{2}\log_{2}(2m)}{\delta}\right)
\leq \hat{\prob}_{S}(h \!\neq\! h')
\leq 2 \EE\!\left[ \hat{\prob}_{S}(h \!\neq\! h') \right] + \frac{C'}{m} \ln\!\left(\frac{2|\H(m)|^{2}\log_{2}(2m)}{\delta}\right).
\end{equation}
By a union bound, with probability at least $1-\delta$, 
every $h,h' \in \H$ satisfy 
both of \eqref{eqn:bernstein-proof-1} and \eqref{eqn:bernstein-proof-2}.
In particular, combining \eqref{eqn:bernstein-proof-1} with 
the left inequality in \eqref{eqn:bernstein-proof-2}, 
this also implies every $h,h' \in \H$ satisfy 
\begin{align}
\hat{\E}_{S}(h';h) 
& \leq \EE\!\left[ \hat{\E}_{S}(h';h) \right]
+ 2 C \sqrt{ \hat{\prob}_{S}(h \neq h') \frac{1}{m}\ln\!\left(\frac{2|\H(m)|^2 \log_{2}(2m)}{\delta}\right)}
\notag \\ & {\hskip 1cm} + \left( 2 \sqrt{C'} + 1 \right) C \frac{1}{m}\ln\!\left(\frac{2|\H(m)|^2 \log_{2}(2m)}{\delta}\right).
\label{eqn:bernstein-proof-3}
\end{align}

Finally, recall from Sauer's lemma \cite{VC:72,sauer1972density} 
that $|\H(m)| \leq \left(\frac{e \max\{m,\vc\}}{\vc}\right)^{\vc}$, 
and therefore 
\begin{align*}
\frac{1}{m} \ln\!\left(\frac{2|\H(m)|^2 \log_{2}(2m)}{\delta}\right) 
& \leq \frac{1}{m} \ln\!\left( \left( \frac{e \max\{m,\vc\}}{\vc} \right)^{2\vc} \frac{2 \log_{2}(2m)}{\delta}\right)
\\ & \leq \frac{1}{m} \ln\!\left( \left( \frac{e \max\{m,\vc\}}{\vc} \right)^{3\vc} \frac{4}{\delta}\right)
\leq C'' \Abound(m,\delta)
\end{align*}
for some numerical constant $C'' \geq 1$.

Altogether (and subtracting 
$\EE\!\left[ \hat{\E}_{S}(h';h) \right]$ 
and $\hat{\E}_{S}(h';h)$ from both sides of \eqref{eqn:bernstein-proof-1}
and \eqref{eqn:bernstein-proof-3}), 
we have that with probability at least $1-\delta$, every $h,h' \in \H$ 
satisfy 
\begin{equation*}
\EE\!\left[ \hat{\E}_{S}(h;h') \right]
\leq \hat{\E}_{S}(h;h')
+ 2 C \sqrt{C''} \sqrt{ \min\!\left\{ \EE\!\left[ \hat{\prob}_{S}(h \neq h') \right], \hat{\prob}_{S}(h \neq h') \right\} \Abound(m,\delta)} 
+ \left( 2 \sqrt{C'} + 1 \right) C C'' \Abound(m,\delta)
\end{equation*}
and 
\begin{equation*}
\frac{1}{2} \EE\!\left[ \hat{\prob}_{S}(h \neq h') \right] 
- C' C'' \Abound(m,\delta) 
\leq \hat{\prob}_{S}(h \neq h')
\leq 2 \EE\!\left[ \hat{\prob}_{S}(h \neq h') \right] + C' C'' \Abound(m,\delta),
\end{equation*}
which completes the proof.
\end{proof}

\section{Pooling is Optimal if Enough Tasks are Good}
\label{app:pooling-median}

While our results in Sections~\ref{sec:impossibilityoverview} and \ref{sec:nonadaptive} 
imply that, in general, one cannot achieve optimal rates by simply 
pooling all of the data and using the global ERM $\hat{h}_{Z}$, 
in this section we find that in some special cases this naive approach 
can actually be successful: namely, cases where \emph{most} of the tasks 
have $\rho_t$ below the cut-off value $\rho_{(t^*)}$ 
chosen by the optimization in the optimal proceedure 
from Section~\ref{sec:upper}.
We in fact show a general result for pooling, 
arguing that it always achieves a rate depending on the 
(weighted) \emph{median} value of $\brho{t}$, 
or more generally any \emph{quantile} of $\brho{t}$ values.

\begin{theorem}
[Pooling Beyond $\beta=1$]
\label{thm:pooling-median}
For any $\alpha \in (0,1]$, let $t(\alpha)$ 
be the smallest value in $[N+1]$ 
such that $\sum_{t \in [t(\alpha)]} n_{(t)} \geq \alpha \sum_{t=1}^{N+1} n_t$.
Then, for any $\delta \in (0,1)$, with probability at least $1-\delta$ we have 
\begin{equation*}
\E_{\Qt}\!\left( \hat{h}_{Z} \right) 
\leq C_{\rho} \left( C \frac{\vc \log\!\left(\frac{1}{\vc} \sum_{t=1}^{N+1} n_{t} \right) + \log(1/\delta) }{\sum_{t=1}^{N+1} n_{t}} \right)^{1/(2-\beta)\brho{t(\alpha)}}
\end{equation*}
for a constant $C = (32 \Aconst^2 / \alpha)^{2-\beta} C_{\beta}$.
\end{theorem}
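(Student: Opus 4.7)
The plan is to reduce the theorem to a direct application of Lemma~\ref{lem:Ealpha-bound} combined with a truncated Jensen argument. Specifically, I would apply Lemma~\ref{lem:Ealpha-bound} with $I = [N+1]$: since $\hat{h}_Z$ is the pooled ERM, it trivially satisfies \eqref{eqn:general-constraint} for this choice of $I$, so with probability at least $1-\delta$,
\begin{equation*}
\E_{\bar{P}_I}(\hat{h}_{Z}) \leq 32 \Aconst^2 \left( C_{\beta} \Abound(\NN{N+1},\delta) \right)^{1/(2-\beta)},
\end{equation*}
where $\bar{P}_I = \NN{N+1}^{-1} \sum_{t \in [N+1]} n_t P_t$. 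The remaining task is to convert this average-risk bound into a bound on $\E_\Qt(\hat{h}_Z)$, and the key insight is that using only the $t(\alpha)$ tasks with smallest transfer exponents is both legitimate (they dominate a large fraction of the weight) and beneficial (they give the smallest effective exponent in the inversion).

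For that conversion, set $\epsilon \doteq C_{\rho}^{-1} \E_\Qt(\hat{h}_Z)$. Since excess risks lie in $[0,1]$ and $C_\rho \geq 2$, we always have $\epsilon \in [0,1/2]$, so in particular $\epsilon \in (0,1]$ (the case $\epsilon = 0$ makes the bound trivial). The definition of $\rho_t$ gives $\E_{P_t}(\hat{h}_Z) \geq \epsilon^{\rho_t}$ for every $t$, so dropping the tail terms yields
\begin{equation*}
\E_{\bar{P}_I}(\hat{h}_Z) \geq \frac{1}{\NN{N+1}} \sum_{s=1}^{t(\alpha)} n_{(s)}\, \epsilon^{\rho_{(s)}} = \frac{\NN{t(\alpha)}}{\NN{N+1}} \cdot \frac{1}{\NN{t(\alpha)}} \sum_{s=1}^{t(\alpha)} n_{(s)}\, \epsilon^{\rho_{(s)}}.
\end{equation*}
The function $x \mapsto \epsilon^x$ is convex for $\epsilon \in (0,1]$, so by Jensen's inequality the inner average is at least $\epsilon^{\brho{t(\alpha)}}$, where $\brho{t(\alpha)} = \sum_{s \leq t(\alpha)} (n_{(s)}/\NN{t(\alpha)})\, \rho_{(s)}$ is precisely the quantity appearing in the statement. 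By the defining minimality of $t(\alpha)$ we have $\NN{t(\alpha)}/\NN{N+1} \geq \alpha$, and thus
\begin{equation*}
\alpha\, \epsilon^{\brho{t(\alpha)}} \leq \E_{\bar{P}_I}(\hat{h}_Z) \leq 32 \Aconst^2 \left( C_{\beta} \Abound(\NN{N+1},\delta) \right)^{1/(2-\beta)}.
\end{equation*}

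Solving for $\epsilon$ and unpacking the $\Abound$ definition gives the claimed bound with constant $C = (32\Aconst^2/\alpha)^{2-\beta} C_\beta$. The only place where anything nontrivial is going on is the Jensen step, which genuinely relies on $\epsilon \leq 1$ (otherwise the truncation to indices $\leq t(\alpha)$ would discard terms that we actually need). Aside from that, the argument is essentially a parametrized version of the proof of Theorem~\ref{thm:beta-1-union}: the $\beta = 1$ case of the present theorem with $\alpha = 1$ recovers that result, since then $t(\alpha) = N+1$ and $\brho{N+1}$ is the full weighted average. No separate concentration work is needed, since all randomness was absorbed in the single invocation of Lemma~\ref{lem:Ealpha-bound}.
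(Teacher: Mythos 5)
Your proposal is correct and follows essentially the same argument as the paper's proof: apply Lemma~\ref{lem:Ealpha-bound} with $I=[N+1]$, lower-bound $\E_{\bar{P}_I}(\hat{h}_Z)$ by truncating the weighted sum over tasks to the $t(\alpha)$ indices with smallest $\rho_{(s)}$, apply Jensen to the convex map $x\mapsto\epsilon^x$, and invert using the fact that the retained weight fraction is at least $\alpha$. One small imprecision in your commentary: convexity of $x\mapsto\epsilon^x$ holds for every $\epsilon>0$, so the Jensen step itself does not require $\epsilon\leq 1$; that restriction only matters for the truncation to be a \emph{useful} (as opposed to merely valid) step, and since $\epsilon\leq 1/2$ always holds here the point is moot.
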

\begin{proof}
Letting $\NNI{[N+1]} \doteq \sum_{t=1}^{N+1} n_t$,
$\bar{P} \doteq \NNI{[N+1]}^{-1} \sum_{t=1}^{N+1} n_t P_t$, 
and 
$\NNI{\alpha} \doteq 
\sum_{t \in [t(\alpha)]} n_{(t)}$,
we have 
\begin{align*}
\E_{\bar{P}}(\hat{h}_{Z}) 
& \geq \NNI{[N+1]}^{-1} \sum_{t=1}^{N+1} n_{t} \left( C_{\rho}^{-1} \E_{\Qt}(\hat{h}_{Z}) \right)^{\rho_{t}}
\\ & \geq \frac{\NNI{\alpha}}{\NNI{[N+1]}} \sum_{t \in [t(\alpha)]} \frac{n_{(t)}}{\NNI{\alpha}} \left( C_{\rho}^{-1} \E_{\Qt}(\hat{h}_{Z}) \right)^{\rho_{(t)}}
\geq \alpha \left( C_{\rho}^{-1} \E_{\Qt}(\hat{h}_{Z}) \right)^{\brho{t(\alpha)}},
\end{align*}
where the third inequality 
is due to Jensen's inequality.
This implies 
\begin{equation}
\label{eqn:pooling-median-Q-bound}
\E_{\Qt}(\hat{h}_{Z}) \leq C_{\rho} \left( (1/\alpha) \E_{\bar{P}}(\hat{h}_{Z}) \right)^{1/\brho{t(\alpha)}}.
\end{equation}
Lemma~\ref{lem:Ealpha-bound} implies 
that with probability at least $1-\delta$, 
\begin{equation*}
\E_{\bar{P}}(\hat{h}_{Z}) 
\leq 32 \Aconst^2 (C_{\beta} \Abound(\NNI{[N+1]},\delta))^{1/(2-\beta)}.
\end{equation*}
Combining this with \eqref{eqn:pooling-median-Q-bound} completes the proof.
\end{proof}

For instance, if all $n_{t}$ 
are equal some common value $n$, 
and we take 
$\alpha = 1/2$, 
then we can take 
$t(\alpha) = \lceil (N+1)/2 \rceil$, 
so that the optimal rate will 
be achieved as long as at least 
half of the tasks have $\brho{t}$ 
below the value $\brho{t^*}$
for $t^*$ the minimizer of the 
bound in Theorem~\ref{thm:minimax}.

Optimizing the bound in 
Theorem~\ref{thm:pooling-median} 
over the choice of $\alpha$
yields the following result.

\begin{corollary}[General Pooling Bound]
\label{cor:general-pooling}
For any $\delta \in (0,1)$, 
with probability at least $1-\delta$ 
we have 
\begin{equation*}
\E_{\Qt}\!\left( \hat{h}_{Z} \right) 
\leq \min\limits_{t \in [N+1]}
C_{\rho} \left( C \frac{\vc \log\!\left(\frac{1}{\vc} \sum_{s=1}^{N+1} n_{s} \right) + \log(1/\delta) }{ \left( \sum_{s=1}^{t} n_{(s)} \right)^{2-\beta}\left(\sum_{s=1}^{N+1} n_{s}\right)^{-(1-\beta)}} \right)^{1/(2-\beta)\brho{t}}
\end{equation*}
for a constant $C = (32 \Aconst^2)^{2-\beta} C_{\beta}$.
\end{corollary}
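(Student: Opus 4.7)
The plan is to obtain Corollary~\ref{cor:general-pooling} as a direct consequence of Theorem~\ref{thm:pooling-median} by tuning the free parameter $\alpha$ once for each candidate index $t$. Concretely, for each $t \in [N+1]$, I would invoke Theorem~\ref{thm:pooling-median} with the specific choice
\begin{equation*}
\alpha_t \doteq \frac{\sum_{s=1}^{t} n_{(s)}}{\sum_{s=1}^{N+1} n_s}.
\end{equation*}
This choice is designed so that the quantile index defined in the theorem satisfies $t(\alpha_t) = t$: indeed, $\sum_{s=1}^{t} n_{(s)} \geq \alpha_t \sum_{s=1}^{N+1} n_s$ by construction, while $\sum_{s=1}^{t-1} n_{(s)} < \alpha_t \sum_{s=1}^{N+1} n_s$, so $t$ is the smallest admissible index.

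Next, I would substitute $\alpha_t$ into the bound of Theorem~\ref{thm:pooling-median}. The constant there becomes $(32\Aconst^2/\alpha_t)^{2-\beta} C_\beta$, and the factor $\alpha_t^{-(2-\beta)}$ merges with $1/\sum_{s=1}^{N+1} n_s$ to give
\begin{equation*}
\frac{1}{\alpha_t^{2-\beta}\sum_{s=1}^{N+1} n_s} = \frac{1}{\bigl(\sum_{s=1}^{t} n_{(s)}\bigr)^{2-\beta}\bigl(\sum_{s=1}^{N+1} n_s\bigr)^{-(1-\beta)}},
\end{equation*}
while the remaining numerical factor reduces to $C = (32\Aconst^2)^{2-\beta} C_\beta$. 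Up to the exponent $1/(2-\beta)\brho{t}$, this is precisely the $t$-th term inside the minimum in the statement of the corollary.

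Finally, I would observe that the $1-\delta$ event supporting Theorem~\ref{thm:pooling-median}, namely the event of Lemma~\ref{lem:uniform-bernstein} applied with $S = Z^{[N+1]}$, is defined independently of $\alpha$; the only place $\alpha$ enters the proof of Theorem~\ref{thm:pooling-median} is through a deterministic application of Jensen's inequality. Consequently, on this single event, the bound holds simultaneously for all choices $\alpha = \alpha_t$, $t \in [N+1]$, so we may take the minimum over $t$ on the right-hand side without any union bound or loss in probability. The only real care needed is the bookkeeping to verify $t(\alpha_t) = t$ and the exponent arithmetic above; no new probabilistic argument is required beyond what already appears in the proofs of Theorem~\ref{thm:pooling-median} and Lemma~\ref{lem:Ealpha-bound}.
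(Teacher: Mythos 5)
Your proof is correct and follows exactly the route the paper intends: the paper's justification for Corollary~\ref{cor:general-pooling} is precisely the one-line remark that it follows by optimizing Theorem~\ref{thm:pooling-median} over $\alpha$, and you have carried that optimization out carefully with $\alpha_t = \sum_{s\le t} n_{(s)} / \sum_{s} n_{s}$, correctly observing that no union bound is needed since the $1-\delta$ event from Lemma~\ref{lem:uniform-bernstein} is $\alpha$-independent. The one tiny caveat is that your verification $t(\alpha_t)=t$ implicitly requires $n_{(t)}>0$; if $n_{(t)}=0$ (possible only for the single index where the potentially-empty target sample $n_{N+1}=0$ falls in the ordering), then $t(\alpha_t)=t-1$ instead, but since $\brho{t}=\brho{t-1}$ and $\sum_{s\le t}n_{(s)}=\sum_{s\le t-1}n_{(s)}$ in that degenerate case, the $t$-th and $(t-1)$-th terms of the minimum coincide and the stated bound is unaffected.
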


\begin{remark}
In particular, note that 
this result recovers the 
bound of Theorem~\ref{thm:beta-1-union}
in the special case of $\beta=1$.
\end{remark}

\section{Different Optimal Aggregations of Tasks in Multitask}\label{app:asymmetry}
We present a proof of Theorem \ref{theo:asymmetry} in this section.
Recall that this result states that different choices of target in the same multitask setting can induce different optimal aggregation of tasks. As a consequence, the naive approach of pooling of all tasks 
can adversely affect target risk even when all $\hstar$'s are the same.  

We employ a similar construction to that of Section \ref{sec:nonadaptive}. 

\paragraph{Setup.} We again build \sk{on distributions} supported on 2 datapoints $x_0, x_1$.  W.l.o.g., assume that $x_0$ has label 1. Let $n_p, n_q \geq 1$, ${0\leq \beta < 1}$, and define  
$\epsilon \doteq \np^{-1/(2-\beta)}$. Let $\sigma\in \braces{\pm 1}$ -- which we will often abbreviate as $\pm$. In all that follows, we let $\eta_{\mu}(X)$ denote the regression function $\prob_{\mu}[Y = 1\mid  X]$ under distribution $\mu$.
\sk{
\begin{itemize} 
\item {\bf Target $\Qt_\sigma = \Qt_X \times \Qt^\sigma_{Y|X}$:} 
Let $\Qt_X(x_1) = {1}/{2}$, $\Qt_X(x_0) = {1}/{2}$; finally $\Qt^{\sigma}_{Y|X}$ is determined by \\
$\eta_{\Qt, \sigma}(x_1) = 1/2 + \sigma\cdot ({1}/{4})$, and $\eta_{\Qt, \sigma}(x_0) = 1$.
\item {\bf Source $P_\sigma = P_X \times P^\sigma_{Y | X}$:} Let $P_X(x_1) = \epsilon^\beta $, $P_X(x_0) = 1- \epsilon^\beta$; finally $P^\sigma_{Y | X}$ is determined by \\
$\eta_{P, \sigma}(x_1) = 1/2 + \sigma\cdot {c_2} \epsilon^{1-\beta}$, and $\eta_{P, \sigma}(x_0) = 1$, 
{for an appropriate constant $c_1$ specified in the proof.}
\end{itemize}
}

\begin{proof}[{Proof of Theorem \ref{theo:asymmetry}}] 
Let $P \doteq P_{-}$ and $\Qt \doteq \Qt_{-}$. The above construction then ensures that any $h$ s.t. $h(x_1) = +1$ has excess error $\E_{\Qt}(h) = 1/4$. Thus we just have to show that number of $+1$ labels at $x_1$ exceeds the number of $-1$ labels at $x_1$ with non-zero probability (so that both $\hat h_Z, \hat h_{Z_P}$ would pick $+1$ at $x_1$). Let ${\hat n}_{P, 1}$ denote the number of samples from $P$ at $x_1$, and ${\hat n}_{P, 1}^+$ denote those samples from $P$ having label $+1$. Notice that if ${\hat n}_{P, 1}^+ > \frac{1}{2}\paren{{\hat n}_{P, 1} + n_\Qt}$, then $+1$ necessarily dominates $-1$ at $x_1$.

Now, conditioned on ${\hat n}_{P, 1}$, 
${\hat n}_{P, 1}^+$ is distributed as $\text{Binomial}({\hat n}_{P, 1}, p)$ with $p = \eta_{P,-}(x_1)$. Applying Lemma \ref{lem:Slud's}, we have 
\begin{align*} 
\prob\paren{{\hat n}_{P, 1}^+ > n_{P,1} \cdot p + \sqrt{{\hat n}_{P, 1}\cdot p(1-p)}} \geq \frac{1}{12}. 
\end{align*}
In other words, under the above binomial event, $+1$ dominates whenever (the second inequality below holds)
\begin{align*} 
 \sqrt{{\hat n}_{P, 1}\cdot p(1-p)} \geq \frac{1}{4}\sqrt{{\hat n}_{P, 1}} \geq {\hat n}_{P, 1}\paren{\frac{1}{2} - p} + \frac{1}{2}n_\Qt,
\end{align*}
in other words, if we have both ${{\hat n}_{P, 1}}\cdot c_1^2 \cdot \epsilon^{2(1-\beta)}\leq \frac{1}{64}$ and $n_\Qt^2 \leq \frac{1}{4}{{\hat n}_{P, 1}}$. Let $\bE_P\doteq \braces{ \frac{1}{2}\expec \brackets{{\hat n}_{P, 1}} \leq {\hat n}_{P, 1} \leq 2 \expec \brackets{{\hat n}_{P, 1}}  }$, where $\expec \brackets{{\hat n}_{P, 1}} = n_P\cdot P_X(x_1) = n_P\cdot \epsilon^\beta$. Under this event, we just need that $2c_1^2 \cdot n_P \cdot \epsilon^{(2-\beta)} = 2c_1^2 \leq \frac{1}{64}$, and 
that {$n_\Qt^2 \leq \frac{1}{8}n_P\cdot \epsilon^\beta = \frac{1}{8} n_P^{(2- 2\beta)/(2-\beta)}$, requiring $\beta < 1$}. Hence, integrating over ${\hat n}_{P, 1}$, we have that 
\begin{align*} 
\prob \paren{{\hat n}_{P, 1}^+ > \frac{1}{2}\paren{{\hat n}_{P, 1} + n_\Qt}} \geq \frac{1}{12} \prob\paren{\bE_P} > 0,
\end{align*}
where $\prob\paren{\bE_P}$ is bounded below by a multiplicative Chernoff whenever 
{$\expec \brackets{{\hat n}_{P, 1}}= n_P^{(2- 2\beta)/(2-\beta)} \geq 1$}. 
\end{proof}

\section{Supporting Results for Section \ref{sec:nonadaptive} on Impossibility of Adaptivity} \label{app:nonadapt} 

Recall that from our construction, we set 
$$\Gamma_\sigma = \paren{\alpha_P P_\sigma^n + \alpha_Q \Q_\sigma^n}^N \times \Qt_\sigma^{n_\Qt}.$$

\begin{proof}[Proof of Proposition \ref{prop:likelihoodRatio}]
Let $\Gamma_{\sigma, \alpha} \doteq {\alpha_P P_\sigma^n + \alpha_Q \Q_\sigma^n}$. Clearly 
    $$\frac{\Gamma_{+}(Z)}{\Gamma_{-}(Z)} = \frac{\Qt_{+}(Z_{N+1})}{\Qt_{-}(Z_{N+1})}\cdot \prod_{t=1}^N\frac{\Gamma_{+, \alpha}(Z_t)}{\Gamma_{-, \alpha}(Z_t)}.$$

Define ${\hat n}_{t, 0}$, ${\hat n}_{t, +}$, and ${\hat n}_{t, -}$ as the number of points ($X_{t, i}, i \in [n]$) in $Z_t$ which, respectively fall on $x_0$, or fall on $x_1$ with $Y_{t, i} = +1$, or fall on $x_1$ with $Y_{t, i} = -1$. 
Recall that ${\hat n}_\sigma = \sum_{t\in [N]} {\hat n}_{t, \sigma}$ for $\sigma = \pm$.
Next, define ${\cal Z}_\sigma \doteq \braces{Z_t, t \in [N]: {\hat n}_{t, \sigma} = n}$, i.e., the set of $\sigma$-homogeneous vectors. 
We have: 
\begin{align*} 
\Gamma_{+, \alpha}(Z_t) &= \alpha_P\cdot \paren{\eta_{P, -}^{{\hat n}_{t, -}}(x_1)\cdot \eta_{P, +}^{{\hat n}_{t, +}}(x_1)\cdot P_X^{\paren{{\hat n}_{t, +} + {\hat n}_{t, -}}}(x_1)\cdot P_X^{{\hat n}_{t, 0}}(x_0)} + \alpha_\Q\cdot \ind{Z_t \in {\cal Z}_{+}}, \\
\Gamma_{-, \alpha}(Z_t) &= \alpha_P\cdot \paren{\eta_{P, -}^{{\hat n}_{t, +}}(x_1)\cdot \eta_{P, +}^{{\hat n}_{t, -}}(x_1)\cdot P_X^{\paren{{\hat n}_{t, +} + {\hat n}_{t, -}}}(x_1)\cdot P_X^{{\hat n}_{t, 0}}(x_0)} + \alpha_\Q\cdot \ind{Z_t \in {\cal Z}_{-}}. 
\end{align*} 
We can then consider homogeneous and non-homogeneous vectors separately. Let ${\cal Z}_\pm \doteq {\cal Z}_{+}\cup {\cal Z}_{-}$,
\begin{align} 
\prod_{Z_t \in {\cal Z}_{\pm}}\frac{\Gamma_{+, \alpha}(Z_t)}{\Gamma_{-, \alpha}(Z_t)} 
&= \frac{\alpha_P^{{\hat N}_{-}}\paren{\eta_{P, -}(x_1)\cdot P_X(x_1)}^{n\cdot {\hat N}_{-}}}{\paren{\alpha_P\paren{\eta_{P, +}(x_1)\cdot P_X(x_1)}^n + \alpha_\Q}^{{\hat N}_{-}}}\cdot 
\frac{\paren{\alpha_P\paren{\eta_{P, +}(x_1)\cdot P_X(x_1)}^n + \alpha_\Q}^{{\hat N}_{+}}}{\alpha_P^{{\hat N}_{+}}\paren{\eta_{P, -}(x_1)\cdot P_X(x_1)}^{n\cdot {\hat N}_{+}}} \nonumber \\
&= \alpha_P^{\paren{{\hat N}_{-} - {\hat N}_{+}}}\cdot\paren{\eta_{P, -}(x_1)\cdot P_X(x_1)}^{n\paren{{\hat N}_{-} - {\hat N}_{+}}}\cdot
\paren{\alpha_P\paren{\eta_{P, +}(x_1)\cdot P_X(x_1)}^n + \alpha_\Q}^{{\hat N}_{+} - {\hat N}_{-}}
\label{eq:homogeneous}
\end{align} 
where we broke up the product over $\cal{Z}_{-}$ and $\cal{Z}_{+}$. Now, canceling out similar terms in the fraction, 
\begin{align}
\prod_{Z_t \notin {\cal Z}_{\pm}}\frac{\Gamma_{+, \alpha}(Z_t)}{\Gamma_{-, \alpha}(Z_t)} 
&= \paren{\frac{\eta_{P, -}(x_1)}{\eta_{P, +}(x_1)}}^{\paren{{\hat n}_{-} - n\cdot {\hat N}_{-}}}\cdot 
\paren{\frac{\eta_{P, +}(x_1)}{\eta_{P, -}(x_1)}}^{\paren{{\hat n}_{+} - n\cdot {\hat N}_{+}}} \nonumber \\
&=\paren{\frac{\eta_{P, +}(x_1)}{\eta_{P, -}(x_1)}}^{\paren{{\hat n}_{+} - {\hat n}_{-}}}\cdot
\paren{\frac{\eta_{P, -}(x_1)}{\eta_{P, +}(x_1)}}^{n \paren{{\hat N}_{+} - {\hat N}_{-} }} \label{eq:non-homogeneous}
\end{align}
Now, the second factor in \eqref{eq:non-homogeneous} can be expanded as follows to cancel out the second factor in \eqref{eq:homogeneous}:  
\begin{align*} 
\paren{\frac{\eta_{P, -}(x_1)}{\eta_{P, +}(x_1)}}^{n \paren{{\hat N}_{+} - {\hat N}_{-} }}
= \paren{\frac{\eta_{P, -}(x_1)\cdot P_X(x_1)}{\eta_{P, +}(x_1)\cdot P_X(x_1)}}^{n\paren{{\hat N}_{+} - {\hat N}_{-} }}. 
\end{align*}
That is, we have: 
\begin{align*} 
\paren{\frac{\eta_{P, -}(x_1)}{\eta_{P, +}(x_1)}}^{n \paren{{\hat N}_{+} - {\hat N}_{-} }}\cdot 
\prod_{Z_t \in {\cal Z}_{\pm}}\frac{\Gamma_{+, \alpha}(Z_t)}{\Gamma_{-, \alpha}(Z_t)} 
= \paren{\frac{\alpha_P\paren{\eta_{P, +}(x_1)\cdot P_X(x_1)}^n + \alpha_\Q}{\alpha_P\paren{\eta_{P, +}(x_1)\cdot P_X(x_1)}^n}}^{\paren{{\hat N}_{+} - {\hat N}_{-}}}. 
\end{align*} 
In other words, we have: 
\begin{align*}
\prod_{t=1}^N\frac{\Gamma_{+, \alpha}(Z_t)}{\Gamma_{-, \alpha}(Z_t)}  
= \paren{\frac{\eta_{P, +}(x_1)}{\eta_{P, -}(x_1)}}^{\paren{{\hat n}_{+} - {\hat n}_{-}}}\cdot \paren{\frac{\alpha_P\paren{\eta_{P, +}(x_1)\cdot P_X(x_1)}^n + \alpha_\Q}{\alpha_P\paren{\eta_{P, +}(x_1)\cdot P_X(x_1)}^n}}^{\paren{{\hat N}_{+} - {\hat N}_{-}}}.
\end{align*}
We then conclude by noticing that each of the above fractions is greater than $1$.  
\end{proof}

\begin{proof}[Proof of Proposition \ref{prop:targetlikelihood}] 
For $t = N+1$, define ${\hat n}_{t, 0}$, ${\hat n}_{t, +}$, and ${\hat n}_{t, -}$ as the number of points ($X_{t, i}, i \in [n]$) in $Z_t$ which, respectively fall on $x_0$, or fall on $x_1$ with $Y_{t, i} = +1$, or fall on $x_1$ with $Y_{t, i} = -1$. Thus, for $t = N+1$ fixed, we have 
\begin{align} 
\frac{\Qt_{+}(Z_{N+1})}{\Qt_{-}(Z_{N+1})} = 
\frac{\Qt_X^{{\hat n}_{t, 0}}(x_0)\cdot\Qt_X^{({\hat n}_{t, +} + {\hat n}_{t, -})}(x_1)\cdot \eta_{\Qt, +}^{{\hat n}_{t, +}}(x_1) \cdot \eta_{\Qt, -}^{{\hat n}_{t, -}}(x_1) }
{\Qt_X^{{\hat n}_{t, 0}}(x_0)\cdot\Qt_X^{({\hat n}_{t, +} + {\hat n}_{t, -})}(x_1)\cdot \eta_{\Qt, +}^{{\hat n}_{t, -}}(x_1) \cdot \eta_{\Qt, -}^{{\hat n}_{t, +}}(x_1)}
= \paren{\frac{\eta_{\Qt, +}(x_1)}{\eta_{\Qt, -}(x_1)}}^{({\hat n}_{t, +} - {\hat n}_{t, -})}.\label{eq:targetratio}
\end{align}
Now \eqref{eq:targetratio} $\geq 1$ whenever ${\hat n}_{t, +} \geq {\hat n}_{t, -}$, so we proceed to bounding the probability of this event under $\Qt_{-}$. In particular, when $n = 1$, the event has probability at least 
$\prob({\hat n}_{t, 0} = 1) = \Qt_X(x_0) = 1- \frac{1}{2}= \frac{1}{2}$. Assume henceforth that $n > 1$, and let ${\hat n}_{t} \doteq {\hat n}_{t, +} + {\hat n}_{t, -}$. Conditioned on ${\hat n}_{t}$, ${\hat n}_{t, +}$ is distributed as $\text{Binomial}({\hat n}_{t}, p)$, with 
$p = \eta_{\Qt, -}(x_1)$. By the anticoncentration Lemma \ref{lem:Slud's}, we then have 

\begin{align} 
\prob\paren{{\hat n}_{t, +} > {\hat n}_{t} \cdot p + \sqrt{{\hat n}_{t}\cdot p(1-p)}} \geq \frac{1}{12}, \label{eq:anticoncTarget}
\end{align}

so the event ${\hat n}_{t, +} \geq {\hat n}_{t, -}$ holds whenever 
\begin{align} 
\sqrt{{\hat n}_{t}\cdot p(1-p)} \geq \frac{1}{2} {\hat n}_{t} - {\hat n}_{t}\cdot p &= {\hat n}_{t}\paren{\frac{1}{2} - p}
= {\hat n}_{t} \cdot c_0\cdot \epsilon_0^{1-\beta}, \text{in other words, whenever}\nonumber \\
{\hat n}_{t} \cdot c_0^2\cdot \epsilon_0^{2(1-\beta)} &\leq \frac{1}{8}, \text{ (assuming }{c_0 \leq 1/4}). \label{eq:anticonc-Target-Cond}
\end{align}

Now, consider the event 
$\bE_{\Qt}\doteq \braces{{\hat n}_{t} \leq 2\expec [{\hat n}_{t}]}$, where $\expec [{\hat n}_{t}] = n_\Qt\cdot \Qt_{X}(x_1) = \frac{1}{2} n_\Qt\cdot \epsilon_0^{\beta}$.
Under $\bE_{\Qt}$, \eqref{eq:anticonc-Target-Cond}
is satisfied whenever 
{$c_0^2   \leq \frac{1}{8}$}, recalling $\epsilon_0 \doteq n_\Qt^{-1/(2-\beta)}$. In other words, under this condition, we have 
\begin{align*} 
\prob\paren{{\hat n}_{t, +} \geq {\hat n}_{t, -}} \geq \expec\brackets{\prob\paren{{\hat n}_{t, +} \geq {\hat n}_{t, -} \mid {\hat n}_{t}}\ind{\bE_\Qt}}
\geq \frac{1}{12}\prob(\bE_\Qt). 
\end{align*}

Finally, by multiplicative Chernoff, the event $\bE_{\Qt}$ holds with probability at least {$1- \exp(-1/6) > 1/7$}. 
\end{proof}

\section{Auxiliary Lemmas}
\label{app:aux-lemmas}

The following propositions are taken verbatim from \citep*{hanneke2019value}.

\begin{proposition} [Thm 2.5 of \cite{tsybakov2009introduction}] \label{prop:tsy25} Let $\{ \Pi_{h} \}_{h \in \Hyp}$ be a family of distributions indexed over a subset $\Hyp$ of a semi-metric $( \mathcal{F}, \semiMetric )$. Suppose $\exists \, h_0, \ldots, h_{M} \in \Hyp$, where $M \geq 2$, such that:
\begin{flalign*} 
\qquad {\rm (i)} \quad  &\semiDist{h_{i}}{h_{j}} \geq 2 s > 0, \quad \forall 0 \leq i < j \leq M,  & \\
\qquad {\rm (ii)} \quad  & \Pi_{h_i} \ll \Pi_{h_0} \quad \forall i \in  [M], \text{ and the average  KL-divergence to } \Pi_{h_0} \text{ satisfies } & \\
& \qquad 
\frac{1}{M} \sum_{i = 1}^{M} \KLDiv{\Pi_{h_i}}{ \Pi_{h_0}} \leq \alpha \log M, \text{ where } 0 < \alpha < 1/8.
\end{flalign*}
Let $Z\sim\Pi_{h}$, and let $\hat h : Z \mapsto \mathcal{F}$ denote any \emph{improper} learner of $h\in \Hyp$. We have for any $\hat h$: 
\begin{equation*}
\sup_{h \in \Hyp} \Pi_{h} \left( \semiDist{\hat h(Z)}{h} \geq s \right) \geq \frac{\sqrt{M}}{1 + \sqrt{M}} \left( 1 - 2 \alpha - \sqrt{\frac{2 \alpha}{\log(M)}} \right) \geq \frac{3 - 2 \sqrt{2}}{8}.
\end{equation*}
\end{proposition}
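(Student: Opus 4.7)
The plan is the classical reduction from minimax estimation in a semi-metric space to $(M+1)$-ary hypothesis testing, combined with a Fano-type inequality tailored to an average KL bound against a single reference measure. First I would reduce estimation to testing. Given any learner $\hat h$, define the test $\psi(Z) \doteq \argmin_{i \in \{0,\ldots,M\}} \semiDist{\hat h(Z)}{h_i}$. Because condition~(i) forces $\semiDist{h_i}{h_j} \geq 2s$ for $i \neq j$, the open balls of radius $s$ around the $h_i$'s are pairwise disjoint, so $\semiDist{\hat h(Z)}{h_i} < s$ forces $\psi(Z) = i$. Contrapositively, $\{\psi \neq i\} \subseteq \{\semiDist{\hat h(Z)}{h_i} \geq s\}$, so it suffices to lower bound $\sup_i \Pi_{h_i}(\psi \neq i)$ uniformly over tests.

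Next I would invoke the key auxiliary inequality (Lemma~2.10 of Tsybakov): for any test $\psi$ and any $\tau > 0$,
\begin{equation*}
\sup_{0 \leq i \leq M} \Pi_{h_i}(\psi \neq i) \geq \frac{\tau M}{1+\tau M}\left(1 - \frac{\bar K + \log(1+\tau)}{\log M}\right),
\end{equation*}
where $\bar K = \frac{1}{M}\sum_{i=1}^{M} \KLDiv{\Pi_{h_i}}{\Pi_{h_0}}$. I would derive this by splitting $\{\psi = i\}$ along the event $\{L_i \geq \tau\}$ for the likelihood ratio $L_i = d\Pi_{h_i}/d\Pi_{h_0}$: on $\{L_i \geq \tau\}$ a change of measure yields $\Pi_{h_0}(\psi = i,\, L_i \geq \tau) \leq \tau^{-1}\Pi_{h_i}(\psi = i)$, while on the complement a Markov bound on $\log L_i$ combined with $\bar K \leq \alpha \log M$ controls the residual mass. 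A Pinsker-type refinement at this step replaces the logarithmic correction $\log(1+\tau)/\log M$ by the square-root correction $\sqrt{2\alpha/\log M}$ that appears in the proposition.

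Setting $\tau = 1/\sqrt{M}$ produces the prefactor $\sqrt{M}/(1+\sqrt{M})$; substituting $\bar K \leq \alpha \log M$ and combining with the refinement yields
\begin{equation*}
\sup_i \Pi_{h_i}(\psi \neq i) \geq \frac{\sqrt{M}}{1+\sqrt{M}}\left(1 - 2\alpha - \sqrt{\frac{2\alpha}{\log M}}\right),
\end{equation*}
which is the first inequality. For the absolute constant $(3-2\sqrt{2})/8$, I would check that the right-hand side is monotone nondecreasing in $M \geq 2$ (after bounding the correction term) and monotone nonincreasing in $\alpha$, so the worst case is $M=2$ with $\alpha \uparrow 1/8$; direct evaluation then gives the stated constant.

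The main obstacle in this plan is the Fano-type lemma above: obtaining the exact prefactor $\sqrt{M}/(1+\sqrt{M})$ together with the correct constants in the Pinsker-style correction requires careful arithmetic through the likelihood-ratio split, and is the essential bookwork from Tsybakov's text. Everything else---the reduction from estimation to testing, the substitution of the hypothesis on $\bar K$, and the final absolute-constant verification---is routine once that lemma is in hand.
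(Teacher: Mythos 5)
The paper offers no proof of this proposition: it is quoted (via \cite{hanneke2019value}) as Theorem 2.5 of Tsybakov's monograph and used as a black box, so there is no in-paper argument to compare against. Your sketch does mirror the structure of Tsybakov's own proof: reduce estimation to $(M{+}1)$-ary testing via the minimum-distance test (the $2s$-separation making the balls around the $h_i$ pairwise disjoint), then lower bound the minimax testing error by a change-of-measure split on the likelihood ratio, and finally control the bad event through a Markov-type bound on $\log(dP_j/dP_0)$ driven by the average KL divergence.

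Two details in the reconstruction are off, though. First, the intermediate inequality you attribute to Lemma~2.10, after substituting $\bar K\le\alpha\log M$ and $\tau=1/\sqrt M$, yields $1-\alpha-\log(1+M^{-1/2})/\log M$, not $1-2\alpha-\sqrt{2\alpha/\log M}$. The factor $2\alpha$ actually arises because the Markov step divides $\bar K$ by $\log(1/\tau)=\tfrac12\log M$, and the $\sqrt{2\alpha/\log M}$ term comes from a moment bound of the form $\mathbb{E}_{P_j}\!\left[\bigl(\log (dP_j/dP_0)\bigr)_+\right]\le K_j + O(\sqrt{K_j})$, not from a $\log(1+\tau)$ residual; so your stated lemma does not reduce to the target by a simple ``Pinsker-type replacement'' of one correction term by another. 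Second, evaluating the first bound at the extreme $M=2$, $\alpha\uparrow1/8$ gives roughly $0.087$, strictly larger than $(3-2\sqrt 2)/8\approx0.021$, so the second inequality is a deliberately slack absolute constant rather than the exact worst-case value produced by ``direct evaluation.'' Neither issue changes the overall plan, which is the right one, but the arithmetic in the key testing lemma should be re-derived against Tsybakov's text before the sketch is filled in.
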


\begin{proposition} [Varshamov-Gilbert bound] \label{lem:VGBound}
Let $d \geq 8$. Then there exists a subset $\{ \sigma_0, \ldots, \sigma_{M}\}$ of $\{-1 ,1 \}^{d}$ such that $\sigma_0 = (1,\ldots,1)$,
\begin{equation*}
{\rm dist}(\sigma_{i},\sigma_{j}) \geq \frac{d}{8}, \quad \forall\,  0 \leq i < j \leq M, \quad \text{and} \quad M \geq 2^{d / 8},
\end{equation*}
where ${\rm dist}(\sigma,\sigma') \doteq {\rm card}(\{ i \in [m] :  \sigma(i) \neq \sigma'(i) \})$ is the Hamming distance.
\end{proposition}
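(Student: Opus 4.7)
The plan is to prove the Varshamov--Gilbert bound via a standard greedy sphere-packing argument in the Hamming cube $\{-1,1\}^{d}$. The intuition is that the Hamming ball of radius $d/8$ has size bounded by $2^{d H(1/8)}$, where $H(p) = -p\log_{2}p - (1-p)\log_{2}(1-p)$ is the binary entropy; a direct computation gives $H(1/8) = \tfrac{3}{8} - \tfrac{7}{8}\log_{2}(7/8) \approx 0.544 < 7/8$, so these balls are small enough that a greedy selection will yield at least $2^{d/8}$ codewords with pairwise Hamming distance $\geq d/8$.

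Concretely, I would initialize the code with $\sigma_{0} = (1, \ldots, 1)$ and then greedily extend it: given a partial code $\{\sigma_{0}, \ldots, \sigma_{k}\}$ whose pairwise distances are all $\geq d/8$, pick any $\sigma_{k+1} \in \{-1,1\}^{d}$ lying outside the union of the open Hamming balls of radius $d/8$ around each $\sigma_{i}$. The size of each such ball is at most $V(d, \lfloor d/8 \rfloor) := \sum_{i=0}^{\lfloor d/8 \rfloor} \binom{d}{i}$, and by the standard entropy upper bound on partial binomial sums (valid since $1/8 \leq 1/2$) this is $\leq 2^{d H(1/8)} \leq 2^{7d/8}$.

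Hence the forbidden region around $k+1$ previously chosen codewords has size at most $(k+1)\,2^{7d/8}$, which is strictly less than $2^{d}$ as long as $k+1 < 2^{d/8}$. Thus the greedy step can always be carried out whenever the code has fewer than $2^{d/8}$ elements, producing a final code of size $M+1 \geq 2^{d/8}$ with the required distance property. To match the statement exactly, one reindexes the constructed list as $\{\sigma_{0}, \sigma_{1}, \ldots, \sigma_{M}\}$.

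There is no genuine obstacle here; the only technical points are verifying the entropy upper bound with the correct constants, and handling the floor $\lfloor d/8 \rfloor$ versus $d/8$ discreteness issue. Both are absorbed comfortably since $1 - H(1/8) \approx 0.456 \gg 1/8$ provides substantial slack for every $d \geq 8$. An alternative route would be Gilbert's original probabilistic method, drawing $M+1$ uniform i.i.d.\ codewords and bounding the expected number of offending pairs by a Chernoff tail estimate for $\mathrm{Binomial}(d, 1/2)$; this yields essentially the same bound but is slightly less direct than the greedy packing argument above.
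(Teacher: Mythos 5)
The paper itself gives no proof of this proposition: Appendix~\ref{app:aux-lemmas} states it verbatim from \cite{hanneke2019value}, so there is no in-paper argument to compare against. Your greedy sphere-packing argument is the standard proof of the Varshamov--Gilbert bound and the approach is sound. One arithmetic point to flag, though: with the crude estimate $V(d,\lfloor d/8\rfloor)\le 2^{7d/8}$, the termination condition you write down only guarantees a code of size $M+1\ge 2^{d/8}$, i.e.\ $M\ge 2^{d/8}-1$, which is one codeword short of the stated $M\ge 2^{d/8}$; the ``reindexing'' remark at the end does not repair this, since the index $M$ in the proposition already counts from $0$. The fix is exactly the slack you already identify: carrying the sharper entropy bound $V(d,\lfloor d/8\rfloor)\le 2^{dH(1/8)}$, $H(1/8)\approx 0.544$, through the termination condition shows the greedy procedure produces at least $\lceil 2^{(1-H(1/8))d}\rceil \approx \lceil 2^{0.456d}\rceil$ codewords, which exceeds $2^{d/8}+1$ for every $d\ge 8$ (e.g.\ $13\ge 3$ at $d=8$, and the gap widens with $d$ since $0.456 > 1/8$). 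In a final write-up you should therefore keep the tighter entropy constant in the termination inequality rather than rounding $H(1/8)$ up to $7/8$; with that adjustment the proof is correct.
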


\begin{lemma} [A basic KL upper-bound]
\label{lem:klbound} 
For any $0<p, q<1$, we let $\KLDiv{p}{q}$ 
denote $\KLDiv{{\rm Ber}(p)}{{\rm Ber}(q)}$. 
Now let $0<\epsilon<1/2$ and let $z\in \{ -1, 1\}$. We have 
$$\KLDiv{1/2 + (z/2)\cdot \epsilon\, }{\, 1/2 - (z/2)\cdot \epsilon} 
\leq c_0\cdot \epsilon^2, \text{ for some } c_0 \text{ independent of } \epsilon.$$
\end{lemma}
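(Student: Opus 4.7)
The plan is to reduce the computation to a clean one-variable expression by exploiting the built-in symmetry of the two Bernoulli parameters. Observe that, regardless of the sign of $z$, if we set $p = 1/2 + (z/2)\epsilon$ and $q = 1/2 - (z/2)\epsilon$, then $p+q=1$, so $1-p=q$ and $1-q=p$. In particular, the KL expression collapses:
\begin{equation*}
\KLDiv{p}{q} = p\log(p/q) + (1-p)\log((1-p)/(1-q)) = p\log(p/q) + q\log(q/p) = (p-q)\log(p/q).
\end{equation*}
Also note that $\KLDiv{p}{q}$ is invariant under replacing $(p,q)$ by $(q,p)$ in this symmetric setting (since the roles of $p$ and $1-p=q$ are exchangeable), so it suffices to handle the case $z=1$.

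With $z=1$, we have $p-q=\epsilon$ and $p/q = (1+\epsilon)/(1-\epsilon)$, hence
\begin{equation*}
\KLDiv{1/2+\epsilon/2}{1/2-\epsilon/2} = \epsilon \cdot \log\!\left(\frac{1+\epsilon}{1-\epsilon}\right).
\end{equation*}

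The remaining step is to bound the logarithm linearly in $\epsilon$ on the range $\epsilon \in (0,1/2)$. Using the standard inequalities $\log(1+x)\le x$ and $-\log(1-x) = \log(1/(1-x)) \le x/(1-x)$, we get
\begin{equation*}
\log\!\left(\frac{1+\epsilon}{1-\epsilon}\right) = \log(1+\epsilon) + \log\!\left(\frac{1}{1-\epsilon}\right) \le \epsilon + \frac{\epsilon}{1-\epsilon} \le 3\epsilon,
\end{equation*}
where the last inequality uses $1-\epsilon \ge 1/2$. Plugging back yields $\KLDiv{p}{q} \le 3\epsilon^2$, so the lemma holds with $c_0 = 3$ (any constant at least $3$ works). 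There is no real obstacle: the only nontrivial move is noticing the $p+q=1$ symmetry that collapses the KL into a single-term product; after that, the bound is a one-line Taylor-type estimate.
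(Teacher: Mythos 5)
Your proof is correct, and it is a clean, elementary argument. Since this lemma is stated in the paper's Appendix~D as an auxiliary result imported verbatim from prior work (with no proof given in this paper), there is no internal proof to compare against; your derivation stands on its own. The observation that $p+q=1$ collapses the two KL terms into the single product $(p-q)\log(p/q)$ is exactly the right simplification, and from there the bound $\log\!\left(\frac{1+\epsilon}{1-\epsilon}\right)\le \epsilon + \frac{\epsilon}{1-\epsilon} \le 3\epsilon$ on $\epsilon \in (0,1/2)$ is valid (the inequality $-\log(1-x)\le x/(1-x)$ follows since $g(x)=\frac{x}{1-x}+\log(1-x)$ has $g(0)=0$ and $g'(x)=\frac{x}{(1-x)^2}\ge 0$). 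This gives $c_0=3$. For reference, a commonly seen alternative avoids the symmetry trick and applies $\log x \le x-1$ to both KL terms directly, yielding $\KLDiv{p}{q}\le \frac{(p-q)^2}{q(1-q)}$, which here gives $\KLDiv{p}{q}\le \frac{\epsilon^2}{(1-\epsilon^2)/4}\le \frac{16}{3}\epsilon^2$; your route is shorter and produces a slightly better constant.
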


\end{document}